\newtheorem{theorem}{Theorem}
\newtheorem{lemma}{Lemma}
\newtheorem{proposition}{Proposition}
\newtheorem{assumption}{Assumption}
\newtheorem{corollary}{Corollary}
\renewcommand{\theassumption}{(A\arabic{assumption})}
\renewenvironment{theorem}[1][]{
  \refstepcounter{theorem}
  \noindent\textbf{Theorem \thetheorem\ifx#1\empty\else\ (#1)\fi:} \normalfont
}{\par}
\renewenvironment{lemma}[1][]{
  \refstepcounter{lemma}
  \noindent\textbf{Lemma \thelemma\ifx#1\empty\else\ (#1)\fi:} \normalfont
}{\par}
\renewenvironment{proposition}[1][]{
  \refstepcounter{proposition}
  \noindent\textbf{Proposition \theproposition\ifx#1\empty\else\ (#1)\fi:} \normalfont
}{\par}
\DeclareRobustCommand{\mb}[1]{\boldsymbol{#1}}
\DeclareRobustCommand{\KL}[2]{\ensuremath{\textrm{D}_{\textrm{KL}}\left(#1\;\|\;#2\right)}}
\DeclareMathOperator*{\argmax}{arg\,max}
\DeclareMathOperator*{\argmin}{arg\,min}
\renewcommand{\mid}{~\vert~}
\newcommand{\mbb}{\mb{b}}
\newcommand{\mbc}{\mb{c}}
\newcommand{\mbd}{\mb{d}}
\newcommand{\mbf}{\mb{f}}
\newcommand{\mbg}{\mb{g}}
\newcommand{\mbh}{\mb{h}}
\newcommand{\mbk}{\mb{k}}
\newcommand{\mbm}{\mb{m}}
\newcommand{\mbr}{\mb{r}}
\newcommand{\mbu}{\mb{u}}
\newcommand{\mbv}{\mb{v}}
\newcommand{\mbw}{\mb{w}}
\newcommand{\mbx}{\mb{x}}
\newcommand{\mby}{\mb{y}}
\newcommand{\mbz}{\mb{z}}
\newcommand{\mbA}{\mb{A}}
\newcommand{\mbB}{\mb{B}}
\newcommand{\mbC}{\mb{C}}
\newcommand{\mbG}{\mb{G}}
\newcommand{\mbI}{\mb{I}}
\newcommand{\mbJ}{\mb{J}}
\newcommand{\mbK}{\mb{K}}
\newcommand{\mbL}{\mb{L}}
\newcommand{\mbM}{\mb{M}}
\newcommand{\mbR}{\mb{R}}
\newcommand{\mbS}{\mb{S}}
\newcommand{\mbT}{\mb{T}}
\newcommand{\mbV}{\mb{V}}
\newcommand{\mbeta}{\mb{\eta}}
\newcommand{\mblambda}{\mb{\lambda}}
\newcommand{\mbmu}{\mb{\mu}}
\newcommand{\mbnu}{\mb{\nu}}
\newcommand{\mbtau}{\mb{\tau}}
\newcommand{\mbtheta}{\mb{\theta}}
\newcommand{\mbxi}{\mb{\xi}}
\newcommand{\mbLambda}{\mb{\Lambda}}
\newcommand{\mbPsi}{\mb{\Psi}}
\newcommand{\mbSigma}{\mb{\Sigma}}
\newcommand{\mbTheta}{\mb{\Theta}}
\newcommand{\mbXi}{\mb{\Xi}}
\newcommand{\bbE}{\mathbb{E}}
\newcommand{\bbP}{\mathbb{P}}
\newcommand{\bbR}{\mathbb{R}}
\newcommand{\cB}{\mathcal{B}}
\newcommand{\cC}{\mathcal{C}}
\newcommand{\cD}{\mathcal{D}}
\newcommand{\cF}{\mathcal{F}}
\newcommand{\cL}{\mathcal{L}}
\newcommand{\cM}{\mathcal{M}}
\newcommand{\cN}{\mathcal{N}}
\newcommand{\cO}{\mathcal{O}}
\newcommand{\cZ}{\mathcal{Z}}
\newcommand{\xmark}{\ding{55}}
\title{SING: SDE Inference via Natural Gradients}
\author{
Amber Hu\thanks{Equal contribution} \\ Stanford University\\ \texttt{amberhu@stanford.edu} \And
Henry D.~Smith\footnotemark[1] \\ Stanford University \\ \texttt{smithhd@stanford.edu} \And
Scott W.~Linderman\\
Stanford University\\ \texttt{swl1@stanford.edu}
}
\begin{document}

\maketitle
\begin{abstract}
Latent stochastic differential equation (SDE) models are important tools for the unsupervised discovery of dynamical systems from data, with applications ranging from engineering to neuroscience. In these complex domains, exact posterior inference of the latent state path is typically intractable, motivating the use of approximate methods such as variational inference (VI). However, existing VI methods for inference in latent SDEs often suffer from slow convergence and numerical instability. We propose \textit{\textbf{S}DE \textbf{I}nference via \textbf{N}atural \textbf{G}radients} (SING), a method that leverages natural gradient VI to efficiently exploit the underlying geometry of the model and variational posterior. SING enables fast and reliable inference in latent SDE models by approximating intractable integrals and parallelizing computations in time. We provide theoretical guarantees that SING approximately optimizes the intractable, continuous-time objective of interest. Moreover, we demonstrate that better state inference enables more accurate estimation of nonlinear drift functions using, for example, Gaussian process SDE models. SING outperforms prior methods in state inference and drift estimation on a variety of datasets, including a challenging application to modeling neural dynamics in freely behaving animals. Altogether, our results illustrate the potential of SING as a tool for accurate inference in complex dynamical systems, especially those characterized by limited prior knowledge and non-conjugate structure.\footnote[1]{Code: 
\url{https://github.com/lindermanlab/sing}}
\end{abstract}
\section{Introduction}
Stochastic differential equations (SDEs) are powerful tools for modeling complex, time-varying systems in scientific domains such as physics, engineering, and neuroscience \cite{oksendal2013stochastic, sarkka2019applied}. In many real-world settings, we do not observe the state of the system directly, but instead must infer it from noisy measurements. For example, in neuroscience, we often wish to uncover continuously evolving internal brain states underlying high-dimensional neural recordings \cite{vyas2020computation}. \textit{Latent SDE models} address this problem by positing a continuous-time stochastic latent process that generates observations through a measurement model. 

A key challenge in latent SDE modeling is posterior inference over the latent state trajectory. Accurate inference enables principled understanding and prediction of the underlying state, but it is often computationally intractable to perform analytically due to non-conjugate model structure. Methods for approximating the intractable posterior include Markov chain Monte Carlo \cite{picchini2014inference, roberts2001inference}, approximate Bayesian smoothing \cite{sarkka2019applied, yue2014bayesian}, and variational inference (VI) \cite{archambeau2007gaussian, 
 archambeau2007variational, course2023amortized, ryder2018black, li2020scalable}. In particular, VI offers a flexible and scalable framework for approximate inference in latent SDE models. In their seminal work, \citet{archambeau2007gaussian, archambeau2007variational} formulate the VI problem for latent SDEs as constrained optimization over the family of linear, time-varying SDEs. While variants of this approach have been widely employed \cite{course2023amortized, course2023state, duncker2019learning, hu2024modeling, kohs2021variational}, they often suffer from slow convergence and numerical instability. These shortcomings motivate the need for more efficient and robust VI methods in this setting.


In this work, we introduce SING: SDE Inference via Natural Gradients, a method that leverages natural gradient variational inference (NGVI) for latent SDE models. SING exploits the underlying geometry of the prior process and variational posterior, which leads to fast and stable latent state inference, even in challenging models with highly nonlinear dynamics and/or non-conjugate observation models. Our contributions are as follows: 
\begin{itemize}
    \item[(i)] We derive natural gradient updates for latent SDE models and introduce methods for computing them efficiently;
    \item[(ii)] We show how SING can be parallelized over time, enabling scalability to long sequences;
    \item[(iii)] We provide theoretical guarantees that SING approximately optimizes the continuous-time objective of interest; and
    \item[(iv)] We demonstrate that improved inference with SING enables more accurate drift estimation, including in models with Gaussian process drifts.
\end{itemize}
\section{Background and problem statement}\label{sec:background}
\subsection{Latent stochastic differential equation models}\label{sec:latent_sde}
We consider the class of models in which the latent states evolve according a stochastic differential equation (SDE) and give rise to conditionally independent observations. The prior on the latent states $\mbx \in \bbR^D$ takes the form
\begin{equation}\label{eqn:latent_sde}
    p(\mbx): d\mbx(t) = \mbf(\mbx(t)) dt + \mbSigma^{\frac{1}{2}} d\mbw(t), \quad \mbx(0) \sim \cN(\mbnu, \mbV), \quad 0 \leq t \leq T_{\text{max}}
\end{equation}
where $\mbf: \bbR^D \to \bbR^D$ is the drift function describing the system dynamics, $\mbSigma$ is a time-homogeneous noise covariance matrix, and $(\mbw(t))_{0 \leq t \leq T_{\text{max}}}$ is a standard $D$-dimensional Brownian motion. $\mbnu$ and $\mbV$ are the mean and covariance of the initial state. For our exposition, we will model $\mbf$ as a deterministic drift function with parameters $\mbtheta$. We will subsequently expand our scope in Section \ref{sec:gp_sde} to the case of a Gaussian process prior on $\mbf$. For simplicity of presentation, we assume $\mbf$ is time-homogeneous; however, our framework can be readily extended to the time-inhomogeneous case as well. 

While the latent trajectory $(\mbx(t))_{0 \leq t \leq T_{\text{max}}}$ is unobserved, we observe data $\cD = \{t_i, \mby(t_i)\}_{i=1}^n$, where $t_i \in [0,T_{\text{max}}]$ are observation times and $\mby(t_i) \in \bbR^N$ are corresponding observations modeled as
\begin{equation}\label{eqn:obs_model}
    \mathbb{E}[\mby(t_i) \mid \mbx] = g\left(\mbC \mbx(t_i) + \mbd \right).
\end{equation}
Here, $g(\cdot)$ is a pre-determined inverse link function and $\{\mbC, \mbd\}$ are learnable hyperparameters defining an affine mapping from latent to observed space. Of particular interest is the setting $D \ll N$, where the latent space is much lower-dimensional than the observations. We provide a schematic of the generative model in \Cref{fig:schematic}A.

For the latent SDE model, we aim to solve two problems: (i) inferring a posterior over the latent trajectories $\mbx$, conditional on the data $\cD$ and (ii) learning the prior and output hyperparameters, collectively denoted by $\mbTheta = \{\mbtheta, \mbC, \mbd\}$.  

\subsection{Natural gradient variational inference for exponential family models}\label{subsec:ngvi}
SING leverages natural gradient variational inference (NGVI) to perform approximate posterior inference in latent SDE models. Before presenting our method, we first provide relevant background on NGVI for exponential families. In this section, we will consider the general VI problem in which we have a variational posterior $\bar{q}(\mbz | \mbeta)$ with parameters $\mbeta \in \bbR^p$. The goal of VI is to maximize a lower bound $\cL(\mbeta)$ to the marginal log likelihood \cite{jordan1999introduction, blei2017variational}.

One way to maximize $\cL(\mbeta)$ is via natural gradient ascent \cite{amari1998natural} with respect to $\mbeta$, which adjusts the standard gradient update to account for the geometry of the distribution space. In practice, this corresponds to the update rule
\begin{align}
    &\mbeta^{(j+1)} = \argmin_{\mbeta} -\mbeta^\top \nabla_{\mbeta}\cL(\mbeta^{(j)}) + \frac{1}{\rho} \cdot \underbrace{ \frac{1}{2}(\mbeta - \mbeta^{(j)})^\top \cF(\mbeta^{(j)})(\mbeta - \mbeta^{(j)})}_{\approx \KL{\bar{q}(\mbz|\mbeta^{(j)})}{\bar{q}(\mbz|\mbeta)}}\label{eqn:nat_grad_hessian}\\
   \implies&\mbeta^{(j+1)} = \mbeta^{(j)} + \rho [\cF(\mbeta^{(j)})]^{-1} \nabla_{\mbeta}\cL(\mbeta^{(j)}),\label{eqn:nat_grad_update}
\end{align}
where $\mbeta^{(j)}$ denotes the variational parameter at iteration $j$, $\rho$ is the step size, and $\cF(\mbeta) = \bbE_{\bar{q}(\mbz | \mbeta)}\left[\nabla_{\mbeta} \log \bar{q}(\mbz | \mbeta) \nabla_{\mbeta} \log \bar{q}(\mbz | \mbeta)^\top \right]$ is the Fisher information matrix. The key idea of natural gradient methods \cite{blei2017variational, amari1998natural, ollivier2017information, martens2020new, khan2023bayesian} is to perform an update that both increases the ELBO and remains close to the current variational distribution (in KL divergence). This interpretation relies upon a second-order Taylor expansion of the KL divergence, as denoted in \cref{eqn:nat_grad_hessian}; we provide a derivation in \Cref{app:sec:expo-family-overview}. See \Cref{fig:schematic}B for a geometric interpretation of natural gradient ascent. 

However, the inverse Fisher information in the update \cref{eqn:nat_grad_update} is intractable to compute when $\mbeta$ is high-dimensional. This is common in models with temporal structure, including the latent SDE model presented in \Cref{sec:latent_sde}. The key insight behind NGVI \cite{hoffman2013stochastic, khan2017conjugate, khan2018fast, salimbeni2018natural, lin2019fast, jones2024bayesian} is that \cref{eqn:nat_grad_update} can be computed efficiently when $\bar{q}(\mbz|\mbeta)$ constitutes a minimal exponential family:
\begin{equation}\label{eqn:expo_family}
    \bar{q}(\mbz | \mbeta) = h(\mbz) \exp \left(\left<\mbeta, \mbT(\mbz) \right> - A(\mbeta)\right), \quad A(\mbeta) = \log \left( \int h(\mbz) \exp \left(\left<\mbeta, \mbT(\mbz) \right> \right) d\mbz\right). 
\end{equation} 
Here, $\mbeta$ are called the natural parameters of the exponential family, $\mbT(\mbz)$ are the sufficient statistics, $\mbmu = \mathbb{E}_{\bar{q}(\mbz|\mbeta)}[\mbT(\mbz)]$ are the mean parameters, and $A(\mbeta)$ is the log normalizer. The exponential family density \cref{eqn:expo_family} is defined for all $\mbeta$ for which $A(\mbeta) < \infty$. We refer the reader to \Cref{app:sec:expo-family} for an overview of exponential families.
Let $\mbmu(\mbeta) := \bbE_{\bar{q}(\mbz | \mbeta)}[\mbT(\mbz)]$ be the bijective mapping from natural to mean parameters. 
In \Cref{app:sec:expo-family}, we show that the Fisher information is equal to the Jacobian of the mapping $\mbmu(\mbeta)$ i.e., $\cF(\mbeta) = \nabla_{\mbeta} \mbmu(\mbeta)$. As a result, \cref{eqn:nat_grad_update} can be rewritten as
\begin{equation}\label{eqn:update_wrt_mean}
    \mbeta^{(j+1)} = \mbeta^{(j)} + \rho \nabla_{\mbmu}\cL(\mbmu^{(j)}).
\end{equation}
This reparametrization avoids the need to compute or invert a large Fisher information matrix, allowing for tractable updates even for high-dimensional parameter spaces.

\subsection{Variational inference in latent SDE models}\label{subsec:inference}
\begin{figure}[t!]
\centering
\includegraphics[width=\textwidth]{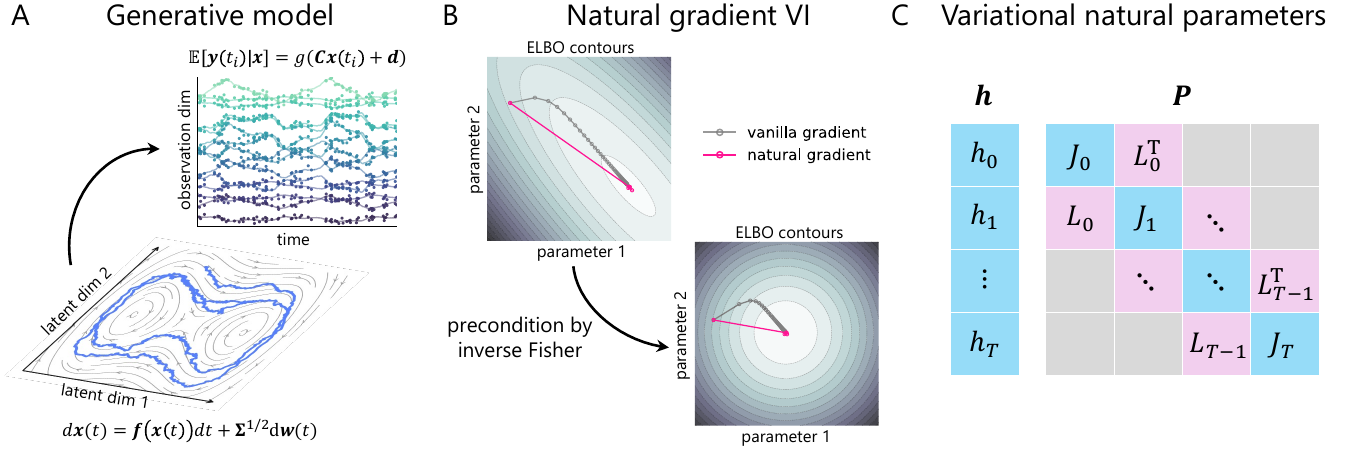}
\caption[]
{An overview of SING. \textbf{A}: In the generative model, a low-dimensional SDE gives rise to noisy, conditionally independent observations at timestamps $\{t_i\}_{i=1}^n$. \textbf{B}: SING leverages NGVI to perform fast and reliable approximate posterior inference in latent SDE models. NGVI exploits the geometry of the model by preconditioning updates by an inverse Fisher information matrix, often leading to faster convergence than vanilla gradient ascent. \textbf{C}: On a discretized time grid $\mbtau$, the variational posterior is a multivariate Gaussian distribution with a block tridiagonal precision matrix. 
}\label{fig:schematic}
\vspace{-0.5cm}
\end{figure}

In the latent SDE model in \Cref{sec:latent_sde}, one can show that the true posterior $p(\mbx | \cD)$ is a time-varying SDE with the same diffusion coefficient as the prior:
\begin{align*}
    p(\mbx \mid \cD): d\mbx(t) = \tilde{\mbf}(\mbx(t), t| \cD) dt + \mbSigma^{\frac{1}{2}}d \mbw(t), \quad \mbx(0) \sim p_{\cD}, \quad 0 \leq t \leq T_{\text{max}}.
\end{align*}
The drift $\tilde{\mbf}$ of the posterior SDE is the solution to a linear partial differential equation (PDE) \cite{archambeau2011approximate, kushner1964differential, stratonovich1965conditional, pardoux1982equations}. In the case where the prior \cref{eqn:latent_sde} is a linear SDE and the observation model \cref{eqn:obs_model} is linear and Gaussian, both $\tilde{\mbf}$ and the marginal density of the posterior $p(\mbx(t) | \cD)$ can be computed via a Kalman-Bucy smoother \cite{kalman1961new}. However, in non-conjugate settings, neither quantity admits a closed-form, and solving for them numerically using PDE solvers can be computationally expensive.

We propose a VI method for approximate inference on the latent trajectories $(\mbx(t))_{0 \leq t \leq T_{\text{max}}}$. As in prior works \cite{archambeau2007gaussian, archambeau2007variational}, we posit a variational family $q(\mbx)$ of linear time-varying SDEs,
\begin{equation}\label{eqn:var-family}
    q(\mbx): d\mbx(t) = (\mbA(t) \mbx(t) + \mbb(t)) dt + \mbSigma^{\frac{1}{2}} d\mbw(t), \quad \mbx(0) \sim \cN(\mbm_0, \mbS_0), \quad 0 \leq t \leq T_{\text{max}},
\end{equation}
where the variational parameters are $(\mbA(t))_{0 \leq t \leq T_{\text{max}}}$ and $(\mbb(t))_{0 \leq t \leq T_{\text{max}}}$. In other words, we approximate the posterior drift $\tilde{\mbf}$ with a linear function $\mbf_q(\mbx(t),t) := \mbA(t) \mbx(t) + \mbb(t)$ at each time. 

In the setting where we have multiple independent trials $\ell$ sampled from the generative model $(\mbx^{(\ell)}(t))_{0 \leq t \leq T_{\text{max}}}, \{\mby^{(\ell)}(t_i)\}_{i=1}^n$, the true posterior factorizes across trials. Hence, we choose our variational approximation to also factorize across trials. In other words, we approximate the posterior for each trial $p(\mbx^{(\ell)} | \cD)$ with a linear, time-varying SDE as in \cref{eqn:var-family}, where the variational parameters are allowed to differ across trials. In practice, we find that this variational family is highly flexible (see \Cref{sec:results}) and is capable of accurately inferring latents sampled according to SDEs with highly nonlinear (e.g., bifurcating) dynamics.

Our choice of variational family \eqref{eqn:var-family} yields the following evidence lower bound (ELBO) to the marginal log likelihood:
\begin{equation}\label{eqn:elbo-cont}
    \cL_{\text{cont}}(q, \mb\Theta) := \bbE_q[\log p(\mby | \mbx, \mbTheta)] - \KL{q(\mbx)}{p(\mbx | \mbTheta)}.
\end{equation}
Our goal is to maximize $\cL_{\text{cont}}(q, \mbTheta)$ with respect to $q$, given $\mbTheta$ fixed. However, this objective is typically intractable to optimize directly. In practice, a common approach is to instead perform inference over a finite-dimensional distribution $q(\mbx_{0:T})$, where $\mbx_i := \mbx(\tau_i)$ and $\mbtau = \{\tau_i\}_{i=0}^T \subseteq [0, T_{\text{max}}], \ \tau_0 = 0, \ \tau_T = T_{\text{max}}$ is a set of time points that includes observation times $\{t_i\}_{i=1}^n$. 

When the prior SDE $p(\mbx)$ is nonlinear, the transition distribution $p(\mbx_{0:T})$ does not have a closed-form expression. Therefore, we approximate $p$ with its Euler--Maruyama discretization, which we call $\tilde{p}$.
Since both $\tilde{p}$ and $q$ have Gaussian transition densities, this leads to a tractable approximation to the continuous-time ELBO $\cL_{\text{cont}}(q, \mbTheta)$, 
\begin{equation}\label{eqn:elbo-approx}
    \cL_{\text{approx}}(q, \mbTheta) := \bbE_{q}[\log p(\mby | \mbx, \mbTheta)] - \KL{q(\mbx_{0:T})}{p(\mbx_{0:T} | \mbTheta)}.
\end{equation}
For derivations of the ELBOs $\cL_{\text{cont}}$ and $\cL_{\text{approx}}$, see Appendix \ref{app:sec:elbo}. 
\section{SING: SDE Inference via Natural Gradients}\label{sec:method}

\subsection{NGVI for latent SDE models}\label{sec:simple_ngvi}
We propose SING, a method that uses NGVI to optimize the ELBO $\cL_{\text{approx}}$ in the latent SDE model. SING can be applied to any combination of prior drift and likelihood model and enjoys fast convergence by exploiting the geometry of the natural parameter space. SING builds upon recent work from \citet{verma2024variational}; however, it has several notable differences that we detail in  \Cref{app:sec:compare-verma}.

We begin by recognizing that since \cref{eqn:var-family} is Gaussian and Markovian, its marginalized distribution $q(\mbx_{0:T})$ can be written as a minimal exponential family,
\begin{equation}\label{eqn:variational_posterior}
    q(\mbx_{0:T}| \mbeta) = \exp \left(-\frac{1}{2} \sum_{i=0}^T \mbx_i^\top \mbJ_i \mbx_i - \sum_{i=0}^{T-1} \mbx_{i+1}^\top \mbL_i \mbx_i + \sum_{i=0}^T \mbh_i ^\top \mbx_i - A(\mbeta)\right).
\end{equation}
In \cref{eqn:variational_posterior}, the natural parameters are $\mbeta = \left[\{\mbh_i, -\frac{1}{2}\mbJ_i\}_{i=0}^T, \{-\mbL_i^\top\}_{i=0}^{T-1}\right]$ and the corresponding sufficient statistics are $\mbT(\mbx_{0:T}) = \left[\{\mbx_i, \mbx_i \mbx_i^\top\}_{i=0}^T, \{ \mbx_{i+1}\mbx_i^\top\}_{i=0}^{T-1}\right]$. Since $q(\mbx_{0:T})$ is Markovian, its full precision matrix is block tridiagonal, as depicted in \Cref{fig:schematic}C.

To apply the NGVI update \cref{eqn:update_wrt_mean} to $\cL_{\text{approx}}$, we must first consider $q(\mbx_{0:T})$ in terms of its mean parameters, which are simply the expectations of $\mbT(\mbx_{0:T})$ under $q$. We denote these together as $\mbmu := \left[\{\mbmu_{i,1}, \mbmu_{i,2}\}_{i=0}^T, \{\mbmu_{i,3}\}_{i=0}^{T-1} \right]$. From here, \cref{eqn:update_wrt_mean} simplifies to the following updates on the natural parameters, which decompose over time steps $i=0, \dots, T$:
\begin{equation}\label{eqn:ngvi_updates}
\scalebox{0.98}{$
\begin{aligned}
    (\mbh_i^{(j+1)}, \mbJ_i^{(j+1)}) &= (1-\rho) (\mbh_i^{(j)}, \mbJ_i^{(j)}) + \rho \nabla_{(\mbmu_{i,1}, \mbmu_{i,2})} \bbE_{q^{(j)}}\left[\log p(\mby_i|\mbx_i) \delta_i + \log \tilde{p}(\mbx_{i+1},\mbx_i|\mbx_{i-1}) \right] \\
    \mbL_i^{(j+1)} &= (1-\rho)\mbL_i^{(j)} + \rho \nabla_{\mbmu_{i,3}} \bbE_{q^{(j)}}[\log \tilde{p}(\mbx_{i+1}|\mbx_i)].
\end{aligned}
$}
\end{equation}
In \cref{eqn:ngvi_updates}, $q^{(j)} := q(\mbx_{0:T}|\mbeta^{(j)})$ is the variational posterior at iteration $j$ and $\delta_i$ is an indicator denoting whether there is an observation at time $\tau_i$. We provide a derivation in \Cref{app:sec:ngvi_updates}.

As written, the updates in \cref{eqn:ngvi_updates} are challenging to compute because they involve (i) approximating intractable expectations and (ii) converting from natural to mean parameters between iterations. In the subsequent sections, we propose computational methods to address each of these problems. 

\subsection{ELBO approximation guarantee}\label{subsec:approx}
Prior to detailing how we compute the SING updates \cref{eqn:ngvi_updates}, we provide theoretical justification that maximizing the ELBO $\cL_{\text{approx}}$ will approximately maximize the true, continuous-time ELBO of interest $\cL_{\text{cont}}$.

\begin{theorem}\label{thm:ELBO-converge}
    Under mild regularity conditions (see \Cref{app:sec:theory}), there exists a constant $C$
    such that for any mesh size $\Delta t := \max_{i=0}^{T-1} \Delta_i, \  \Delta_i = \tau_{i+1} - \tau_i$ of the time grid, $|\cL_{\text{cont}} - \cL_{\text{approx}}| \leq C (\Delta t)^{1/2}$.
\end{theorem}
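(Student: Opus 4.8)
The plan is to reduce the statement to a comparison of the two KL terms, and then bound that comparison using Girsanov's theorem together with a quadrature estimate whose order is dictated by the regularity of the diffusion paths. First I would observe that the expected log-likelihood terms in $\cL_{\text{cont}}$ and $\cL_{\text{approx}}$ are \emph{identical}: since $\log p(\mby \mid \mbx, \mbTheta)$ depends on $\mbx$ only through the marginals at the observation times $\{t_i\}_{i=1}^n$, and the grid $\mbtau$ contains these times so that marginalizing $q$ to $\mbtau$ preserves each such marginal, we have $\bbE_q[\log p(\mby \mid \mbx, \mbTheta)]$ equal in both objectives. Hence
\[
\cL_{\text{cont}} - \cL_{\text{approx}} = \KL{q(\mbx_{0:T})}{\tilde{p}(\mbx_{0:T} \mid \mbTheta)} - \KL{q(\mbx)}{p(\mbx \mid \mbTheta)},
\]
and it suffices to bound the right-hand side by $C(\Delta t)^{1/2}$.

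Next I would obtain comparable expressions for the two KL divergences. For the continuous-time term I would invoke Girsanov's theorem: because $q$ and $p$ share the diffusion $\mbSigma^{1/2}$,
\[
\KL{q(\mbx)}{p(\mbx)} = \KL{q_0}{p_0} + \tfrac{1}{2}\,\bbE_q\!\left[\int_0^{T_{\max}} \big(\mbf_q(\mbx(t),t)-\mbf(\mbx(t))\big)^\top \mbSigma^{-1}\big(\mbf_q(\mbx(t),t)-\mbf(\mbx(t))\big)\,dt\right].
\]
For the discretized term I would apply the Markov chain rule for KL and the closed form for the KL between Gaussian transition kernels, comparing the Euler--Maruyama kernel $\tilde{p}(\mbx_{i+1}\mid\mbx_i)=\cN(\mbx_i+\mbf(\mbx_i)\Delta_i,\,\mbSigma\Delta_i)$ with the exact transition kernel of the linear SDE $q$. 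Writing $\phi(\mbx,t):=(\mbf_q(\mbx,t)-\mbf(\mbx))^\top\mbSigma^{-1}(\mbf_q(\mbx,t)-\mbf(\mbx))$, a short expansion of each per-step Gaussian KL (using that the exact $q$-transition has mean $\mbx_i+\mbf_q(\mbx_i,\tau_i)\Delta_i+O(\Delta_i^2)$ and covariance $\mbSigma\Delta_i+O(\Delta_i^2)$) shows it equals $\tfrac{\Delta_i}{2}\phi(\mbx_i,\tau_i)$ up to an $O(\Delta_i^2)$ remainder. Summing and taking expectations gives $\KL{q(\mbx_{0:T})}{\tilde p(\mbx_{0:T})}=\KL{q_0}{p_0}+\tfrac12\sum_i \Delta_i\,\bbE_q[\phi(\mbx(\tau_i),\tau_i)]+\sum_i O(\Delta_i^2)$.

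The matching initial-condition terms cancel, leaving (i) the quadrature error between $\sum_i\Delta_i\,\bbE_q[\phi(\mbx(\tau_i),\tau_i)]$ and $\int_0^{T_{\max}}\bbE_q[\phi(\mbx(t),t)]\,dt$, and (ii) the accumulated remainder $\sum_i O(\Delta_i^2)$. For (ii), since $\sum_i\Delta_i^2\le \Delta t\sum_i\Delta_i=\Delta t\,T_{\max}$, the contribution is $O(\Delta t)$, hence $O((\Delta t)^{1/2})$. For (i), the crux is the time-regularity of $\Psi(t):=\bbE_q[\phi(\mbx(t),t)]$. Because $\phi$ is quadratic in $\mbx$ (locally Lipschitz with a linearly growing Lipschitz constant) while $\|\mbx(t)-\mbx(s)\|_{L^2}=O(|t-s|^{1/2})$ along the process (the Brownian contribution dominating), Cauchy--Schwarz together with uniform moment bounds on $q$ yields $|\Psi(t)-\Psi(s)|\le C|t-s|^{1/2}$. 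A Hölder-$1/2$ integrand gives a left-endpoint quadrature error of order $\sum_i\Delta_i^{3/2}\le T_{\max}(\Delta t)^{1/2}$, producing the stated rate. The constant $C$ is controlled via the assumptions of \cref{app:sec:theory} --- Lipschitz and linear-growth conditions on $\mbf$, positive-definiteness of $\mbSigma$, and boundedness of the variational coefficients $\mbA(t),\mbb(t)$ --- this last restriction being exactly the "subset of the variational parameter space" on which the moment bounds, and hence $C$, are uniform.

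The main obstacle I anticipate is making these estimates rigorous and uniform rather than merely heuristic: justifying the Girsanov representation under a verifiable integrability (Novikov-type) condition so that the path-space KL is finite and the change of measure is valid, and then controlling the quadratic, non-globally-Lipschitz integrand $\phi$ through moment bounds on $q$ that are uniform in the mesh, so that both the Hölder-$1/2$ time-continuity of $\Psi$ and the summability of the $O(\Delta_i^2)$ transition remainders hold with a single constant valid over the admissible variational parameters. Everything else --- the cancellation of the likelihood and initial-condition terms, the Gaussian per-step KL expansion, and the Riemann-sum bookkeeping --- is routine once these regularity estimates are in hand.
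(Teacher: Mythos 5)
Your proposal is correct and follows essentially the same route as the paper's proof: the same decomposition into (i) a Riemann-quadrature error controlled by the $L^2$ Hölder-$1/2$ continuity of the process together with uniform second-moment bounds (giving $\sum_i \Delta_i^{3/2} \lesssim (\Delta t)^{1/2}$) and (ii) a per-step Gaussian KL comparison between the exact $q$-transition kernel, expanded as $\mbx_i + \Delta_i \mbf_q(\mbx_i,\tau_i) + O(\Delta_i^2)$ in mean and $\Delta_i\mbSigma + O(\Delta_i^2)$ in covariance, and the Euler--Maruyama kernel (giving $\sum_i O(\Delta_i^2) = O(\Delta t)$). The technical obstacles you flag (Girsanov validity, Grönwall-based moment bounds uniform over bounded $\mbA(t),\mbb(t)$) are exactly the lemmas the paper proves.
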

\textit{Proof sketch.}~One compares the continuous‐ and discrete‐time  log-densities and applies error bounds on the Euler--Maruyama approximation. The full statement and proof can be found in \Cref{app:sec:theory}.

Here, we view both $\cL_{\text{approx}}$ and $\cL_{\text{cont}}$  as functions of the same variational parameter space $(\mbA(t), \mbb(t))_{0 \leq t \leq T_{\text{max}}}$. Our result states that, on a compact subset of this space, the approximation error tends to zero \textit{uniformly} as the size of the grid $\mbtau$ tends to zero.
This uniform convergence result is sufficient to argue that the maximizer of the approximate ELBO approaches the maximizer of the continuous ELBO; importantly, a pointwise convergence argument is not sufficient to establish this result. We provide further discussion in \Cref{app:sec:theory}.
To the best of our knowledge, our work is the first in the topic of variational inference for latent SDE models to provide an explicit rate of convergence for the discrete time ELBO to its continuous counterpart.

\subsection{Computing expectations of prior transition densities}\label{sec:transition_expectations}
The updates in \cref{eqn:ngvi_updates} rely on computing expectations of the form $\bbE_{q}[\log \tilde{p}(\mbx_{i+1}|\mbx_i)]$. This is difficult in general because the prior drift $\mbf(\mbx(t))$ can be nonlinear and the expectation is taken over the pairwise distribution $q(\mbx_i, \mbx_{i+1})$, which has total dimension $\bbR^{2D}$. In the following proposition, we show using Stein's lemma that we can rewrite the expectation over just $q(\mbx_i)$, which reduces the integration dimensionality to $\bbR^D$. The proof can be found in \Cref{app:sec:transition_expectations}.

\begin{proposition}\label{prop:transition_expectation}
    The term $\bbE_{q}[\log \tilde{p}(\mbx_{i+1}|\mbx_i)]$ can equivalently be written in terms of the mean parameters $\mbmu_i$, $\mbmu_{i+1}$ and the expectations $\bbE_{q}\left[\mbf(\mbx_i)\right], \bbE_{q}\left[\mbf(\mbx_i)^\top \mbSigma^{-1} \mbf(\mbx_i)\right]$, and $\bbE_{q}\left[\mbJ_{\mbf}(\mbx_i)\right]$, where $\mbJ_{\mbf}(\mbx_i)$ denotes the Jacobian of $\mbf(\cdot)$ at $\mbx_i$. 
\end{proposition}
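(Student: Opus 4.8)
\textit{Proof proposal.}~The plan is to write the Euler–Maruyama transition density explicitly, expand its logarithm, and show that taking $\bbE_q$ term by term reduces everything to the claimed quantities, with Stein's lemma doing the essential work on the single term that genuinely couples $\mbx_i$ and $\mbx_{i+1}$. Over a step of length $\Delta_i = \tau_{i+1}-\tau_i$ the discretized prior has Gaussian transition $\tilde p(\mbx_{i+1}\mid\mbx_i) = \cN(\mbx_{i+1};\,\mbx_i + \mbf(\mbx_i)\Delta_i,\,\mbSigma\Delta_i)$, so up to an additive constant independent of the variational parameters,
\begin{align*}
\log\tilde p(\mbx_{i+1}\mid\mbx_i) &= -\tfrac{1}{2\Delta_i}(\mbx_{i+1}-\mbx_i)^\top\mbSigma^{-1}(\mbx_{i+1}-\mbx_i) \\
&\quad + (\mbx_{i+1}-\mbx_i)^\top\mbSigma^{-1}\mbf(\mbx_i) - \tfrac{\Delta_i}{2}\,\mbf(\mbx_i)^\top\mbSigma^{-1}\mbf(\mbx_i).
\end{align*}
After taking expectations the third term is already $-\tfrac{\Delta_i}{2}\bbE_q[\mbf(\mbx_i)^\top\mbSigma^{-1}\mbf(\mbx_i)]$, one of the stated quantities.

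First I would dispatch the quadratic increment term: $\bbE_q[(\mbx_{i+1}-\mbx_i)^\top\mbSigma^{-1}(\mbx_{i+1}-\mbx_i)]$ expands into traces of $\mbSigma^{-1}$ against $\bbE_q[\mbx_{i+1}\mbx_{i+1}^\top]$, $\bbE_q[\mbx_i\mbx_i^\top]$, and $\bbE_q[\mbx_i\mbx_{i+1}^\top]$, which are exactly the sufficient statistics $\mbmu_{i+1,2}$, $\mbmu_{i,2}$, and $\mbmu_{i,3}$. The middle term is where the difficulty lies. Splitting it as $\bbE_q[\mbx_{i+1}^\top\mbSigma^{-1}\mbf(\mbx_i)]-\bbE_q[\mbx_i^\top\mbSigma^{-1}\mbf(\mbx_i)]$, the second piece is an expectation over the single marginal $q(\mbx_i)$, but the first couples $\mbx_{i+1}$ to the nonlinear $\mbf(\mbx_i)$ and a priori demands integration over the full joint $q(\mbx_i,\mbx_{i+1})$ on $\bbR^{2D}$.

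The key step, and the main obstacle, is to collapse this coupling via Stein's lemma for the jointly Gaussian pair $(\mbx_i,\mbx_{i+1})$. Writing $\mbx_{i+1}=\mbmu_{i+1,1}+(\mbx_{i+1}-\mbmu_{i+1,1})$ gives $\bbE_q[\mbx_{i+1}^\top\mbSigma^{-1}\mbf(\mbx_i)]=\mbmu_{i+1,1}^\top\mbSigma^{-1}\bbE_q[\mbf(\mbx_i)]+\bbE_q[(\mbx_{i+1}-\mbmu_{i+1,1})^\top\mbSigma^{-1}\mbf(\mbx_i)]$, and applying Stein's identity to the $\mbx_{i+1}$-block of the joint covariance rewrites the residual as $\mathrm{tr}(\mbSigma_{i+1,i}^\top\mbSigma^{-1}\bbE_q[\mbJ_{\mbf}(\mbx_i)])$, where $\mbSigma_{i+1,i}=\mathrm{Cov}_q(\mbx_{i+1},\mbx_i)$. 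Treating $\bbE_q[\mbx_i^\top\mbSigma^{-1}\mbf(\mbx_i)]$ identically (centering at $\mbmu_{i,1}$ and applying Stein with the block $\mbSigma_{i,i}$) yields $\mbmu_{i,1}^\top\mbSigma^{-1}\bbE_q[\mbf(\mbx_i)]+\mathrm{tr}(\mbSigma_{i,i}\mbSigma^{-1}\bbE_q[\mbJ_{\mbf}(\mbx_i)])$. Thus every integral against the nonlinear drift has been reduced to $\bbE_q[\mbf(\mbx_i)]$ and $\bbE_q[\mbJ_{\mbf}(\mbx_i)]$, both taken over $q(\mbx_i)$ on $\bbR^{D}$, while $\mbmu_{i,1},\mbmu_{i+1,1},\mbSigma_{i,i},\mbSigma_{i+1,i}$ are recoverable from the mean parameters through $\mbSigma_{i,i}=\mbmu_{i,2}-\mbmu_{i,1}\mbmu_{i,1}^\top$ and $\mbSigma_{i+1,i}=\mbmu_{i,3}^\top-\mbmu_{i+1,1}\mbmu_{i,1}^\top$. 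Collecting terms then gives the stated representation.

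Finally, the one place requiring genuine care is the regularity justifying Stein's lemma, i.e. the integration-by-parts step: I would assume $\mbf$ is weakly differentiable with $\bbE_q\|\mbJ_{\mbf}(\mbx_i)\|<\infty$ and mild growth control so that the Gaussian boundary terms vanish, carrying these over from the model's standing regularity assumptions. Everything else is bookkeeping in the expansion of the quadratic form.
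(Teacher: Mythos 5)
Your proposal is correct and follows essentially the same route as the paper: expand the log of the Gaussian Euler--Maruyama transition density, observe that all terms except the drift--state cross terms reduce to the mean parameters, and apply Stein's lemma to the jointly Gaussian pair $(\mbx_i,\mbx_{i+1})$ to convert $\bbE_q[\mbx_{i+1}^\top\mbSigma^{-1}\mbf(\mbx_i)]$ and $\bbE_q[\mbx_i^\top\mbSigma^{-1}\mbf(\mbx_i)]$ into expressions involving only $\bbE_q[\mbf(\mbx_i)]$, $\bbE_q[\mbJ_{\mbf}(\mbx_i)]$, and (co)variances recoverable from $\mbmu_i,\mbmu_{i+1}$. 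The paper packages the Stein step as a standalone lemma on the block covariance structure, but the substance of the argument is identical to yours.
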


The key implication of \Cref{prop:transition_expectation} is that the difficult expectations over the nonlinear prior transition terms can be reduced in dimensionality from $\bbR^{2D}$ to $\bbR^D$. This makes these terms in SING's updates significantly more tractable. In practice, there are many methods to compute these expectations, including Gauss-Hermite quadrature in low-dimensional systems or Monte Carlo in higher-dimensional systems. Empirically, we observe that even with a single Monte Carlo sample, SING performs accurate approximate inference over the latent trajectories in up to $50$-dimensional latent spaces (\Cref{subsec:exp-inference}). 

\subsection{Parallelizing SING}\label{sec:parallelizing}
In many real-world settings, sequential inference on long time series can be computationally challenging. In this section, we show how SING can be fully parallelized, yielding efficient inference that scales logarithmically with the number of time steps. 

The computational bottleneck of \Cref{sec:simple_ngvi} is converting from natural parameters $\mbeta$ to mean parameters $\mbmu$ for the updates in \cref{eqn:ngvi_updates}. By the identity $\nabla_{\mbeta} A(\mbeta) = \mbmu$, we can perform this conversion by first computing $A(\mbeta)$ and then applying autodifferentiation with respect to $\mbeta$. However, existing algorithms for this problem take $\cO(D^3 T)$ time (see \Cref{app:sec:sequential_details} for further details). While $D$ is typically small in our setting, the linear scaling in $T$ may be prohibitive for long sequences.

Our key insight is that the log normalizer $A(\mbeta)$ of \cref{eqn:variational_posterior} can be computed in parallel time by using associative scans \cite{blelloch1990prefix}. Given a binary associative operator $\bullet$ and a sequence $(a_1, a_2, \dots, a_n)$, the associative scan computes the prefix sum $(a_1, (a_1 \bullet a_2), \dots, (a_1 \bullet a_2 \bullet \dots \bullet a_n))$ in logarithmic time using a divide-and-conquer approach. By recognizing the log normalizer as 
\begin{equation}\label{eqn:log_norm_factored}
\scalebox{0.9}{$
A(\mbeta) = \log \displaystyle\int 
\underbrace{\exp \left(-\frac{1}{2} \mbx_0^\top \mbJ_0 \mbx_0 + \mbh_0^\top \mbx_0 \right)}_{\displaystyle :=a_{-1,0}} 
\displaystyle\prod_{i=1}^T 
\underbrace{\exp \left(-\frac{1}{2} \mbx_i^\top  \mbJ_i \mbx_i - \mbx_{i}^\top \mbL_{i-1} \mbx_{i-1} + \mbh_i^\top \mbx_i\right)}_{\displaystyle := a_{i-1,i}} 
\, d \mbx_{0:T} 
$}
\end{equation}
we can apply the associative scan to compute $A(\mbeta)$. We define the unnormalized Gaussian potentials in \cref{eqn:log_norm_factored} as the sequence elements $a_{i-1,i}$ and marginalize out one variable at a time via the binary associative operator $a_{i,j} \bullet a_{j,k} = \int a_{i,j} a_{j,k} d \mbx_j := a_{i,k}$. To marginalize out the variable $\mbx_{T}$, we also define the dummy element $a_{T, T+1} = 1$. As a result, the log of the full scan $(a_{-1,0} \bullet a_{0,1} \bullet \dots \bullet a_{T, T+1})$ evaluates exactly to $A(\mbeta)$. This algorithm has time complexity $\cO(D^3 \log T)$, resulting in significant speedups as sequence length increases (\Cref{fig:runtime}). In \Cref{app:sec:parallel_details}, we derive the analytical expression for the Gaussian marginalization operator and analyze our algorithm's time complexity.

\subsection{SING-GP: Drift estimation with Gaussian process priors}\label{sec:gp_sde}
So far, we have assumed that the drift function $\mbf(\mbx(t) | \mbTheta)$ is deterministic with learnable hyperparameters $\mbtheta$. However, in practice, we often have limited prior knowledge about the functional form of the drift but would still like to learn or infer its structure. A natural approach for this problem is to place a Gaussian process (GP) \cite{williams2006gaussian} prior over the drift, 
\begin{align*}
    f_d(\cdot) \overset{\text{iid}}{\sim} \mathcal{GP}(0, \kappa_{\mbtheta}(\cdot, \cdot)), \quad \text{for } d = 1, \dots, D,
\end{align*}
where $\mbf(\cdot) = \begin{bmatrix}f_1(\cdot), \dots, f_D(\cdot) \end{bmatrix}^\top$,  $\kappa_{\mbtheta}(\cdot, \cdot)$ is a kernel function, and $\mbtheta$ are kernel hyperparameters. This results in a well-studied model called the GP-SDE \cite{duncker2019learning, hu2024modeling}, which is appealing for its ability to encode rich prior structure in dynamics and provide posterior uncertainty estimates. However, the practical utility of GP-SDEs has been limited by the inference methods used to fit them \cite{archambeau2007gaussian, archambeau2007variational}, which we show in \Cref{subsec:drift_estimation_results} often exhibit slow convergence and numerical instability.

Here, we extend SING to enable accurate and reliable inference in GP-SDE models, resulting in a new method called SING-GP. While we defer technical details to \Cref{app:sec:sing_gp}, we outline the key ideas here. Following \citet{duncker2019learning}, we introduce an augmented variational posterior of the form $q(\mbx_{0:T}, \mbf)$ and use sparse GP techniques with inducing points \cite{titsias2009variational} to perform approximate inference over $q(\mbf)$. This leads to convenient closed-form updates on $q(\mbf)$. In \Cref{subsec:drift_estimation_results}, we illustrate that SING-GP performs competitively to flexible parametric drift functions in low-data regimes, while also directly providing posterior uncertainty estimates on the drift. In \Cref{subsec:real_data_results}, we use SING-GP on a challenging application of modeling latent neural dynamics in freely moving animals. 
\section{Related work}\label{sec:related_work}
\textbf{VI for latent SDE models} \quad A number of VI methods for latent SDEs posit larger variational families than \cref{eqn:var-family}, consisting of non-Gaussian diffusion processes \cite{ryder2018black, li2020scalable, wildner2021moment, hasan2021identifying, wang2023neural, zeng2023latent, bartosh2025sde}. 
\citet{ryder2018black} parameterize the variational family as the collection of diffusion processes with neural network drift and diffusion coefficients. Hereafter, we refer to the probability distributions belonging to this variational family as neural stochastic differential equations, or `neural-SDEs'.
\citet{li2020scalable} also consider a neural-SDE variational family and introduce a stochastic adjoint method to optimize the continuous‑time ELBO. While memory efficient, the stochastic adjoint method requires sampling from the variational posterior at each gradient step. 
Most recently, \citet{bartosh2025sde} introduce a simulation-free variational inference algorithm, `SDE Matching', for neural-SDE variational families. In \Cref{app:sec:n_sde_comp}, we compare SING to SDE Matching on a Lorenz attractor benchmark from \citep{li2020scalable}. We find that, although SING learns a Gaussian process approximation to the posterior, it outperforms SDE Matching on recovery of the ground truth latents and achieves competitive, albeit slightly worse, performance on drift recovery.

In settings where statistical and computational efficiency are critical, many works have studied Gaussian diffusion process variational families of the form \cref{eqn:var-family}. In their seminal work, \citet{archambeau2007gaussian, archambeau2007variational} propose an algorithm `VI for Diffusion Processes' (VDP), which optimizes the continuous-time ELBO in \cref{eqn:elbo-cont} via Lagrange multipliers by incorporating the evolution of the marginal mean and covariance of $q(\mbx(t))$ as constraints. A detailed overview of VDP is provided in \Cref{app:sec:vdp}. \citet{duncker2019learning} use sparse GP approximations to extend VDP to GP-SDE models, a method we refer to as `VDP-GP'. Related to VDP, \citet{course2023amortized, course2023state} express the continuous-time ELBO entirely in terms of the marginal mean and covariance and then approximate these functions using a neural network encoder architecture.

Most relevant to our work, \citet{verma2024variational} propose applying NGVI to the discrete-time ELBO from \cref{eqn:elbo-approx} in the case of a deterministic drift. Unlike SING, \citet{verma2024variational} do not address how to tractably compute  $\cL_{\text{approx}}$, and they do not suggest how to efficiently convert from natural to mean parameters. Moreover, no theoretical guarantees are given as to how well $\cL_{\text{approx}}$ approximates $\cL_{\text{true}}$. A complete comparison is given in \Cref{app:sec:compare-verma}.

\textbf{Connection to GP regression problems} \quad Several works have applied NGVI for efficient inference in GP regression problems by exploiting the fact that many GPs admit equivalent representations as linear time-invariant SDEs \cite{sarkka2019applied,hartikainen2010kalman}. For instance, \citet{chang2020fast} and \citet{dowling2023linear} consider GP regression problems with non-Gaussian likelihoods. To avoid cubic scaling in the number of data points, they reformulate the GP as a linear SDE and then apply NGVI to perform approximate inference in this non-conjugate model. \citet{hamelijnck2021spatio} combine this approach with sparse GP approximations to perform inference in a spatio-temporal setting. While these works focus on linear SDE representations of GPs in regression problems, SING targets estimation of both the latent process and drift and applies to the broader class of \textit{nonlinear} SDE priors.

\textbf{Parallelizing sequential models} \quad Associative scans have been successfully applied to parallelize inherently sequential computations in areas such as classical Bayesian inference and deep sequence modeling \cite{hamelijnck2021spatio,sarkka2020temporal,hassan2021temporal,smith2023simplified,gu2023mamba,gonzalez2024towards}. \citet{sarkka2020temporal} and \citet{hassan2021temporal} derived associative scans for parallelizing Bayesian filtering and smoothing algorithms, and \citet{hamelijnck2021spatio} extended these algorithms to inference in non-conjugate GP regression. We note that our associative scan in \Cref{sec:parallelizing} is reminiscent of the one in \citet{hassan2021temporal}, which operates on potentials for inference in the hidden Markov model. However, SING differs in that it leverages these techniques to compute the log normalizer in a Gaussian Markov chain with no observations, which enables efficient parameter conversion in its natural gradient-based updates.
\section{Experiments}\label{sec:results}
\subsection{Inference on synthetic data}\label{subsec:exp-inference}

\begin{figure}[t!]
\centering
\includegraphics[width=\textwidth]{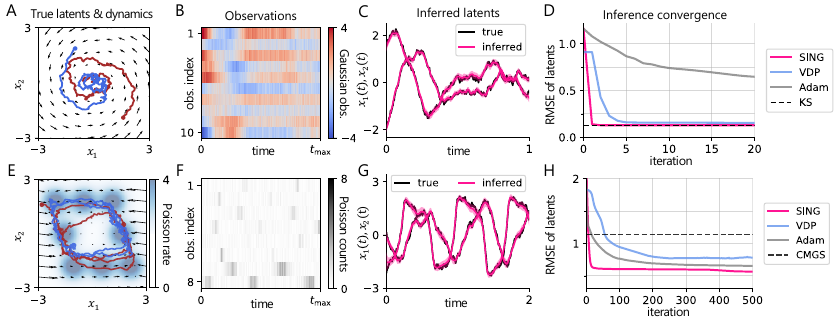}
\vspace*{-0.5cm}
\caption[]
{\textbf{(Top row)} We apply SING to a LDS with Gaussian observations. \textbf{A}: True latents are sampled from a LDS characterized by a stable spiral. \textbf{B}: Observations are 10-dimensional Gaussian variables. \textbf{C}: True vs. inferred latents on an example trial, with 95\% posterior credible intervals. \textbf{D}: Comparison between SING and several baselines over iterations. \textbf{(Bottom row)} We apply SING to simulated place cell activity, where both the prior and observation models are nonlinear. \textbf{E}: True latents are sampled from a Van der Pol oscillator and represent trajectories through 2D space. Tuning curves modeled by radial basis functions represent expected firing rates at each location in latent space. \textbf{F}: Observations are Poisson spike counts for 8 neurons. \textbf{G, H}: See C, D.}
\label{fig:inference_synthetic}
\vspace{-0.5cm}
\end{figure}

We begin by applying SING to perform latent state inference in three synthetic datasets. Throughout this section, we assume that all hyperparameters $\mbTheta$ are fixed to their true values during model fitting, allowing us to directly assess the quality of latent state inference of SING compared to several baseline methods. We provide full experimental details for this section in \Cref{app:sec:experiments_inference}.

\textbf{Linear dynamics, Gaussian observations} \quad First, we simulate 30 trials of a 2D latent SDE with linear drift exhibiting a counter-clockwise stable spiral using Euler–Maruyama discretization (\Cref{fig:inference_synthetic}A). Then, we generate 10D Gaussian observations conditional on the latents (\Cref{fig:inference_synthetic}B). We fit SING for 20 iterations and find that it is able to accurately recover the true latent states (\Cref{fig:inference_synthetic}C).

We evaluate the inference quality of SING using root mean-squared error (RMSE) between the true and inferred latent states over iterations (\Cref{fig:inference_synthetic}D). We compare SING to three baselines: (1) Kalman smoothing (KS) \cite{sarkka2023bayesian}, (2) VDP \cite{archambeau2007gaussian}, and (3) direct optimization of $\cL_{\text{approx}}(\mbeta)$ using Adam \cite{kingma2014adam}. Since this is a linear Gaussian model, exact posterior inference of $q(\mbx_{0:T})$ is tractable and can be obtained via KS. In \Cref{fig:inference_synthetic}D, we find that SING recovers the true posterior in a single iteration, consistent with the known result that NGVI performs one-step exact inference in conjugate models (see derivation in \Cref{app:sec:conjugate_setting}). In contrast, VDP takes several iterations to converge in this setting; we provide theoretical justification for this finding in \Cref{app:sec:conjugate_setting}. Finally, while Adam uses momentum and adaptive step sizes, it is outperformed by both SING and VDP in this experiment.

\begin{figure}[t!]
\centering
\includegraphics[width=1.0\textwidth]{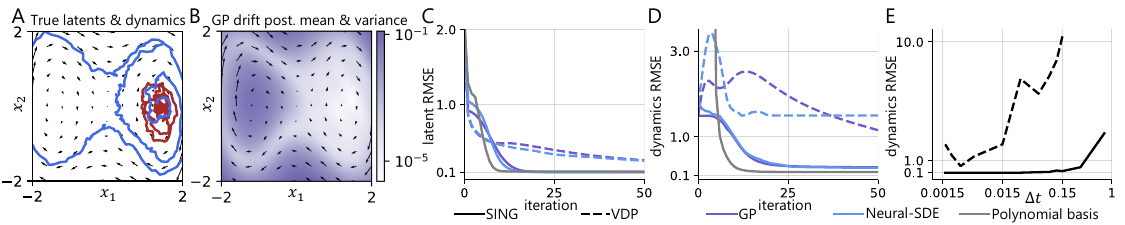}
\vspace*{-0.5cm}
\caption[]
{A comparison of SING and VDP for drift estimation. \textbf{A}: True latent trajectories evolve according to the 2D Duffing equation. \textbf{B}: Posterior mean (arrows) and variance (shading) corresponding to a GP prior drift with RBF kernel. SING-GP places high posterior uncertainty in regions unseen by the (true) latent trajectories. \textbf{C, D}: Latent RMSE and dynamics RMSE for SING and VDP across three classes of prior drift (GP, neural-SDE, polynomial basis). SING consistently outperforms VDP for hyperparameter learning, across all three drift classes.
\textbf{E}: Dynamics RMSE for SING and VDP as a function of the grid size $\Delta t$ for the neural network drift.}
\label{fig:learning-duffing}
\vspace*{-0.7cm}
\end{figure}

\begin{wrapfigure}{r}{0.32\textwidth}
\vspace{-14pt}
\centering
\begin{minipage}{\linewidth}
    \centering
    \includegraphics[width=0.92\linewidth]{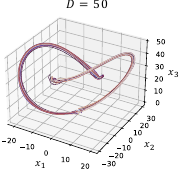}\\[3pt]
    \begin{tabular}{@{}c cc@{}}
        \toprule
        \multirow{2}{*}{$\mathbf{D}$} & \multicolumn{2}{c}{\textbf{latents RMSE}} \\
        \cmidrule(lr){2-3}
        & \textbf{Monte Carlo} & \textbf{quadrature} \\
        \midrule
        3  & $0.1419$ & $0.1417$ \\
        5  & $0.1870$ & $0.1869$ \\
        10 & $0.2726$ & \xmark \\
        20 & $0.3978$ & \xmark \\
        50 & $0.6778$ & \xmark \\
        \bottomrule
    \end{tabular}
\end{minipage}
\caption{\textbf{(Top)} Recovered latent trajectories in the first 3 dimensions of a 50-dimensional embedded Lorenz attractor. \textbf{(Bottom)} Comparison of latents RMSE between Monte Carlo- and quadrature-based approximations of expectations. Monte Carlo results are averaged over 5 random seeds, with negligible standard errors (omitted).}
\label{fig:embedded_lorenz}
\vspace{-16pt} 
\end{wrapfigure}

\textbf{Place cell model} \quad Next, we simulate a dataset inspired by \textit{place cells} in the hippocampus \cite{moser2008place}. Place cells fire selectively when an animal occupies specific locations in space, and are often modeled using location-dependent tuning curves. We simulate 30 trials of latent states from a Van der Pol oscillator to represent animal trajectories through 2D space. We then construct tuning curves for $N = 8$ neurons using radial basis functions with centers placed along the latent trajectories (\Cref{fig:inference_synthetic}E). Finally, we simulate spike counts for neuron $n$ in time bin $\tau_i$ as $y_{n}(\tau_i) \sim \text{Pois}(f_n(\mbx(\tau_i)))$ (\Cref{fig:inference_synthetic}F). 

This setup is particularly challenging due to the non-conjugacy between the nonlinear dynamics and Poisson observations. Nevertheless, SING accurately infers the underlying latent states (\Cref{fig:inference_synthetic}G). We compare SING to several baselines, including VDP and Adam on the natural parameters as in the previous experiment (\Cref{fig:inference_synthetic}H). Since KS does not apply in this non-conjugate setting, we instead use the discrete-time conditional moments Gaussian smoother (CMGS) \cite{linderman2025dynamax} as a third baseline, which assumes linear Gaussian approximations of the dynamics and observation models. Altogether, we find that SING outperforms all baselines in both convergence speed and final accuracy. In contrast, VDP requires careful tuning of its learning rate for stability and CMGS relies on linear approximations that incur error in highly nonlinear models like this one. These results highlight SING's robustness in settings with nonlinear dynamics and observation models, while existing methods struggle.

\textbf{Embedded Lorenz attractor} \quad We next demonstrate that SING can perform accurate inference in high-dimensional latent dynamical systems. To do so, we embed the Lorenz attractor into latent spaces of dimension $D = 3,5,10,20,50$ by sampling the first three coordinates according an SDE with Lorenz drift and the remaining dimensions according to a random walk. For each $D$, we simulate 30 trials and generate 100-dimensional Gaussian observations. We fit SING for 1000 iterations using a single Monte Carlo sample per expectation, while the quadrature baseline uses 6 nodes per latent dimension (\Cref{sec:transition_expectations}). Full experimental details can be found in \Cref{app:sec:embedded_lorenz}.

\Cref{fig:embedded_lorenz} (top) shows that SING accurately recovers the underlying attractor even in a 50-dimensional latent space. The table in \Cref{fig:embedded_lorenz} (bottom) further shows that Monte Carlo-based expectations match the accuracy of quadrature for low latent dimensions ($D \le 5$), while remaining tractable as $D$ increases. This demonstrates that Monte Carlo enables scalable and memory-efficient inference and does not sacrifice accuracy in the low-dimensional regime where both methods are feasible.

\vspace*{-0.3cm}
\subsection{Drift estimation on synthetic data}\label{subsec:drift_estimation_results}
\vspace*{-0.2cm}
Next, we demonstrate that the fast, stable latent state inference enabled by SING results in improvements to drift estimation and parameter learning. We will seek to learn the drift parameters $\mbtheta$ together with the output parameters $\{ \mbC, \mbd\}$. All experimental details can be found in \Cref{app:sec:learning_drift}.

Our synthetic data example consists of a 2D latent nonlinear dynamical system evolving according to the Duffing equation, which has two stable fixed points at $(\pm \sqrt{2}, 0)$ in addition to an unstable fixed point at the origin (\Cref{fig:learning-duffing}A). We sample four trials starting near the rightmost fixed point with 10D Gaussian observations at 30\% of the grid points. This example is challenging insofar as observations are sparse and the true latent trajectories do not cover the entire latent space.

We consider three classes of prior drift functions for our experiments: (i) a Gaussian process drift with radial basis function (RBF) kernel (\Cref{sec:gp_sde}); (ii) a neural network drift; and (iii) a linear combination of polynomial basis functions up to order three. We perform 50 variational EM iterations, alternating between performing inference and learning the hyperparameters $\mbTheta$. During the learning step, we optimize the drift parameters $\mbtheta$ using Adam on the ELBO. The updates for the output parameters $\{ \mbC, \mbd\}$ can be performed in closed form (see \Cref{app:sec:experiments_inference}).

Our results demonstrate that SING facilitates accurate and efficient parameter learning for diverse classes of prior drift. Although the GP prior constitutes a nonparametric family of drift functions, it performs comparably to flexible parametric drift classes, both in terms of latents RMSE (\Cref{fig:learning-duffing}C) and RMSE between the learned and true dynamics (\Cref{fig:learning-duffing}D). Unlike the parametric drift classes, though, SING-GP quantifies the uncertainty in the inferred dynamics about the leftmost fixed point (\Cref{fig:learning-duffing}B). Whereas SING recovers the true latents and dynamics in fewer than 25 iterations, VDP  results in worse performance on both metrics in 50 vEM iterations. In \Cref{fig:learning-duffing}E, we observe that for both SING and VDP, decreasing $\Delta t$ results in more faithful approximation to the continuous-time ELBO $\cL_{\text{cont}}$, and hence more accurate recovery of the dynamics.
\vspace*{-0.3cm}
\subsection{Runtime comparisons}\label{sec:runtime_comparisons}
\vspace*{-0.2cm}
\begin{figure}[t!]
\centering
\includegraphics[width=0.95\textwidth]{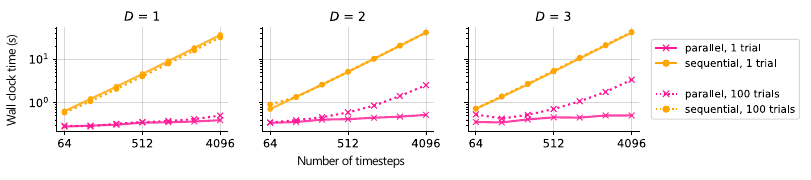}
\vspace*{-0.2cm}
\caption[]
{Runtime comparisons between parallelized SING and its sequential counterpart.}
\label{fig:runtime}
\vspace*{-0.5cm}
\end{figure}
Here, we demonstrate the computational speedups enabled by using associative scans to parallelize the natural to mean parameter conversion in SING (\Cref{sec:parallelizing}). Recall that while the standard sequential method of performing this conversion takes $\cO(D^3 T)$, our parallelized version takes $\cO(D^3 \log T)$. In \Cref{fig:runtime}, we compare the wall clock time of these two approaches for inference in a LDS with Gaussian observations. For different values of $D$ and $T$, we fit SING for 20 iterations over 5 randomly sampled datasets from this model and report the average runtime on an NVIDIA A100 GPU.

On single trials, parallel SING achieves nearly constant runtime scaling with sequence length, and is roughly $100 \times$ faster than sequential SING for $T = 4096$. This improvement holds across all tested values of $D$. Even for batches of $100$ trials, where parallel resource constraints become more significant, parallel SING maintains favorable scaling compared to the sequential baseline. We provide experiment details and additional results over different batch sizes in \Cref{app:sec:runtime_exp}.

\subsection{Application to modeling neural dynamics during 
aggression}\label{subsec:real_data_results}
Finally, we apply SING-GP to the challenging task of modeling latent neural dynamics of aggressive behavior in freely moving mice. Prior work identified approximate line attractors governing neural dynamics in this setting \cite{vinograd2024causal,nair2023approximate}. More recently, \citet{hu2024modeling} fit GP-SDE models (see \Cref{sec:gp_sde}) with a smooth, piecewise linear kernel to uncover such dynamics. However, they relied on VDP-GP for inference, which we demonstrated can be slow to converge and may do so to suboptimal solutions.

We revisit these analyses using a dataset from \citet{vinograd2024causal}, which consists of calcium imaging of ventromedial hypothalamus neurons from a mouse interacting with two intruders (\Cref{fig:real_data}A). We model this data using GP-SDEs that incorporate external inputs to capture intruder effects (see \Cref{app:sec:inputs_model} for more details), and we choose the same kernel from \citet{hu2024modeling} to encode interpretable structure into the drift. We fit these models both using SING-GP and VDP-GP for $100$ variational EM iterations. Full experimental details can be found in \Cref{app:sec:real_data_exp}.

Our SING-GP model captures variance in observed neural activity (\Cref{fig:real_data}B) and uncovers low-dimensional representations consistent with prior findings (\Cref{fig:real_data}C). Moreover, the GP-SDE enables posterior inference over slow points; we use this to verify that SING-GP indeed finds an approximate line attractor (\Cref{fig:real_data}C, purple). In \Cref{fig:real_data}D, we assess the effect of discretization in each method on final expected log-likelihood and convergence speed. SING-GP is robust to $\Delta t$ in both metrics, while VDP-GP's performance suffers for large $\Delta t$ values. This highlights a key advantage of the SING framework: robustness to $\Delta t$ enables accurate, memory-efficient inference at coarse discretizations. Finally, we evaluate the quality of model fit via a forward simulation metric (\Cref{fig:real_data}E). While SING-GP maintains high predictive $R^2$ for trajectories simulated up to 15 seconds forward, VDP-GP's predictive performance degrades more rapidly and is again sensitive to the choice of $\Delta t$.

\begin{figure}[t!]
\centering
\includegraphics[width=\textwidth]{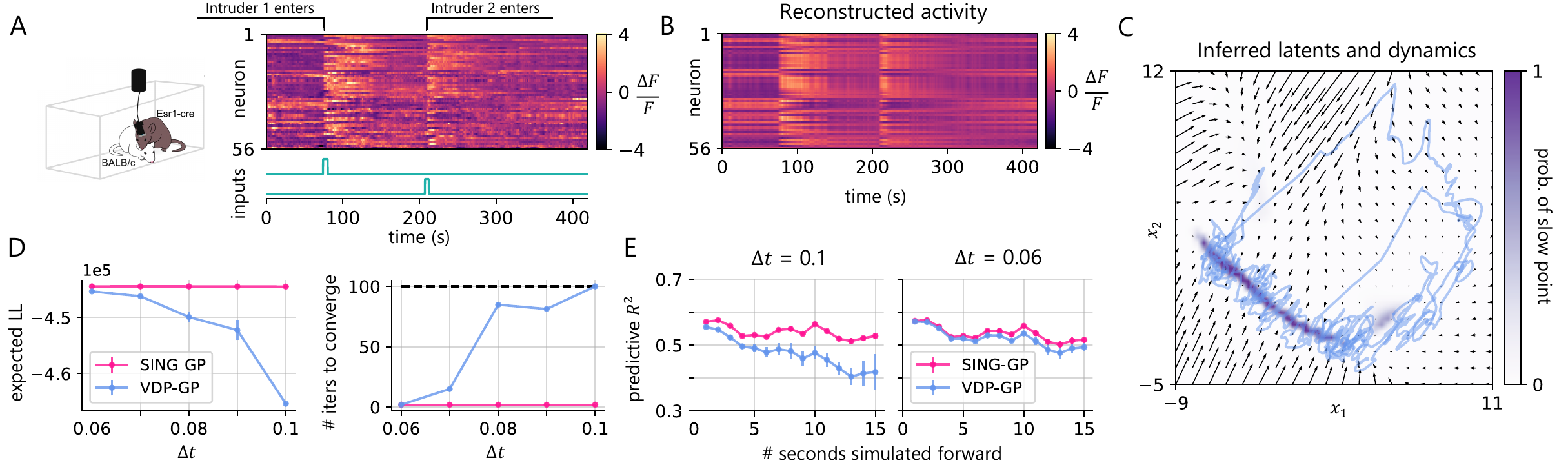}
\vspace*{-0.5cm}
\caption[]
{Results on modeling neural dynamics during aggression. \textbf{A}: Experimental setup, adapted from \citet{vinograd2024causal}. The data consists of calcium imaging from a mouse interacting with two intruders. We model intruder effects using step function inputs. \textbf{B}: Reconstructed neural activity from a SING-GP model fit. \textbf{C}: Inferred neural latents and dynamics from SING-GP, with high-probability slow point regions shown in purple. \textbf{D}: Comparison between SING-GP and VDP-GP on final expected log-likelihood and convergence speed for different discretizations. Error bars are $\pm$ 2SE over 5 initializations of $\mbTheta$. \textbf{E}: Comparison of a forward simulation $R^2$ metric (see \Cref{app:sec:real_data_exp} for details) to assess the goodness of fit for latents and dynamics.}
\label{fig:real_data}
\vspace*{-0.4cm}
\end{figure}

\vspace*{-0.3cm}
\section{Discussion}\label{sec:discussion}
\vspace*{-0.3cm}
We introduced SING, a method for approximate posterior inference in latent SDE models that harnesses the exponential family structure of the variational posterior to achieve rapid, stable convergence. By deriving tractable NGVI updates and parallelizing the natural-to-mean parameter conversion via associative scans, SING scales to long time series and handles nonlinear dynamics along with complex observation models with ease. Although we optimize a discrete-time ELBO, our theoretical bound (\Cref{thm:ELBO-converge}) ensures that SING closely approximates the continuous-time objective. We show via extensive experiments, including an application to real neural data, that SING outperforms prior methods in inference accuracy, drift estimation, and robustness to discretization. 

However, SING makes simplifying assumptions that suggest several directions for future work. First, rather than fixing a discretization $\mbtau$ of $[0, T_{\text{max}}]$ a priori, one could consider adaptively learning locations at which to discretize inference in order to better approximate rapidly changing dynamics. Second, future work should explore incorporating learnable locations of inducing points in SING-GP to allow for more flexible and scale-aware GP priors on the drift. Nevertheless, our results already demonstrate SING’s effectiveness in a range of challenging practical settings. Altogether, SING is positioned to be a reliable and broadly applicable tool for facilitating scientific discovery in complex dynamical systems.

\begin{ack}
This work was supported by grants from the NIH BRAIN Initiative (U19NS113201, R01NS131987, R01NS113119, \& RF1MH133778), the NSF/NIH CRCNS Program (R01NS130789),  and the Sloan, Simons, and McKnight Foundations. Henry Smith is supported by the NSF Graduate Research Fellowship (DGE-2146755) and the Knight-Hennessy graduate fellowship. We thank Aditi Jha, Etaash Katiyar, Hannah Lee, and other members of the Linderman Lab for helpful feedback throughout this project. We thank Brian Trippe for his feedback on framing the contributions of \Cref{thm:ELBO-converge}. We also thank Adi Nair, Amit Vinograd, and David Anderson for their assistance with our analysis of the hypothalamic data \citep{vinograd2024causal}. The authors have no competing interests to declare.
\end{ack}

\clearpage
\bibliographystyle{unsrtnat}
\bibliography{refs}
\clearpage
\appendix
\section*{Appendix}
\begin{itemize}
    \item[~] Appendix \ref{app:sec:technical}: Technical assumptions
    \item[~] Appendix \ref{app:sec:expo-family}: Exponential families and natural gradients
    \begin{itemize}
        \item \ref{app:sec:expo-family-overview}: Overview of exponential families
        \item \ref{app:sec:expo-family-natural-gradients}: Natural gradients and exponential families
    \end{itemize}
    \item[~] Appendix 
    \ref{app:sec:elbo}: ELBO derivation
    \item[~] Appendix \ref{app:sec:ngvi_updates}: Derivation of SING updates
    \item[~] Appendix \ref{app:sec:theory}: Proofs for continuous-time approximation
    \item[~] Appendix \ref{app:sec:transition_expectations}: Computing expectations of prior transition densities
    \item[~] Appendix \ref{app:sec:parallel_scan}: Parallelizing SING
    \begin{itemize}
        \item \ref{app:sec:sequential_details}: Sequential computation of log normalizer
        \item \ref{app:sec:parallel_details}: Parallelized computation of log normalizer
    \end{itemize}
    \item[~] Appendix \ref{app:sec:vdp}: Background on Archambeau et al., 2007 (VDP) 
    \item[~] Appendix \ref{app:sec:compare-verma}: Comparison of SING to Verma et al., 2024
    \begin{itemize}
        \item \ref{app:sec:overview-verma}: Overview of  Verma et al., 2024
        \item \ref{app:sec:verma-differences}: Differences between SING and Verma et al., 2024
    \end{itemize}
    \item[~] Appendix \ref{app:sec:conjugate_setting}: Verma and SING in the conjugate setting

    \item[~] Appendix \ref{app:sec:sing_gp}: SING-GP: Drift estimation with Gaussian process priors
    \begin{itemize}
        \item \ref{app:sec:gpsde_generative}: GP generative model
        \item \ref{app:sec:gpsde_elbo}: Sparse GP approximations for posterior drift inference
        \item \ref{app:sec:sing-gp-inference}: Inference and learning using SING-GP
        \item \ref{app:sec:posterior_f_unseen}: Posterior inference of the drift at unseen points
        \item \ref{app:sec:inputs_model}: Incorporating inputs
    \end{itemize}
    \item[~] Appendix \ref{app:sec:experiments}: Experimental details and additional results
    \begin{itemize}
        \item \ref{app:sec:experiments_inference}: Inference on synthetic data
        \begin{itemize}
            \item \ref{app:sec:experiments_gaussian}: Linear dynamics, Gaussian observations
            \item \ref{app:sec:place_cell}: Place cell model
            \item \ref{app:sec:embedded_lorenz}: Embedded Lorenz attractor
        \end{itemize} 
        \item \ref{app:sec:n_sde_comp}: Comparison to neural-SDE variational posterior
        \item \ref{app:sec:learning_drift}: Drift estimation on synthetic data
        \item \ref{app:sec:runtime_exp}: Runtime comparisons
        \item \ref{app:sec:real_data_exp}: Application to modeling neural dynamics during aggression
    \end{itemize}
\end{itemize}

\newpage
\section{Technical assumptions}\label{app:sec:technical}
Throughout our paper, we consider probability measures defined on the measurable space $(C[0, T_{\text{max}}]^D, \cB_{C[0, T_{\text{max}}]^D})$, where $C[0, T_{\text{max}}]^D$ is the Banach space of continuous functions endowed with the supremum norm.

In order to ensure the SDEs \cref{eqn:latent_sde} and \cref{eqn:var-family} admit unique, strong solutions, we will assume $\mbf$ and $\mbf_q$ satisfy the standard Lipschitz and linear growth conditions:

\textbf{Assumption (Drift Regularity):} There exist constants $C_1, C_2 \geq 0$ such that for all $\mbx, \mby \in\bbR^D$ and $t,s \in [0,T_{\max}]$:
\begin{align*}
    \|\mbf(\mbx,t)- \mbf(\mby,s)\|\;\le\; C_1\bigl(\|\mbx - \mby\|+|t-s|\bigr),
    \quad
    \|\mbf(\mbx,t)\|\;\le\; C_2\bigl(1+\|\mbx\|\bigr),
\end{align*}
and the same bounds hold for \(\mbf_q\).

In addition, we will assume all SDEs are sufficiently regular such that Girsanov's Theorem (\citet{oksendal2013stochastic}, Theorem 8.6.4) can be applied. See, for example, Novikov's criterion. For the sake of completeness, we restate this result below:

\begin{theorem}[Girsanov's Theorem for SDEs]\label{thm:girsanov}
    Suppose the SDEs
    \begin{align}
        &\nu_1: d\mbx(t) = b_1(\mbx(t), t) dt + \mbSigma^{1/2}d\mbw(t), \quad 0 \leq t \leq T_{\text{max}}\label{eq:girsanov-sde}\\
        &\nu_2: d\mbx(t) = (b_1(\mbx(t), t) + b_2(\mbx(t), t))  dt + \mbSigma^{1/2}d\mbw(t), \quad 0 \leq t \leq T_{\text{max}}
    \end{align}
    have the same initial law $\nu_1(\mbx(0)) = \nu_2(\mbx(0))$ and admit unique, strong solutions. Then $\nu_2$ is absolutely continuous with respect to $\nu_1$ and the Radon-Nikodym density is
    \begin{align*}
        \cZ_{T_{\text{max}}}(\nu_1 || \nu_2) := \exp\left\{ \int_0^{T_{\text{max}}} \langle \mbSigma^{-1/2} b_2(\mbx(t), t), d\mbw^{\nu_1}(t) \rangle - \frac{1}{2} \int_0^{T_{\text{max}}} \|\mbSigma^{-1/2} b_2(\mbx(t), t) \|^2 dt \right\}.
    \end{align*}
    where $(\mbw^{\nu_1}(t))_{0 \leq t \leq T_{\text{max}}}$ is a $\nu_1$-Brownian motion. In particular, for any bounded functional $\Phi$ defined on $C[0, T_{\text{max}}]^D$, 
    \begin{align*}
        \bbE_{\nu_2}[\Phi(\mbx)] = \bbE_{\nu_1}\left[\Phi(\mbx)\cZ_{T_{\text{max}}}(\nu_1 || \nu_2)\right].
    \end{align*}    
\end{theorem}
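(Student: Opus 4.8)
The plan is to realize the Radon--Nikodym density $\cZ_{T_{\text{max}}}(\nu_1 \| \nu_2)$ as the terminal value of an exponential (Dol\'eans--Dade) martingale, use it to define a tilted measure $\mathbb{Q}$ on the path space $(C[0,T_{\text{max}}]^D, \cB_{C[0,T_{\text{max}}]^D})$, and then show by the Girsanov transformation for Brownian motion that the canonical process solves $\nu_2$ under $\mathbb{Q}$. Matching laws via uniqueness then forces $\mathbb{Q} = \nu_2$, which simultaneously yields the absolute continuity, the stated density, and the change-of-expectation formula.

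Concretely, I would work on the filtered probability space carrying the strong solution $\mbx$ of $\nu_1$ and its driving Brownian motion $\mbw^{\nu_1}$. Setting $\theta(t) := \mbSigma^{-1/2} b_2(\mbx(t), t)$, define the exponential process
\begin{equation*}
Z_t = \exp\left\{ \int_0^t \langle \theta(s), d\mbw^{\nu_1}(s) \rangle - \tfrac{1}{2} \int_0^t \|\theta(s)\|^2 \, ds \right\},
\end{equation*}
so that $Z_{T_{\text{max}}} = \cZ_{T_{\text{max}}}(\nu_1 \| \nu_2)$. By Novikov's criterion, which we have assumed, $Z_t$ is a genuine martingale with $\bbE_{\nu_1}[Z_{T_{\text{max}}}] = 1$; hence $d\mathbb{Q} := Z_{T_{\text{max}}} \, d\nu_1$ is a probability measure with $\mathbb{Q} \ll \nu_1$ and density $\cZ_{T_{\text{max}}}$. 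The Girsanov theorem for Brownian motion then guarantees that $\tilde{\mbw}(t) := \mbw^{\nu_1}(t) - \int_0^t \theta(s)\, ds$ is a $\mathbb{Q}$-Brownian motion. Substituting $\mbSigma^{1/2} d\mbw^{\nu_1}(t) = \mbSigma^{1/2} d\tilde{\mbw}(t) + b_2(\mbx(t),t)\, dt$ into the $\nu_1$-dynamics gives $d\mbx(t) = (b_1 + b_2)(\mbx(t),t)\, dt + \mbSigma^{1/2} d\tilde{\mbw}(t)$, so under $\mathbb{Q}$ the process $\mbx$ is a solution of $\nu_2$ driven by the $\mathbb{Q}$-Brownian motion $\tilde{\mbw}$. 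The initial law is unchanged since $Z_0 = 1$ forces $\mathbb{Q}$ and $\nu_1$ to agree on the time-$0$ $\sigma$-algebra, and $\nu_1(\mbx(0)) = \nu_2(\mbx(0))$ by hypothesis.

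To finish, I would invoke uniqueness: because $\nu_2$ admits a unique strong solution, its law on path space is unique (strong uniqueness implies uniqueness in law), so the $\mathbb{Q}$-law of $\mbx$ must coincide with $\nu_2$. Thus $\mathbb{Q} = \nu_2$, which gives $\nu_2 \ll \nu_1$ with $d\nu_2/d\nu_1 = \cZ_{T_{\text{max}}}$, and for any bounded functional $\Phi$ the identity $\bbE_{\nu_2}[\Phi(\mbx)] = \bbE_{\mathbb{Q}}[\Phi(\mbx)] = \bbE_{\nu_1}[\Phi(\mbx)\, \cZ_{T_{\text{max}}}]$ follows immediately from the definition of $\mathbb{Q}$. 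One care point is that $\cZ_{T_{\text{max}}}$, as written, involves the stochastic integral against $\mbw^{\nu_1}$; to read it as a genuine functional on path space one substitutes $d\mbw^{\nu_1}(t) = \mbSigma^{-1/2}(d\mbx(t) - b_1(\mbx(t),t)\,dt)$, rendering $\cZ_{T_{\text{max}}}$ measurable with respect to the path $\mbx$.

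The main obstacle is the step establishing that $Z_t$ is a true martingale rather than merely a nonnegative local martingale (equivalently a supermartingale): if the mass $\bbE_{\nu_1}[Z_{T_{\text{max}}}]$ were strictly less than one, $\mathbb{Q}$ would fail to be a probability measure and the identification would break down. This is exactly the role played by the assumed Novikov condition, and it is where the drift-regularity and integrability hypotheses of the setup are consumed. The remaining subtlety, the passage from ``$\mbx$ solves $\nu_2$ under $\mathbb{Q}$'' to ``$\mathbb{Q} = \nu_2$,'' is clean given the postulated uniqueness of strong solutions but would require care if only weak existence were available.
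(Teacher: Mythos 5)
Your proof is correct, but it is worth noting that the paper does not actually prove this statement: Theorem~\ref{thm:girsanov} is restated ``for the sake of completeness'' as a known result, with the proof deferred to \citet{oksendal2013stochastic} (Theorem 8.6.4), and the surrounding text simply \emph{assumes} enough regularity (e.g., Novikov's criterion) for it to apply. What you have written is a faithful reconstruction of the standard argument behind that citation: the Dol\'eans--Dade exponential $Z_t$, Novikov's condition to upgrade it from a nonnegative local martingale (supermartingale) to a true martingale with unit mean, the classical Girsanov transformation showing $\tilde{\mbw}$ is a $\mathbb{Q}$-Brownian motion so that $\mbx$ solves the $\nu_2$-dynamics under $\mathbb{Q}$, and identification of laws to conclude $\mathbb{Q} = \nu_2$. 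Two details deserve to be made explicit if you wanted this fully rigorous. First, ``strong uniqueness implies uniqueness in law'' is the Yamada--Watanabe theorem (pathwise uniqueness plus weak existence yields uniqueness in law); this is precisely what lets you pass from ``under $\mathbb{Q}$, $\mbx$ is a \emph{weak} solution of $\nu_2$ driven by $\tilde{\mbw}$'' to the equality of path-space laws, and under the paper's Lipschitz drift assumption it applies. Second, your closing observation --- rewriting $d\mbw^{\nu_1}(t) = \mbSigma^{-1/2}(d\mbx(t) - b_1(\mbx(t),t)\,dt)$ so that $\cZ_{T_{\text{max}}}$ becomes a measurable functional of the path --- is exactly what is needed for the density statement to make sense on $(C[0,T_{\text{max}}]^D, \cB_{C[0,T_{\text{max}}]^D})$, where $\nu_1$ and $\nu_2$ live as measures, and you handle it correctly. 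You have also correctly isolated the one genuine analytic hypothesis (integrability of the exponential), which the paper, like you, simply assumes.
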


In particular, Theorem \ref{thm:girsanov} implies that the KL divergence between \cref{eqn:latent_sde} and \cref{eqn:var-family} is given by
\begin{align*}
    \KL{q}{p} &= \bbE_q\left[\log\left(\frac{q(\mbx(0))}{p(\mbx(0))}\cZ_{T_{\text{max}}}(p||q)\right)\right]\\
    &= \KL{q(\mbx(0))}{p(\mbx(0))} + \frac{1}{2} \int_0^{T_{\text{max}}} \bbE_{q}\| \mbf(\mbx(t),t) -  \mbf_q(\mbx(t), t)\|_{\mbSigma^{-1}} dt.
\end{align*}
Here, and throughout the appendix, we adopt the notation $\|\mbz\|_{\mbM} := \mbz^\top \mbM \mbz$ for $\mbM \in \bbR^{p \times p}$ symmetric, positive semi-definite.

\section{Exponential families and natural gradients}\label{app:sec:expo-family}

\subsection{Overview of exponential  families}\label{app:sec:expo-family-overview}

Consider the collection of probability densities indexed by $\mbeta \in \bbR^p$
\begin{align*}
    &\bar{q}(\mbz | \mbeta) = h(\mbz) \exp(\langle \mbeta, \mbT(\mbz) \rangle - A(\mbeta)), \quad A(\mbeta) =  \log \left( \int h(\mbz) \exp(\langle \mbeta, \mbT(\mbz) \rangle) \nu(d\mbz)  \right)
\end{align*}
with respect to $\nu$, which we consider to be either (i) $p$-dimensional  volume (i.e., Lebesgue measure) or (ii) counting tuples of $p$ integers (i.e., counting measure).\footnote{Exponential families can be defined for an arbitrary measure $\nu$, but these are the two most common instances.} $\bar{q}(\mbz | \mbeta)$ is a well-defined probability density whenever $A(\mbeta) < \infty$. The set $\mbXi = \{\mbeta \in \bbR^p: A(\mbeta) < \infty\}$ is the natural parameter space of the exponential family, and $\mbeta$ are its natural parameters. We assume $\mbXi$ is an open set, in which case the exponential family is said to be ``regular''. We call $h(\mbz)$ the base measure of the exponential family; $\mbT(\mbz) \in \bbR^p$ are its sufficient statistics; and $A(\mbeta)$ is its log normalizer or log partition function. Corresponding to the sufficient statistics are the mean parameters $\mbmu \in \bbR^p$ of the exponential family, which are simply the expectations of the sufficient statistics under the exponential family, $\mbmu := \bbE_q [\mbT(\mbz)]$.

As an example, consider the $p$-dimensional multivariate normal distribution $\cN(\mbz|\mbxi, \mbLambda)$ with unknown mean $\mbxi$ and unknown, positive-definite covariance $\mbLambda$. Then we can write this family in exponential family form as follows:
\begin{align*}
    \bar{q}(\mbz| \mbxi, \mbLambda) &= (2\pi)^{-p/2} |\mbLambda|^{-1/2}\exp\left( -\frac{1}{2} \| \mbz - \mbxi\|_{\mbLambda^{-1}}^2 \right)\\
     &= (2\pi)^{-p/2} \exp\left( -\frac{1}{2} \text{tr}\left( \mbLambda^{-1} \mbz \mbz^\top \right) + \mbz^\top \mbLambda^{-1} \mbxi -\frac{1}{2}\{\log |\mbLambda| + \mbxi^\top \mbLambda^{-1}\mbxi\} \right).
\end{align*}
Let us identify $\mbeta = (\mbeta_1, \mbeta_2) = (\text{vec}(\mbLambda^{-1}), \mbLambda^{-1} \mbxi)$ as well as $h(\mbz) = (2\pi)^{-p/2}$, where $\text{vec}(\mbM), \mbM \in \bbR^{p\times p}$ represents the $p^2$-dimensional vector representation of the matrix $\mbM$. The natural parameter space is $\mbXi = \{ \text{vec}(\mbM) : \mbM \in \bbR^{p \times p} \ \text{symmetric, positive-definite}\} \times \bbR^p$. The sufficient statistics of the exponential family are $\mbT(\mbz) = (-\frac{1}{2}\text{vec}(\mbz\mbz^\top), \mbz)$ and the mean parameters are $\mbmu = (- \frac{1}{2}\text{vec}(\mbLambda + \mbxi\mbxi^\top), \mbxi)$. It remains to compute the log normalizer $A(\mbeta)$. In particular, 
\begin{align*}
    &\log|\mbLambda| = -  \log|\mbLambda^{-1}| = - \log|\mbeta_1|\\
    &\mbxi^\top \mbLambda^{-1}\mbxi = \mbeta_2^\top (\mbeta_1)^{-1} \mbeta_2
\end{align*}
which implies $A(\mbeta) = \frac{1}{2} ( \mbeta_2^\top (\mbeta_1)^{-1} \mbeta_2 - \log|\mbeta_1|)$. Here we slightly abuse notation by treating $\mbeta_1$ interchangeably as a matrix and a vector.

Our definition of the exponential family $\bar{q}(\mbz | \mbeta)$ leads us to two important results regarding the log normalizer.

\begin{lemma}
\label{app:lm:expo-family-moments}
The log normalizer $A(\mbeta)$ is infinitely differentiable on the natural parameter space $\mbXi$. Moreover, the log normalizer satisfies
\begin{align}\label{app:eq:log-norm-identities}
    \nabla_{\mbeta} A(\mbeta) = \bbE_{\bar{q}} [\mbT(\mbz)] = \mbmu, \quad \nabla_{\mbeta}^2 A(\mbeta) = \text{Cov}_{\bar{q}} [\mbT(\mbz)].
\end{align}
\end{lemma}
\begin{proof}
    A proof of the first statement can be found in \citet[][Theorem 2.7.1]{lehmann1986testing}. For the second, differentiate the log normalizer twice
    \begin{align*}
        \nabla_{\mbeta} A(\mbeta) &= \nabla_{\mbeta} \log \left( \int h(\mbz) \exp(\langle \mbeta, \mbT(\mbz) \rangle) \nu(d\mbz)  \right)\\
        &= \frac{\int T(\mbz) h(\mbz) \exp(\langle \mbeta, \mbT(\mbz)) \nu(d\mbz)}{\int h(\mbz) \exp(\langle \mbeta, \mbT(\mbz) \rangle) \nu(d\mbz)}\\
        &= \int T(\mbz) h(\mbz) \exp(\langle \mbeta, \mbT(\mbz) \rangle - A(\mbeta)) \nu(d\mbz)\\
        &= \bbE_{\bar{q}} [\mbT(\mbz)]\\
        \nabla_{\mbeta}^2 A(\mbeta) &= \int \nabla_{\mbeta} \left\{\mbT(\mbz) h(\mbz) \exp(\langle \mbeta, \mbT(\mbz) \rangle - A(\mbeta)) \right\} \nu(d\mbz)\\
        &= \bbE_{\bar{q}}[\mbT(\mbz) \mbT(\mbz)^\top] -  (\bbE_{\bar{q}}[ \mbT(\mbz)])(\bbE_{\bar{q}} [\mbT(\mbz)])^\top\\
        &= \text{Cov}_{\bar{q}} [\mbT(\mbz)].
    \end{align*}
\end{proof}

As an immediate consequence of the identities in  \cref{app:eq:log-norm-identities}, we can give an equivalent expression for the Fisher information matrix of the exponential family $\bar{q}(\mbz | \mbeta)$.

\begin{lemma}\label{app:lm:jacobian-expo}
    Consider $\bar{q}(\mbz|\mbeta)$ constituting an exponential family. And let $\frac{d \mbmu(\mbeta)}{d \mbeta} = \nabla_{\mbeta}^2 A(\mbeta)$ be the Jacobian of the mapping from natural to mean parameters. Then the following are equivalent:
    \begin{align*}
        \nabla_{\mbeta}^2 A(\mbeta) = \text{Cov}_{\bar{q}(\mbz|\mbeta)} [\mbT(\mbz)] = \cF(\mbeta),
    \end{align*}
    where $\cF(\mbeta)$ denotes the Fisher information of the exponential family.
\end{lemma}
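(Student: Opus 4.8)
The plan is to establish the claimed identities by directly computing the score function and invoking the moment identities already proved in Lemma~\ref{app:lm:expo-family-moments}. The first equality $\nabla_{\mbeta}^2 A(\mbeta) = \text{Cov}_{\bar{q}(\mbz|\mbeta)} \mbT(\mbz)$ is precisely the second statement of Lemma~\ref{app:lm:expo-family-moments}, so the only new content is to verify that the Fisher information $\cF(\mbeta)$ coincides with these two quantities.

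First I would take the logarithm of the minimal exponential family density and differentiate in $\mbeta$. Since $\log \bar{q}(\mbz|\mbeta) = \log h(\mbz) + \langle \mbeta, \mbT(\mbz)\rangle - A(\mbeta)$, the score is
\[
\nabla_{\mbeta} \log \bar{q}(\mbz|\mbeta) = \mbT(\mbz) - \nabla_{\mbeta} A(\mbeta) = \mbT(\mbz) - \mbmu,
\]
where the last equality uses the first identity of Lemma~\ref{app:lm:expo-family-moments}. In particular the score has mean zero under $\bar{q}(\mbz|\mbeta)$, so its second moment is exactly its covariance.

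Substituting this expression into the definition $\cF(\mbeta) = \bbE_{\bar{q}(\mbz|\mbeta)}[\nabla_{\mbeta}\log \bar{q}\,\nabla_{\mbeta}\log \bar{q}^\top]$ then gives
\[
\cF(\mbeta) = \bbE_{\bar{q}(\mbz|\mbeta)}\big[(\mbT(\mbz) - \mbmu)(\mbT(\mbz) - \mbmu)^\top\big] = \text{Cov}_{\bar{q}(\mbz|\mbeta)} \mbT(\mbz),
\]
which closes the chain with $\nabla_{\mbeta}^2 A(\mbeta)$. As an alternative route, one observes that $\log \bar{q}(\mbz|\mbeta)$ is affine in $\mbeta$ apart from the $-A(\mbeta)$ term, so $\nabla_{\mbeta}^2 \log \bar{q}(\mbz|\mbeta) = -\nabla_{\mbeta}^2 A(\mbeta)$ is deterministic; the equivalent negative-expected-Hessian form of the Fisher information then equals $\nabla_{\mbeta}^2 A(\mbeta)$ at once.

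I do not expect a genuine obstacle, since each step is a one-line calculation justified by the smoothness of $A$ on $\text{int}(\mbXi)$ established in Lemma~\ref{app:lm:expo-family-moments}, which legitimizes differentiating under the integral sign. The only subtlety worth flagging is the role of minimality: it is not required for the three matrices to be equal, but it guarantees that their common value is strictly positive definite, i.e.\ that $\nabla_{\mbeta} A$ is injective with well-defined inverse $[\nabla A]^{-1}$ on $\text{int}(\cM)$. This positive-definiteness is exactly what ensures $\cF(\mbeta)$ is invertible in the natural-gradient update \cref{eqn:nat_grad_update}.
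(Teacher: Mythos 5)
Your proposal is correct and follows essentially the same route as the paper: the first equality is quoted from Lemma~\ref{app:lm:expo-family-moments}, and the second is obtained by computing the score $\nabla_{\mbeta}\log\bar q(\mbz|\mbeta)=\mbT(\mbz)-\mbmu$ and identifying its covariance with $\text{Cov}_{\bar q}[\mbT(\mbz)]$. Your extra observations (the mean-zero score reconciling the outer-product and covariance forms of $\cF$, the negative-Hessian alternative, and the remark that minimality is needed only for invertibility rather than for the equalities themselves) are all accurate but not part of the paper's argument.
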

\begin{proof}
    The first identity is exactly the second statement in \Cref{app:lm:expo-family-moments}. For the second, recall the identity 
    \begin{align*}
        \cF(\mbeta) = \text{Cov}_{\bar{q}(\mbz|\mbeta)} [\nabla_{\mbeta}\log \bar{q}(\mbz|\mbeta)]
    \end{align*}
    for the Fisher information. Plugging in the density of the exponential family, we get
    \begin{align*}
        \cF(\mbeta) = \text{Cov}_{\bar{q}(\mbz|\mbeta)} [\nabla_{\mbeta} \langle \mbeta, \mbT(\mbz)\rangle ] = \text{Cov}_{\bar{q}(\mbz|\mbeta)} [\mbT(\mbz)].
    \end{align*}
\end{proof}

Define the set of mean parameters attainable by some probability density with respect to $\nu$  
\begin{align*}
    \cM := \left\{ \mbmu  \in \bbR^p \ | \ \mbmu = \bbE_{\tilde{q}(\mbz)}[\mbT(\mbz)], \  \text{$\tilde{q}$ a probability density wrt $\nu$} \right\}.
\end{align*}
Note that, in this definition, $\tilde{q}(\mbz)$ does not necessarily belong to the exponential family $\bar{q}(\mbz | \mbeta)$.
Proposition \ref{app:lm:expo-family-moments} demonstrates that $\nabla_{\mbeta} A(\mbeta)$ maps from the interior of the natural parameter space $\mbXi$ to the mean parameter space $\cM$. In fact, this mapping is surjective onto $\text{relint}(\cM)$, the relative interior of $\cM$ (see \citet[][Theorem 3.3]{wainwright2008graphical}). Recall that for a convex set $C$,
\begin{align*}
    \textit{relint}(C) = \{ c \in C \ | \ \text{for every $c' \neq c$, there exists $d \in C, \lambda \in (0,1)$ such that $c = \lambda c' + (1-\lambda) d$}\}.
\end{align*}
A question of considerable interest is when, in addition, $\nabla_{\mbeta} A(\mbeta)$ will be one-to-one. In other words, when can each mean parameter $\mbmu$ be identified with a unique member of the exponential family.

An exponential family is said to be minimal if its sufficient statistics do not satisfy any linear constraints. If a $p$-dimensional exponential family is not minimal, it can be reduced to a $(p-1)$-dimensional exponential family by rewriting one sufficient statistic as a linear combination of the others. In the multivariate normal family example, the exponential family as written is not minimal, since $\mbz\mbz^\top$ is symmetric, and hence there are duplicate statistics along the off-diagonal. To reparameterize this exponential family in minimal form, one can define the symmetric vectorization operator $\text{svec}(\mbM) := (\mbM_{11}, \sqrt{2} \mbM_{21}, \mbM_{22}, \sqrt{2} \mbM_{31}, \sqrt{2} \mbM_{32}, \mbM_{33}, \ldots, \mbM_{pp}) \in \bbR^{p(p+1)/2}$ for $\mbM \in \bbR^{p \times p}$ symmetric and then rewrite the multivariate normal density as
\begin{align*}
    \bar{q}(\mbz| \mbxi, \mbLambda) = (2\pi)^{-p/2} \exp\left( -\frac{1}{2} \text{svec}(\mbLambda^{-1})^\top \text{svec}(\mbz\mbz^\top) + \mbz^\top \mbLambda^{-1} \mbxi -\frac{1}{2}\{\log |\mbLambda| + \mbxi^\top \mbLambda^{-1}\mbxi\} \right).
\end{align*}
The natural parameters of the minimal exponential family are $\mbeta = (\mbeta_1, \mbeta_2) = (\text{svec}(\mbLambda^{-1}), \mbLambda^{-1} \mbxi)$. Note that in minimal exponential family form, the natural parameter space has dimension $p(p+1)/2 + p$, whereas in the non-minimal form it has dimension $p^2 + p$.

It is a classical result (see \citet[][Proposition 3.2]{wainwright2008graphical}) that when $\bar{q}(\mbz|\mbeta)$ constitutes a minimal exponential family, then $\nabla_{\mbeta} A(\mbeta)$ is also one-to-one. Moreover, the mean parameter space $\cM$ is full-dimensional, meaning $\text{int}(\cM) = \text{relint}(\cM)$.
Observe that since $\nabla_{\mbeta} A(\mbeta)$ is then both surjective and one-to-one, the inverse mapping $[\nabla A]^{-1}: \text{int}(\cM) \to \mbXi$ is well-defined. 
As a consequence of the Inverse Function Theorem, we have that
for minimal exponential families, the inverse mapping $ [\nabla A]^{-1}(\mbmu)$ has Jacobian $[\text{Cov}_{\bar{q}(\mbz|\mbeta(\mbmu))} [\mbT(\mbz)]]^{-1} = F(\mbeta(\mbmu))^{-1}$.

Lastly, we give an expression for the KL divergence in an exponential family.

\begin{lemma}\label{app:lm:exp-family-ent}
    Suppose $\bar{q}(\mbz|\mbeta_1)$ and $\bar{q}(\mbz|\mbeta_2)$ belong to the same $p$-dimensional exponential family with natural parameters $\mbeta_1$ and $\mbeta_2$, respectively. Then
    \begin{align*}
        &\KL{\bar{q}(\mbz|\mbeta_1)}{\bar{q}(\mbz|\mbeta_2)} = \langle \mbeta_1 - \mbeta_2, \mbmu_1 \rangle - A(\mbeta_1) + A(\mbeta_2).
    \end{align*}
\end{lemma}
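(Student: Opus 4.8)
The plan is to prove the identity by a direct computation from the definition of the Kullback--Leibler divergence, exploiting the fact that both distributions belong to the \emph{same} exponential family. First I would unfold the definition,
\[
\KL{\bar{q}(\mbz|\mbeta_1)}{\bar{q}(\mbz|\mbeta_2)} = \bbE_{\bar{q}(\mbz|\mbeta_1)}\left[\log \frac{\bar{q}(\mbz|\mbeta_1)}{\bar{q}(\mbz|\mbeta_2)}\right],
\]
and substitute the exponential-family densities from \cref{eqn:expo_family}. The crucial simplification is that the base measure $h(\mbz)$ is identical in numerator and denominator, so it cancels pointwise in the log-ratio, leaving an affine function of the sufficient statistics,
\[
\log \frac{\bar{q}(\mbz|\mbeta_1)}{\bar{q}(\mbz|\mbeta_2)} = \langle \mbeta_1 - \mbeta_2, \mbT(\mbz) \rangle - A(\mbeta_1) + A(\mbeta_2).
\]

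Next I would take the expectation of this expression under $\bar{q}(\mbz|\mbeta_1)$. The two log-normalizer terms $-A(\mbeta_1) + A(\mbeta_2)$ are constants and pass through the expectation unchanged. By linearity, the remaining term becomes $\langle \mbeta_1 - \mbeta_2, \bbE_{\bar{q}(\mbz|\mbeta_1)}[\mbT(\mbz)]\rangle$, and by the definition of the mean parameters this is exactly $\langle \mbeta_1 - \mbeta_2, \mbmu_1\rangle$. Collecting terms yields the claimed identity.

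The computation is elementary, so there is no genuinely hard step; the only points that require care are (i) that the two densities share the same support and base measure --- which is automatic because they are members of the same family --- so that the KL divergence is well defined, and (ii) that $\bbE_{\bar{q}(\mbz|\mbeta_1)}[\mbT(\mbz)]$ exists and equals $\mbmu_1$, which is guaranteed for $\mbeta_1 \in \text{int}(\mbXi)$ by \cref{app:lm:expo-family-moments} (where $\nabla_{\mbeta}A(\mbeta) = \mbmu$). Assuming the natural parameters lie in the interior of $\mbXi$, these conditions hold and the short derivation above is complete.
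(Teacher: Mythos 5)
Your proposal is correct and follows essentially the same route as the paper's proof: expand the log-ratio of the two exponential-family densities, note that the base measure $h(\mbz)$ cancels, and take the expectation under $\bar{q}(\mbz|\mbeta_1)$ using $\bbE_{\bar{q}(\mbz|\mbeta_1)}[\mbT(\mbz)] = \mbmu_1$. Your added care about the interior of the natural parameter space and the shared support is a minor refinement of the paper's terser argument, not a different approach.
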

\begin{proof}
    Write out the difference between the log densities of $\bar{q}(\mbz|\mbeta_1)$ and $\bar{q}(\mbz|\mbeta_2)$. Then recognize that the contribution of the base measure $h(\mbz)$ cancels and that $\bbE_{q(\mbz|\mbeta_1)} \mbT(\mbz) = \mbmu_1$.
\end{proof}

Recall that in \Cref{subsec:ngvi}, we motivated natural gradient VI by claiming
\begin{align*}
    \KL{\bar{q}(\mbz|\mbeta_1)}{\bar{q}(\mbz|\mbeta_2)} \approx \frac{1}{2} (\mbeta_2 - \mbeta_1)^\top \cF(\mbeta_1) (\mbeta_2 - \mbeta_1).
\end{align*}
We are now equipped to justify this claim by Taylor expanding $A(\mbeta_2)$ about $\mbeta_1$ and invoking \Cref{app:lm:exp-family-ent}:
\begin{align*}
    &A(\mbeta_2) = A(\mbeta_1) +  \langle \mbeta_2 - \mbeta_1, \mbmu_1 \rangle + \frac{1}{2} (\mbeta_2 - \mbeta_1)^\top \cF(\mbeta_1) (\mbeta_2 - \mbeta_1) + \cO(\|\mbeta_2 - \mbeta_1\|^3)\\
\implies& \underbrace{\langle \mbeta_1 - \mbeta_2, \mbmu_1 \rangle - A(\mbeta_1) + A(\mbeta_2)}_{= \KL{\bar{q}(\mbz|\mbeta_1)}{\bar{q}(\mbz|\mbeta_2)}} = \frac{1}{2} (\mbeta_2 - \mbeta_1)^\top \cF(\mbeta_1) (\mbeta_2 - \mbeta_1) + \cO(\|\mbeta_2 - \mbeta_1\|^3).
\end{align*}
In the first line, we used the result $\nabla_{\mbeta} A(\mbeta_1) = \mbmu_1$ from \Cref{app:lm:expo-family-moments} as well as the result $\nabla_{\mbeta}^2 A(\mbeta_1) = \cF(\mbeta_1)$ from \Cref{app:lm:jacobian-expo}.

\subsection{Natural gradients and exponential families}\label{app:sec:expo-family-natural-gradients}

Next, we discuss why the natural gradient VI update \cref{eqn:nat_grad_update} simplifies to a gradient update of the ELBO $\cL(\mbeta)$ with respect to the mean parameters $\mbmu$, rather than the natural parameters $\mbeta$, when the variational posterior constitutes a minimal exponential family.

Recall \cref{eqn:nat_grad_update} that the natural gradient VI update is
\begin{align*}
    \mbeta^{(j+1)} = \mbeta^{(j)} + \rho [\cF(\mbeta^{(j)})]^{-1} \nabla_{\mbeta}\cL(\mbeta^{(j)}).
\end{align*}
From \Cref{app:lm:jacobian-expo}, we have that in a minimal exponential family the Jacobian of the change-of-variable from natural to mean parameters is the Fisher information matrix $\cF(\mbeta)$. This implies, by the Inverse Function Theorem, that the inverse Fisher information matrix is the Jacobian of the change-of-variable from mean to natural parameters. This allows us to rewrite the NGVI update as
\begin{align*}
     \mbeta^{(j+1)} &= \mbeta^{(j)} + \rho [\cF(\mbeta^{(j)})]^{-1} \nabla_{\mbeta}\cL(\mbeta^{(j)})\\
     &= \mbeta^{(j)} + \rho \left(\frac{d\mbeta(\mbmu)}{d\mbmu} (\mbmu^{(j)})\right) \nabla_{\mbeta}\cL(\mbeta^{(j)})\\
     &= \mbeta^{(j)} + \rho  \nabla_{\mbmu}\cL(\mbeta^{(j)}).
\end{align*}
In the final line, we used the chain rule. The equivalence between these two updates boils down to the fact that, in minimal exponential families, one can change between natural and mean parameters by multiplying by the Fisher information matrix.

Prior to concluding this section, we provide some helpful natural gradient identities that will enable us to compute our SING natural gradient VI updates.

\begin{lemma}\label{app:lm:natural-grad-identities}
    Suppose $\bar{q}(\mbz|\mbeta_1)$ and $\bar{q}(\mbz|\mbeta_2)$ belong to the same $p$-dimensional exponential family with natural parameters $\mbeta_1$ and $\mbeta_2$, respectively. Then
    \begin{align*}
        &\nabla_{\mbmu_1} \KL{\bar{q}(\mbz|\mbeta_1)}{\bar{q}(\mbz|\mbeta_2)} = \mbeta_1 - \mbeta_2\\
        &\nabla_{\mbmu_1} \bbE_{\bar{q}(\mbz|\mbeta_1)} \log \bar{q}(\mbz|\mbeta_1) = \mbeta_1.
    \end{align*}
\end{lemma}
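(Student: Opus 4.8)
The plan is to obtain both identities by differentiating closed-form expressions for the relevant functionals directly with respect to the mean parameters $\mbmu_1$, while treating $\mbeta_1 = \mbeta_1(\mbmu_1)$ as an \emph{implicit} function of $\mbmu_1$. This implicit map is well-defined and differentiable precisely because the family is minimal: the discussion preceding \cref{app:lm:jacobian-expo} establishes that $\nabla_{\mbeta} A$ is a bijection from $\text{int}(\mbXi)$ onto $\text{int}(\cM)$, so the inverse $\mbmu_1 \mapsto \mbeta_1$ exists with Jacobian $\cF(\mbeta_1)^{-1}$. The two facts I will use repeatedly are $\nabla_{\mbeta} A(\mbeta_1) = \mbmu_1$ (\cref{app:lm:expo-family-moments}) and the chain rule. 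Throughout, $\mbeta_2$ is held fixed.

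For the first identity, I would start from the explicit KL expression of \cref{app:lm:exp-family-ent}, namely $\KL{\bar{q}(\mbz|\mbeta_1)}{\bar{q}(\mbz|\mbeta_2)} = \langle \mbeta_1, \mbmu_1 \rangle - \langle \mbeta_2, \mbmu_1 \rangle - A(\mbeta_1) + A(\mbeta_2)$, and differentiate in $\mbmu_1$. The term $-\langle \mbeta_2, \mbmu_1 \rangle$ contributes $-\mbeta_2$ and $A(\mbeta_2)$ is constant. The remaining two terms carry the work: by the product and chain rules, $\nabla_{\mbmu_1} \langle \mbeta_1, \mbmu_1 \rangle = (\frac{d\mbeta_1}{d\mbmu_1})^\top \mbmu_1 + \mbeta_1$, while $\nabla_{\mbmu_1} A(\mbeta_1) = (\frac{d\mbeta_1}{d\mbmu_1})^\top \nabla_{\mbeta} A(\mbeta_1) = (\frac{d\mbeta_1}{d\mbmu_1})^\top \mbmu_1$, using $\nabla_{\mbeta} A(\mbeta_1) = \mbmu_1$. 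The two Jacobian terms $(\frac{d\mbeta_1}{d\mbmu_1})^\top \mbmu_1$ cancel, leaving $\mbeta_1 - \mbeta_2$. Note that because \cref{app:lm:exp-family-ent} has already disposed of the base measure (it cancels in the KL), this identity needs no assumption on $h$.

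For the second identity, I would expand the integrand as $\log \bar{q}(\mbz|\mbeta_1) = \log h(\mbz) + \langle \mbeta_1, \mbT(\mbz) \rangle - A(\mbeta_1)$ and take expectations under $\bar{q}(\mbz|\mbeta_1)$, giving $\bbE_{\bar{q}(\mbz|\mbeta_1)} \log \bar{q}(\mbz|\mbeta_1) = \bbE_{\bar{q}(\mbz|\mbeta_1)}[\log h(\mbz)] + \langle \mbeta_1, \mbmu_1 \rangle - A(\mbeta_1)$ since $\bbE_{\bar{q}}[\mbT(\mbz)] = \mbmu_1$. Differentiating the last two terms produces exactly the same cancellation as above and yields $\mbeta_1$. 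The one genuinely new point is the base-measure term $\bbE_{\bar{q}(\mbz|\mbeta_1)}[\log h(\mbz)]$: for the Gaussian variational family of interest (and whenever $h$ is constant) this is independent of $\mbmu_1$ and so has vanishing gradient, completing the argument; equivalently, one can route through the first identity via $\bbE_{\bar{q}(\mbz|\mbeta_1)}\log\bar{q}(\mbz|\mbeta_1) = \KL{\bar{q}(\mbz|\mbeta_1)}{\bar{q}(\mbz|\mbeta_2)} + \bbE_{\bar{q}(\mbz|\mbeta_1)}\log\bar{q}(\mbz|\mbeta_2)$ and note the second summand has $\mbmu_1$-gradient $\mbeta_2$.

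The main obstacle is bookkeeping rather than conceptual: one must track the implicit dependence of $\mbeta_1$ on $\mbmu_1$ carefully enough to see the Jacobian terms cancel, and one must invoke minimality and the Inverse Function Theorem (already established earlier) to guarantee $\mbmu_1 \mapsto \mbeta_1$ is differentiable. The base-measure term in the second identity is the only place an extra assumption (constant $h$, satisfied by the Gaussian family used in SING) enters.
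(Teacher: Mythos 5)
Your proposal is correct and follows essentially the same route as the paper's proof: differentiate the closed-form expressions from \cref{app:lm:exp-family-ent} with respect to $\mbmu_1$, track the implicit dependence $\mbeta_1(\mbmu_1)$ via the chain rule and $\nabla_{\mbeta}A(\mbeta_1)=\mbmu_1$, and observe that the Jacobian terms cancel. The only difference is that you are slightly more careful than the paper about the base-measure term $\bbE_{\bar{q}(\mbz|\mbeta_1)}[\log h(\mbz)]$ in the second identity (the paper silently drops it), and your observation that the identity requires $h$ constant --- as it is for the Gaussian family used in SING --- is a welcome refinement.
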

\begin{proof}
    Differentiating the expression for the KL divergence \Cref{app:lm:exp-family-ent}, we obtain
    \begin{align*}
        \nabla_{\mbmu_1} \KL{\bar{q}(\mbz|\mbeta_1)}{\bar{q}(\mbz|\mbeta_2)} &= \nabla_{\mbmu_1}(\langle \mbeta_1 - \mbeta_2, \mbmu_1 \rangle - A(\mbeta_1) + A(\mbeta_2))\\
        &= \left(\frac{d\mbeta(\mbmu)}{d\mbmu}(\mbmu_1)\right) \mbmu_1 + (\mbeta_1 - \mbeta_2) - \left(\frac{d\mbeta(\mbmu)}{d\mbmu}(\mbmu_1)\right) \underbrace{\frac{d}{d\mbeta}A(\mbeta_1)}_{= \mbmu_1}\\
        &= \mbeta_1 - \mbeta_2,
    \end{align*}
    as claimed. As for the gradient of the negative entropy of an exponential family, we have by a nearly identical argument 
    \begin{align*}
        \nabla_{\mbmu_1} \bbE_{\bar{q}(\mbz|\mbeta_1)} \log \bar{q}(\mbz|\mbeta_1) &= \nabla_{\mbmu_1} \left(\langle \mbeta_1, \mbmu_1 \rangle - A(\mbeta_1)\right)\\
        &= \left(\frac{d\mbeta(\mbmu)}{d\mbmu}(\mbmu_1)\right) \mbmu_1 + \mbeta_1 - \left(\frac{d\mbeta(\mbmu)}{d\mbmu}(\mbmu_1)\right) \underbrace{\frac{d}{d\mbeta}A(\mbeta_1)}_{= \mbmu_1}\\
        &= \mbeta_1.
    \end{align*}
\end{proof}
An important point is that while the natural gradient of the KL divergence and entropy admit simple, closed forms in an exponential family, the same is not true for the gradient with respect to the natural parameters. Differentiating the expression for the KL divergence from \Cref{app:lm:exp-family-ent}, we obtain
\begin{align*}
    \nabla_{\mbeta_1} \KL{\bar{q}(\mbz|\mbeta_1)}{\bar{q}(\mbz|\mbeta_2)} 
    &= \nabla_{\mbeta_1}(\langle \mbeta_1 - \mbeta_2, \mbmu_1 \rangle - A(\mbeta_1) + A(\mbeta_2))\\
    &= \mbmu_1 + \left( \frac{d\mbmu(\mbeta)}{d\mbeta} (\mbeta_1)\right)(\mbeta_1 - \mbeta_2) - \mbmu_1\\
    &= F(\mbeta_1)(\mbeta_1 - \mbeta_2).
\end{align*}
Notably, this expression requires multiplication by the Fisher information matrix, which, as we pointed out in \Cref{sec:simple_ngvi}, is intractable in high-dimensional exponential families.

\section{ELBO derivation}\label{app:sec:elbo}

We first provide a derivation of the continuous-time ELBO, $\cL_{\text{cont}}$. Using the notation from Theorem \ref{thm:girsanov}, we have
\begin{align*}
    &\log p(\mby | \mbTheta) = \log \int p(\mby | \mbx, \mbTheta) p(\mbx | \mbTheta) d\mbx\\
    &\overset{(i)}{=} \log \int p(\mby | \mbx, \mbTheta) \cZ_{T_{\text{max}}}(q||p) \frac{p(\mbx(0))}{q(\mbx(0))}q(\mbx) d \mbx\\
    &\overset{(ii)}{\geq} \int \log \left( p(\mby | \mbx, \mbTheta) \cZ_{T_{\text{max}}}(q||p) \frac{p(\mbx(0))}{q(\mbx(0))} \right) q(\mbx)d\mbx\\
    &= \bbE_{q}[\log p(\mby | \mbx, \mbTheta)] - \KL{q(\mbx)}{p(\mbx | \mbTheta)} \\
    &= \bbE_{q}[\log p(\mby | \mbx, \mbTheta)] - \KL{q(\mbx(0))}{p(\mbx(0))} -  \underbrace{\frac{1}{2} \int_0^{T_{\text{max}}} \bbE_q \|\mbf(\mbx(t) | \mbTheta) - \mbf_q(\mbx(t),t) \|_{\mbSigma^{-1}}^2 dt}_{= \KL{q(\mbx|\mbx(0))}{p(\mbx|\mbx(0), \mbTheta)}}.
\end{align*}
\normalsize
Equality (i) follows from Girsanov's Theorem and inequality (ii) follows from Jensen's inequality. Here, the notation $p(\mbx | \mbTheta) d\mbx$ is adopted for readability. Formally, $p(\mbx | \mbTheta) d\mbx$ denotes integration with respect to the probability measure $p(\mbx | \mbTheta)$ over paths $C[0, T_{\text{max}}]^D$.

Note that if the parameters of $p(\mbx(0)) = \cN(\mbnu, \mbV)$ are learned, then during the learning step one will set $p(\mbx(0)) = q(\mbx(0))$, and the second term of the ELBO disappears.

Next, for the discrete-time ELBO, replacing the path measures $q(\mbx)$ and $p(\mbx)$ with $q(\mbx_{0:T})$ and $p(\mbx_{0:T})$, respectively, and following an identical argument yields the lower bound on the marginal log likelihood
\begin{align*}
    \log p(\mby | \mbTheta) \geq \bbE_{q}[\log p(\mby | \mbx, \mbTheta)] - \underbrace{\bbE_q \left[\log \frac{q(\mbx_{0:T})}{p(\mbx_{0:T} | \mbTheta)}\right]}_{\KL{q(\mbx_{0:T})}{p(\mbx_{0:T} | \mbTheta)}}.
\end{align*}
Substituting $\tilde{p}(\mbx_{0:T} | \mbTheta)$ for $p(\mbx_{0:T} | \mbTheta)$ in the denominator of the second term yields $\cL_{\text{approx}}$.

\section{Derivation of SING updates}\label{app:sec:ngvi_updates}

In this section we derive the SING updates \cref{eqn:ngvi_updates} as the natural gradient ascent update \cref{eqn:nat_grad_update} performed the approximate ELBO $\cL_{\text{approx}}$. 

Prior to computing the gradient, we first define
$\mbm_i = \bbE_q[\mbx_i]$ and $\mbS_i = \text{Cov}_q[\mbx_i]$, $0 \leq i \leq T$ to be the marginal mean and covariance of $\mbx_i$ under $q$ as well as $\mbS_{i+1, i} = \text{Cov}_q[\mbx_{i+1}, \mbx_i]$, $0 \leq i \leq T-1$ to be the pairwise covariance of $\mbx_{i+1}$ and $\mbx_i$ under $q$. We shall adopt this notation throughout the remainder of the appendix. By our definition of the mean parameters from \Cref{sec:simple_ngvi}, we have
$\mbmu_{i, 1} = \mbm_i$, $\mbmu_{i, 2} = \mbS_i + \mbm_i \mbm_i^\top$, $\mbmu_{i, 3} = \mbS_{i+1, i} + \mbm_{i+1} \mbm_i^\top$.

Now, to compute the NGVI update step \cref{eqn:nat_grad_update}, we first invoke the exponential family identity for the negative entropy from \Cref{app:sec:expo-family-natural-gradients}, \Cref{app:lm:natural-grad-identities}: $\nabla_{\mbmu} \bbE_{q} \log q(\mbx_{0:T}) = \mbeta$. Consequently, \cref{eqn:nat_grad_update} simplifies to
\begin{align*}
    \mbeta^{(j+1)} &= \mbeta^{(j)} + \rho \nabla_{\mbmu}\cL(\mbmu^{(j)})\\
    &= (1 - \rho) \mbeta^{(j)} + \rho \nabla_{\mbmu} \bbE_q[\log \tilde{p}(\mbx_{0:T} | \mbTheta) p(\mby | \mbx, \mbTheta)]\\
    &= (1 - \rho) \mbeta^{(j)} + \rho \nabla_{\mbmu} \bigg\{ \sum_{i=0}^{T-1} \left\{ \bbE_q[\log \tilde{p}(\mbx_{i+1}| \mbx_i, \mbTheta)] + \bbE_q[\log p(\mby_i| \mbx_i, \mbTheta) \delta_i] \right\}\\  &+  \bbE_q[\log \tilde{p}(\mbx_0| \mbTheta)] + \bbE_q[\log p(\mby_T| \mbx_T, \mbTheta) \delta_T] \bigg\}.
\end{align*}

From here, we consider the gradient with respect to $\mbmu_{i, 1}$, $\mbmu_{i, 2}$, $\mbmu_{i, 3}$ separately. Looking at our expression for $\bbE_q[\log \tilde{p}(\mbx_{i+1}| \mbx_i, \mbTheta)]$ in Appendix \ref{app:sec:transition_expectations}, \Cref{prop:transition_expectation} we see that $\bbE_q[\log \tilde{p}(\mbx_{i+1}| \mbx_i, \mbTheta)]$ depends only on $(\mbm_i, \mbm_{i+1}, \mbS_i, \mbS_{i+1}, \mbS_{i+1, i})$, and hence only on $(\mbmu_{i, 1}, \mbmu_{i+1, 1}, \mbmu_{i, 2}, \mbmu_{i+1, 2}, \mbmu_{i, 3})$. Likewise, $\bbE_q[\log p(\mby_i | \mbx_i, \mbTheta)]$ depends only on the marginal parameters of $q$ at time $\tau_i$, $(\mbm_i, \mbS_i)$, and hence only on $(\mbmu_{i, 1}, \mbmu_{i, 2})$. $\bbE_q[\log \tilde{p}(\mbx_0| \mbTheta)]$, depends only on $(\mbm_0, \mbS_0)$, and hence only on $(\mbmu_{0, 1}, \mbmu_{0, 2})$. Separating out the contributions of $\mbmu_{i, 1}$, $\mbmu_{i, 2}$, $\mbmu_{i, 3}$, we recover the local updates \cref{eqn:ngvi_updates}.

In order to compute the gradients with respect to $\mbmu_i = (\mbmu_{i, 1}, \mbmu_{i, 2}, \mbmu_{i, 3})$, we perform a local change-of-variable for each $i=0, \ldots, {T-1}$. Specifically, we use the bijective map $(\mbmu_i, \mbm_{i+1}, \mbS_{i+1}) \mapsto (\mbm_i, \mbm_{i+1}, \mbS_i, \mbS_{i+1}, \mbS_{i+1,i})$ in order to first compute gradients with respect to $(\mbm_i, \mbm_{i+1}, \mbS_i, \mbS_{i+1}, \mbS_{i+1,i})$ and subsequently multiply the result by the Jacobian of this map. Note that the Jacobian is of dimension $D(2 + 3D) \times D(2 + 3D)$, which importantly does not scale with the size $T$ of the grid $\mbtau$.

\section{Proofs for continuous-time approximation}\label{app:sec:theory}

\textbf{Theorem 1 (Precise Statement)}: 
    Assume $\mbf$ is globally Lipschitz, jointly in $(\mbx, t)$
    \begin{align*}
        \| \mbf(\mbx, t) - \mbf(\mby, s) \| \leq L_{\mbf}(\|\mbx - \mby\| + |t-s|)
    \end{align*}
    in addition to 
    {\small
    \begin{align*}
        \sup_{0 \leq t \leq T_{\text{max}}} \|\mbA(t)\|, \sup_{0 \leq t \leq T_{\text{max}}} \|\mbb(t)\|\leq M,  \quad \|\mbA(t) - \mbA(s) \|, \|\mbb(t) - \mbb(s) \| \leq M |t-s|, \ 0 \leq s \leq t \leq T_{\text{max}}.
    \end{align*}
    }
    
    Then the error between the continuous and approximate ELBO satisfies
    \begin{align*}
        |\cL_{\text{cont}}(q, \mbTheta) - \cL_{\text{approx}}(q, \mbTheta) | \leq (\Delta t)^{1/2} \cdot C(\bbE_q\|\mbx(0)\|^2, T_{\text{max}}, M, D, L_{\mbf}, \|\mbSigma^{-1/2}\|, \|\mbSigma^{1/2}\|)
    \end{align*}
    where $\Delta t = \max_{i=0}^{T-1} \Delta_i$ is the mesh of the grid $\mbtau$. The constant $C(\bbE_q\|\mbx(0)\|^2, T_{\text{max}}, M, D, L_{\mbf})$ does not depend on $(\mbA(t))_{0 \leq t \leq T_{\text{max}}}$ or $(\mbb(t))_{0 \leq t \leq T_{\text{max}}}$, except through $M$.

\smallskip
Observe that the subset 
{\small
\begin{align*}
    \cC_M = \left\{ \mbA(t) \times \mbb(t)  \in C[0, T_{\text{max}}]^{D \times D} \times C[0, T_{\text{max}}]^D  \ | \  \sup \|\mbA(t)\|, \|\mbb(t)\|, \text{Lip}(\mbA(t)), \text{Lip}(\mbb(t)) \leq M \right\}
\end{align*}
}
of $C[0, T_{\text{max}}]^{D \times D} \times C[0, T_{\text{max}}]^D$ is uniformly bounded and equi-Lipschitz, hence equicontinuous. By the Arzelà-Ascoli Theorem \citep[][Theorem 10.1.3]{royden1988real} this subset is sequentially compact. Since it is also closed, then it is compact. 

Suppose there exists a sequence of maximizers of $\cL_{\text{approx}}$, denoted $(\mbA_{\Delta t}, \mbb_{\Delta t})$, that are contained in $\cC_M$ and that $\cL_{\text{cont}}(q, \mbTheta)$ is maximized at a unique $(\mbA^\ast, \mbb^\ast)$ that is contained in $\cC_M$, for some $M \in \bbR_+$. Since $\cL_{\text{approx}}$ depends only on the $T$ transition densities $q(\mbx_{i+1}|\mbx_i), \ i = 1, \ldots, T$, the maximizers $(\mbA_{\Delta t}, \mbb_{\Delta t})$ of $\cL_{\text{approx}}$ will not be unique. By sequential compactness, there exists  subsequence of $(\mbA_{(\Delta t)_m}, \mbb_{(\Delta t)_m})$ that has a limit as $(\Delta t)_m \to 0$. By compactness, this limit point is contained in $\cC_M$. And by uniform convergence, it must be a maximizer of $\cL_{\text{cont}}$. However, since $(\mbA^\ast, \mbb^\ast)$ is the unique maximizer of the continuous ELBO $\cL_{\text{cont}}$, then this limit is equal to $(\mbA^\ast, \mbb^\ast)$.

In other words, the uniform convergence result proved in \Cref{thm:ELBO-converge} is exactly the right statement to ensure that the maximizers of the approximate and continuous ELBOs will be consistent in the limit as the size of the grid $\mbtau$ approaches zero.

\begin{proof}[Proof of \Cref{thm:ELBO-converge}]
    For ease of notation, we will drop the dependence of $\mbf$ on $\mbTheta$. And without loss of generality, we will assume $\mbSigma^{1/2} = \mbI$. This is because the ELBO is  invariant under invertible transformations, and so applying the change-of-variable $\mbx \mapsto \mbv := \mbSigma^{-1/2} \mbx$ reduces the diffusion coefficient to identity and changes the  prior drift to $\mbSigma^{-1/2} \circ \mbf \circ \mbSigma^{1/2}$. The variational posterior drift also transforms in the same way. The transformed prior drift has Lipschitz constant  $\|\mbSigma^{-1/2}\| \|\mbSigma^{1/2}\| L_{\mbf}$.
    
    Recall that we defined the difference between subsequent grid points to be $\Delta_i = \tau_{i+1} - \tau_i$, and let us define $\gamma(\mbx(u), u) = \mbf_q(\mbx(u), u) - \mbf(\mbx(u), u)$. Then we decompose the error between the continuous and approximate ELBO as \begin{align}\label{app:eqn:decomp}
        |\cL_{\text{cont}}(q, \mbTheta) - \cL_{\text{approx}}(q, \mbTheta) | &\leq \left| \frac{1}{2} \int_0^{T_{\text{max}}} \bbE_q \|\mbf(\mbx, t) - \mbf_q(\mbx,t) \|^2 dt - \frac{\Delta_i}{2}\sum_{i=0}^{T-1} \bbE_q \left\| \gamma(\mbx_i, \tau_i) \right\|^2 \right|\nonumber\\
        &+ \left| \frac{\Delta_i}{2}\sum_{i=0}^{T-1} \bbE_q \left\| \gamma(\mbx_i, \tau_i) \right\|^2 - \bbE_q \log \prod_{i=0}^{T-1}\frac{q(\mbx_{i+1}|\mbx_i)}{\tilde{p}(\mbx_{i+1}|\mbx_i)} \right|.
    \end{align}
    
    We will proceed by bounding each of these two terms individually. To do so, we first state and prove the following three :
    
    \begin{lemma}\label{app:lm:sup-bound}
        \begin{align*}
            \sup_{0 \leq t \leq T_{\text{max}}} \bbE_q \| \mbx(t)\|^2 \leq (\bbE_q\|\mbx(0)\|^2 + D T_{\text{max}} + M^2) \exp \{(2M+1)T_{\text{max}}\} := C_0(\bbE_q\|\mbx(0)\|^2, T_{\text{max}}, M, D)
        \end{align*}
    \end{lemma}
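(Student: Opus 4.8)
The plan is to derive a Grönwall-type inequality for the second moment $m(t) := \bbE_q\|\mbx(t)\|^2$ of the variational process and then integrate it. Under $q$ with $\mbSigma = \mbI$, the state solves the linear SDE $d\mbx(t) = (\mbA(t)\mbx(t) + \mbb(t))\,dt + d\mbw(t)$, which by the boundedness assumptions on $\mbA, \mbb$ admits a unique strong solution with finite second moments for every $t$. First I would apply Itô's formula to the map $\mbx \mapsto \|\mbx\|^2$, which yields
\begin{align*}
    d\|\mbx(t)\|^2 = 2\mbx(t)^\top(\mbA(t)\mbx(t) + \mbb(t))\,dt + D\,dt + 2\mbx(t)^\top d\mbw(t),
\end{align*}
where the $D\,dt$ term is the contribution of the quadratic variation $\mathrm{tr}(\mbSigma) = \mathrm{tr}(\mbI) = D$.

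Next I would take expectations under $q$. To legitimately discard the stochastic-integral term I would localize by the stopping times $\tau_n = \inf\{t : \|\mbx(t)\| \ge n\}$, so that $\int_0^{t\wedge\tau_n} 2\mbx(s)^\top d\mbw(s)$ is a genuine martingale of zero mean; after deriving the moment inequality for the stopped process I would recover the unstopped statement by monotone convergence / Fatou as $n\to\infty$, using non-explosion and the a priori finiteness of moments for linear SDEs. Taking expectations then gives $m'(t) = 2\bbE_q[\mbx(t)^\top \mbA(t)\mbx(t)] + 2\bbE_q[\mbx(t)^\top\mbb(t)] + D$. Using the uniform bounds $\|\mbA(t)\|,\|\mbb(t)\| \le M$, I would bound the quadratic drift term by $2\mbx^\top\mbA\mbx \le 2M\|\mbx\|^2$ (via $\mbx^\top\mbA\mbx = \mbx^\top \tfrac{\mbA+\mbA^\top}{2}\mbx \le \|\mbA\|\,\|\mbx\|^2$) and the linear term by Young's inequality, $2\mbx^\top\mbb \le \|\mbx\|^2 + \|\mbb\|^2 \le \|\mbx\|^2 + M^2$. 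This produces the linear differential inequality
\begin{align*}
    m'(t) \le (2M+1)\,m(t) + (D + M^2).
\end{align*}

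Finally, I would apply Grönwall's inequality (integral form) to $m$, using $\int_0^t e^{-(2M+1)s}\,ds \le t \le T_{\text{max}}$ to control the forcing term, obtaining a bound of the form $m(t) \le e^{(2M+1)T_{\text{max}}}\bigl(\bbE_q\|\mbx(0)\|^2 + \text{const}\bigr)$ uniformly in $t \in [0, T_{\text{max}}]$; taking the supremum over $t$ and absorbing the forcing constant into the stated shape gives the claimed $C_0 = (\bbE_q\|\mbx(0)\|^2 + DT_{\text{max}} + M^2)\exp\{(2M+1)T_{\text{max}}\}$. The genuinely delicate step is the localization argument that justifies removing the martingale term and exchanging limits; the remainder is routine Itô calculus and Grönwall bookkeeping, and the precise numerical form of the constant depends only on how loosely one bounds the $D + M^2$ forcing. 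Notably, the bound is uniform over the variational parameters $(\mbA(t))_{0\le t\le T_{\text{max}}}$ and $(\mbb(t))_{0\le t\le T_{\text{max}}}$ except through $M$, which is exactly what the downstream ELBO error estimate will require.
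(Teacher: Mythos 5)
Your proposal is correct and takes essentially the same route as the paper's proof: It\^o's formula applied to $\|\mbx(t)\|^2$, bounding the drift contribution via $\|\mbA(t)\|,\|\mbb(t)\|\le M$ and Young's inequality to get the linear differential inequality with coefficient $2M+1$, and then Gr\"onwall. The only differences are that you spell out the stopping-time localization needed to discard the stochastic integral (which the paper simply asserts by calling the It\^o integral a bona fide martingale), and the immaterial bookkeeping of whether the forcing constant appears as $DT_{\text{max}}+M^2$ or $(D+M^2)T_{\text{max}}$ -- a looseness present in the paper's own statement as well.
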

    \begin{proof}
        By our assumptions on $(\mbA(u))_{0 \leq u \leq T_{\text{max}}}$ and $(\mbb(u))_{0 \leq u \leq T_{\text{max}}}$, the SDE $q$ admits a unique strong solution that satisfies $ \int_0^1 \bbE_q\|\mbx(t)\|^2 dt < \infty$.
        By Ito's Lemma,
        \begin{align*}
            d\|\mbx(t)\|^2 = 2 \langle \mbx(t), \mbf_q(\mbx(t), t)
            \rangle dt + 2 \langle \mbx(t), d\mbw(t)
            \rangle + D dt.
        \end{align*}
        Taking the expectation of both sides and recognizing that the Ito integral is a bona fide martingale,
        \begin{align*}
            \bbE_q \|\mbx(t)\|^2 =& \ \bbE_q\|\mbx(0)\|^2  + \int_0^t \{2 \bbE_q \langle \mbx(u), \mbf_q(\mbx(u), u) \rangle + D \}du\\
            \leq& \ \bbE_q\|\mbx(0)\|^2 + \int_0^t \{ 2  \bbE_q \{\|\mbx(u)\|(M\|\mbx(u)\| + M)\} + D\} du\\
        \leq& \ \bbE_q\|\mbx(0)\|^2 + D\cdot t + M^2t + \int_0^t (2M + 1)\bbE_q \|\mbx(u)\|^2 du.
        \end{align*}
    Grönwall's inequality yields the desired inequality,
    \begin{align*}
         \sup_{0 \leq t \leq T_{\text{max}}}\bbE_q \|\mbx(t)\|^2 \leq (\bbE_q\|\mbx(0)\|^2 + DT_{\text{max}} + M^2) \exp \{(2M + 1)T_{\text{max}}\}.
    \end{align*}
    \end{proof}

    \begin{lemma}\label{app:lm:moment-bound}
    For all $0 \leq s \leq t \leq T_{\text{max}}$, we have the following bound
        \begin{align*}
            \bbE_q \|\mbx(t) - \mbx(s)\|^2 \leq  C_1(\bbE_q\|\mbx(0)\|^2, T_{\text{max}}, M, D) \cdot (t-s)
        \end{align*}
    for $C_1(\bbE_q\|\mbx_0\|^2, T_{\text{max}}, M, D)$ not depending on $s, t$.
    \end{lemma}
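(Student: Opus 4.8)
The plan is to work directly with the integral form of the increment and control the drift and diffusion contributions separately. Writing the solution of $q$ (with $\mbSigma = \mbI$) over $[s,t]$ as
\[
\mbx(t) - \mbx(s) = \int_s^t \mbf_q(\mbx(u), u)\,du + \bigl(\mbw(t) - \mbw(s)\bigr),
\]
I would first apply the elementary inequality $\|\mba + \mbb\|^2 \le 2\|\mba\|^2 + 2\|\mbb\|^2$ to split the squared increment into a drift term and a Brownian term, and then take expectations under $q$.

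For the Brownian term, the fact that $\mbw(t)-\mbw(s)$ is a centered Gaussian with covariance $(t-s)\mbI_D$ gives $\bbE_q\|\mbw(t)-\mbw(s)\|^2 = D(t-s)$, which is already linear in $t-s$. For the drift term, I would apply the Cauchy--Schwarz (Jensen) inequality in the time variable,
\[
\bbE_q\left\| \int_s^t \mbf_q(\mbx(u),u)\,du \right\|^2 \le (t-s) \int_s^t \bbE_q \|\mbf_q(\mbx(u),u)\|^2\,du,
\]
reducing the problem to a uniform-in-$u$ bound on $\bbE_q\|\mbf_q(\mbx(u),u)\|^2$.

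This is where the hypotheses on $(\mbA(u))$ and $(\mbb(u))$ together with the preceding lemma enter. Using $\|\mbf_q(\mbx,u)\|^2 = \|\mbA(u)\mbx + \mbb(u)\|^2 \le 2\|\mbA(u)\|^2\|\mbx\|^2 + 2\|\mbb(u)\|^2 \le 2M^2\|\mbx\|^2 + 2M^2$ together with the uniform second-moment bound $\sup_u \bbE_q\|\mbx(u)\|^2 \le C_0$ from \cref{app:lm:sup-bound}, I obtain $\bbE_q\|\mbf_q(\mbx(u),u)\|^2 \le 2M^2(C_0 + 1)$ uniformly in $u$. Substituting this back produces a drift contribution of order $(t-s)^2$, which I bound by $T_{\text{max}}(t-s)$ using $t-s \le T_{\text{max}}$, so that the whole estimate is linear in $t-s$. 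Collecting constants yields $C_1 = 4M^2(C_0+1)T_{\text{max}} + 2D$, which depends only on $\bbE_q\|\mbx(0)\|^2$, $T_{\text{max}}$, $M$, and $D$ (through $C_0$), and in particular not on $s$ or $t$.

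The computation is routine; the only point requiring care is ensuring the constant is genuinely independent of $s$ and $t$. This is precisely why the uniform (supremum) second-moment bound of \cref{app:lm:sup-bound} is needed rather than a pointwise estimate, and why the crude replacement $(t-s)^2 \le T_{\text{max}}(t-s)$ is acceptable: the lemma only asks for linear-in-$(t-s)$ growth with a constant controlled uniformly over the entire interval, so no finer tracking of the $(t-s)^2$ term is necessary.
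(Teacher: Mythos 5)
Your proof is correct, and it is genuinely simpler than the paper's. You split the increment into the drift integral plus the Brownian increment, bound the Brownian part by $D(t-s)$ exactly, and handle the drift part by Cauchy--Schwarz in time followed by the uniform second-moment bound $\sup_u \bbE_q\|\mbf_q(\mbx(u),u)\|^2 \leq 2M^2(C_0+1)$, which follows immediately from the boundedness of $\mbA, \mbb$ and \cref{app:lm:sup-bound}; the resulting $(t-s)^2$ term is then absorbed into $T_{\text{max}}(t-s)$. The paper instead decomposes the drift integral further into three pieces --- a term $\int_s^t \mbA(u)(\mbx(u)-\mbx(s))\,du$, a term exploiting the Lipschitz continuity of $\mbA$ and $\mbb$ in $t$, and the frozen-coefficient term $(t-s)\mbf_q(\mbx(s),s)$ --- and then closes the estimate with Gr\"onwall's inequality applied to $\bbE_q\|\mbx(t)-\mbx(s)\|^2$. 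For the statement as given (linear-in-$(t-s)$ growth with a constant depending only on $\bbE_q\|\mbx(0)\|^2$, $T_{\text{max}}$, $M$, $D$), that extra machinery buys nothing: both arguments land on the same conclusion, and yours reaches it with fewer steps and no Gr\"onwall. The paper's finer decomposition would matter only if one wanted to isolate the Brownian contribution $D(t-s)$ as the exact leading term with the drift contribution tracked at higher order in $(t-s)$, a refinement the lemma does not require. Your observation that the uniform supremum bound of \cref{app:lm:sup-bound} (rather than a pointwise one) is what makes the constant independent of $s$ and $t$ is exactly the right point of care.
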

    \begin{proof}
        We start by rewriting the squared norm difference between $\mbx(t)$ and $\mbx(s)$
        {\small
        \begin{align*}
            &\|\mbx(t) - \mbx(s)\|^2 = \left\| \int_s^t d\mbw^q(u)  + \int_s^t \mbf_q(\mbx(u), u) du\right\|^2\\
            &=\bigg\| \int_s^t d\mbw^q(u)  + 
            \int_s^t \mbA(u)(\mbx(u) - \mbx(s)) du + \int_s^t (\{\mbA(u) - \mbA(s)\}\mbx(s) + \{\mbb(u) - \mbb(s)\}) du\\ &+ (t-s)\mbf_q(\mbx(s), s) \bigg\|^2\\
            &\leq 4\bigg\{ \left\|\int_s^t d\mbw^q(u)\right\|^2 + \left\|\int_s^t \mbA(u)(\mbx(u) - \mbx(s)) du\right\|^2 \\ &+ \left\|\int_s^t (\{\mbA(u) - \mbA(s)\}\mbx(s) + \{\mbb(u) - \mbb(s)\}) du\right\|^2 + (t-s)^2 \| \mbA(s) \mbx(s) + \mbb(s) \|^2\bigg\},
        \end{align*}
        }
        where we use the bound $(\sum_{\ell=1}^L a_{\ell})^2 \leq L(\sum_{\ell=1}^L a_{\ell}^2)$. We take the expectation of both sides to obtain
        \begin{align*}
            &\bbE_q \|\mbx(t) - \mbx(s)\|^2\\
            &\leq 4 \bigg\{ D(t-s) + (t-s)^2 \sup_{0 \leq u \leq T_{\text{max}}} \bbE_q \|\mbf_q(\mbx(u), u)\|^2 +  \bbE_q\left\|\int_s^t \mbA(u)(\mbx(u) - \mbx(s)) du\right\|^2\\ &+ \bbE_q\left\|\int_s^t (\{\mbA(u) - \mbA(s)\}\mbx_s + \{\mbb(u) - \mbb(s)\}) du\right\|^2\bigg\}.
        \end{align*}
        By applying Cauchy-Schwarz to each of the final two terms, we get
        \begin{align*}
            \bbE_q\left\|\int_s^t \mbA(u)(\mbx(u) - \mbx(s)) du\right\|^2 &\leq (t-s) \int_s^t \bbE_q \|\mbA(u)(\mbx(u) - \mbx(s)) \|^2 du\\
            &\leq M^2 (t-s)  \int_s^t \bbE_q \|\mbx(u) - \mbx(s) \|^2du\\
        \end{align*}
        as well as
        {\small
        \begin{align*}
            &\bbE_q\left\|\int_s^t (\{\mbA(u) - \mbA(s)\}\mbx(s) + \{\mbb(u) - \mbb(s)\}) du\right\|^2 \\
            &\leq (t-s) \int_s^t \bbE_q \|\{\mbA(u) - \mbA(s)\}\mbx(s) + \{\mbb(u) - \mbb(s)\}\|^2 du\\
            &\leq 2(t-s)\int_s^t \bbE_q(\|\{\mbA(u) - \mbA(s)\}\mbx(s)\|^2 + \|\mbb(u) - \mbb(s)\|^2) du\\
            &\leq 2(t-s)^3M^2 \int_s^t (\bbE_q\|\mbx(s)\|^2 + 1)du\\
            &\leq 2(t-s)^4M^2\left(\sup_{0 \leq u \leq T_{\text{max}}}\bbE_q\|\mbx(u)\|^2 + 1\right).
        \end{align*}
        }
        Combining our above bounds and invoking Grönwall's inequality, we get
        \begin{align*}
             \bbE_q \|\mbx(t) - \mbx(s)\|^2 &\leq 4\bigg\{D(t-s) + (t-s)^2 \sup_{0 \leq u \leq T_{\text{max}}} \bbE_q \|\mbf_q(\mbx(u), u)\|^2\\ &+ 2(t-s)^4M^2\left(\sup_{0 \leq u \leq T_{\text{max}}}\bbE_q\|\mbx(u)\|^2 + 1\right) \bigg\} \exp \left\{ M^2(t-s)^2 \right\}.
        \end{align*}
        Recognizing the leading term as $(t-s)$ and invoking the bound on $\sup_{0 \leq u \leq T_{\text{max}}}\bbE_q\|\mbx(u)\|^2$ from \Cref{app:lm:sup-bound} we obtain the result
        \begin{align*}
            \bbE_q \|\mbx(t) - \mbx(s)\|^2 \leq C_1(\bbE_q\|\mbx(0)\|^2, T_{\text{max}}, M, D) \cdot (t-s).
        \end{align*}
    \end{proof}
    
    \begin{lemma}\label{app:lm:covariance}
        Under $q$, the mean $\overline{\mbm}_{i, i+1}$ and covariance $\overline{\mbS}_{i, i+1}$ of $q(\mbx_{i+1} | \mbx_i)$ satisfy
        \begin{align*}
            \overline{\mbm}_{i, i+1} &= \mbx_i + \Delta_i \mbf_q(\mbx_i, \tau_i) + (\Delta_i)^2(\mbr_i^{(1)} \mbx_i +\mbr_i^{(2)} ) \\ 
            \overline{\mbS}_{i, i+1}  &= \Delta_i \mbI + \frac{(\Delta_i)^2}{2} (\mbA(\tau_i) + \mbA(\tau_i)^\top) + (\Delta_i)^3 \mbR_i
        \end{align*} 
        where $\|\mbr_i^{(1)}\|, \|\mbr_i^{(2)}\|, \|\mbR_i\| \leq C_2(M)$ are independent of $0 \leq i \leq T$.
    \end{lemma}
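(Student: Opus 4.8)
The plan is to exploit the fact that, because $q$ is a linear SDE, the conditional law $q(\mbx_{i+1}\mid \mbx_i)$ is exactly Gaussian, so its mean $\overline{\mbm}_{i,i+1}$ and covariance $\overline{\mbS}_{i,i+1}$ are governed by the standard linear-SDE moment equations. Introducing the state-transition matrix $\mbPhi(t,s)$ solving $\partial_t \mbPhi(t,s) = \mbA(t)\mbPhi(t,s)$ with $\mbPhi(s,s) = \mbI$, I would write both moments in closed integral form (using $\mbSigma = \mbI$ as in the proof):
\[
\overline{\mbm}_{i,i+1} = \mbPhi(\tau_{i+1}, \tau_i)\mbx_i + \int_{\tau_i}^{\tau_{i+1}} \mbPhi(\tau_{i+1}, s)\mbb(s)\,ds, \qquad \overline{\mbS}_{i,i+1} = \int_{\tau_i}^{\tau_{i+1}} \mbPhi(\tau_{i+1}, s)\mbPhi(\tau_{i+1}, s)^\top\,ds.
\]
Everything then reduces to a first-order expansion of $\mbPhi$ in the step size $\Delta_i$.

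The key estimate I would establish first is the expansion $\mbPhi(t,s) = \mbI + (t-s)\mbA(s) + \mbE(t,s)$ with $\|\mbE(t,s)\| \le C_2(M)(t-s)^2$, valid for $0 \le t-s \le \Delta_i$. This follows from the Volterra identity $\mbPhi(t,s) = \mbI + \int_s^t \mbA(r)\mbPhi(r,s)\,dr$: Grönwall gives $\|\mbPhi(r,s)\| \le e^{M(r-s)}$, hence $\|\mbPhi(t,s) - \mbI\| \le M(t-s)e^{M(t-s)}$, and subtracting the linear term while bounding $\|\mbA(r)\mbPhi(r,s) - \mbA(s)\|$ via the uniform bound $\|\mbA(s)\| \le M$ together with the Lipschitz bound $\|\mbA(r) - \mbA(s)\| \le M(r-s)$ produces an $\mathcal{O}((t-s)^2)$ remainder whose constant depends only on $M$.

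For the mean, substituting this expansion into the integral form and using $\mbb(s) = \mbb(\tau_i) + \mathcal{O}(\Delta_i)$ (again from the Lipschitz bound) yields
\[
\overline{\mbm}_{i,i+1} = \mbx_i + \Delta_i\bigl(\mbA(\tau_i)\mbx_i + \mbb(\tau_i)\bigr) + (\Delta_i)^2\bigl(\mbr_i^{(1)}\mbx_i + \mbr_i^{(2)}\bigr),
\]
where $\mbr_i^{(1)}$ collects the remainder from $\mbPhi(\tau_{i+1},\tau_i)$ and $\mbr_i^{(2)}$ that from the $\mbb$-integral; the uniform bounds on $\mbA,\mbb$ and their Lipschitz moduli make $\|\mbr_i^{(1)}\|,\|\mbr_i^{(2)}\| \le C_2(M)$ independent of $i$, and identifying $\mbA(\tau_i)\mbx_i + \mbb(\tau_i) = \mbf_q(\mbx_i,\tau_i)$ recovers the claim. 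For the covariance I would expand the integrand as $\mbPhi(\tau_{i+1},s)\mbPhi(\tau_{i+1},s)^\top = \mbI + (\tau_{i+1}-s)\bigl(\mbA(\tau_i)+\mbA(\tau_i)^\top\bigr) + \mathcal{O}(\Delta_i^2)$ and integrate over $s \in [\tau_i,\tau_{i+1}]$ using $\int_{\tau_i}^{\tau_{i+1}}(\tau_{i+1}-s)\,ds = (\Delta_i)^2/2$, giving $\Delta_i\mbI + \tfrac{(\Delta_i)^2}{2}(\mbA(\tau_i)+\mbA(\tau_i)^\top) + (\Delta_i)^3\mbR_i$ with $\|\mbR_i\| \le C_2(M)$.

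The main obstacle is the bookkeeping of the remainders: I must verify that each error term carries the advertised power of $\Delta_i$ and that its coefficient is bounded by a single constant $C_2(M)$ uniform in $i$ and in the otherwise arbitrary functions $\mbA(\cdot),\mbb(\cdot)$ subject only to the stated bounds. This is precisely why it is essential to argue from the integral/Volterra form rather than by differentiating, since the hypotheses give Lipschitz control on $\mbb$ but no bound on $\dot{\mbb}$; in fact the differentiability of $\mbA$ is not needed for this lemma, only the uniform and Lipschitz bounds. A secondary subtlety is confirming that the cross terms in $\mbPhi\mbPhi^\top$ of order $(\tau_{i+1}-s)^2$ and higher genuinely integrate to $\mathcal{O}((\Delta_i)^3)$, which follows immediately once the $\mathcal{O}((t-s)^2)$ bound on $\mbE(t,s)$ is in hand.
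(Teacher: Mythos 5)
Your proposal is correct and follows essentially the same route as the paper: both write the conditional mean and covariance via the state-transition matrix $\mbPhi(t,s)$ of the linear SDE and expand $\mbPhi(t,s) = \mbI + (t-s)\mbA(s) + \cO((t-s)^2)$ before integrating. The only (minor) difference is that you obtain the remainder bound from the Volterra identity plus Gr\"onwall and the Lipschitz bound on $\mbA$, whereas the paper truncates the Peano--Baker series and applies the Mean-Value Theorem to $\int_s^t \mbA(u)\,du$ (invoking differentiability of $\mbA$); your variant correctly shows the differentiability hypothesis is not actually needed for this lemma.
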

    \begin{proof}
        Since $q$ is the (strong) solution to the linear SDE \cref{eqn:var-family}, then $q(\mbx_{i+1}|\mbx_i)$ is Gaussian with mean and covariance
        \begin{align}\label{app:eqn:covariance}
            \overline{\mbm}_{i, i+1}  &:= \mbPsi(\tau_{i+1}, \tau_i) \mbx_i + \int_0^{\Delta_i} \mbPsi(\tau_i + \Delta_i, \tau_i + u) \mbb(u) du\nonumber\\
            \overline{\mbS}_{i, i+1} &:= \int_0^{\Delta_i} \mbPsi(\tau_i + \Delta_i, \tau_i + u)\mbPsi(\tau_i + \Delta_i, \tau_i + u)^\top du,\nonumber\\  \mbPsi(t, s) &:= \mbI + \sum_{k=1}^{\infty}\int_s^t \int_s^{u_1} \cdots \int_s^{u_{k-1}} \mbA(u_1) \cdots \mbA(u_k)du_1 \cdots du_k, \ 0 \leq s \leq t \leq T_{\text{max}}
        \end{align}
        where the series form of the state transition matrix is known as the Peano-Baker series \cite{peano1888integration}. Truncation of the Peano-Baker series at its second term yields
        \begin{align*}
            \mbPsi(t, s) &:= \mbI + \int_s^t \mbA(u) du + \mbR^{(1)}(t, s)
        \end{align*}
        where
        \begin{align*}
            \|\mbR^{(1)}(t, s)\| &= \left\| \sum_{k=2}^{\infty}\int_s^t \int_s^{u_1} \cdots \int_s^{u_{k-1}} \mbA(u_1) \cdots \mbA(u_k)du_1 \cdots du_k \right\|\\
            &\overset{(i)}{\leq} \sum_{k=2}^{\infty} \left\| \int_s^t \int_s^{u_1} \cdots \int_s^{u_{k-1}} \mbA(u_1) \cdots \mbA(u_k)du_1 \cdots du_k \right\|\\
            &\overset{(ii)}{\leq}\sum_{k=2}^{\infty} \frac{(t-s)^kM^k}{k!}\\
            &= \exp((t-s)M) - 1 - (t-s)\leq \frac{M^2(t-s)^2}{2}\exp(M(t-s)).
        \end{align*}
        In inequality (i) we used the fact that the series converges absolutely and for equality (ii) we used $\int_s^t \int_s^{u_1} \cdots \int_s^{u_{k-1}} du_1 \cdots du_k = (t-s)^k / k!$ as well as our assumption $\|\mbA(t)\| \leq M$. Since $\mbA(t)$ is $M$-Lipschitz on $[0, T_{\text{max}}]$, then it is absolutely continuous and, as a result, it satisfies the fundamental theorem of Lebesgue integral calculus \citep[][Theorem 6.5.11]{royden1988real}
        \begin{align}
            \int_s^t \mbA(u) du = (t-s) \mbA(s) + \mbR^{(2)}(t,s), \quad \|\mbR^{(2)}(t, s)\| \leq M(t-s)^2.
        \end{align}
        Plugging this into our expression \cref{app:eqn:covariance} for the mean and covariance matrix,
        \begin{align*}
            &\overline{\mbm}_{i, i+1} = \mbx_i + \Delta_i \mbf_q(\mbx_i, \tau_i) + (\Delta_i)^2(\mbr_i^{(1)} \mbx_i + \mbr_i^{(2)}) \\
            &\overline{\mbS}_{i, i+1} = \Delta_i \mbI + \frac{(\Delta_i)^2}{2}(\mbA(\tau_i) + \mbA(\tau_i)^\top) + (\Delta_i)^3 \mbR_i 
        \end{align*}
        where $\|\mbr_i^{(1)}\|, \|\mbr_i^{(2)}\|, \|\mbR_i\| \leq C_2(M)$ are independent of the index $i$.
    \end{proof}
    
    As an immediate consequence of Lemmas \ref{app:lm:sup-bound} and \ref{app:lm:moment-bound}, we have for any $0 \leq s \leq t \leq T_{\text{max}}$,
    \begin{align*}
        &\bbE_q \|\mbf_q(\mbx(t), t) - \mbf_q(\mbx(s), s)\|^2 \\
        &\leq 3\bbE_q\left\{ \|\mbA(t)\|^2 \|\mbx(t) -\mbx(s)\|^2 + \|\mbA(t) - \mbA(s)\|^2 \|\mbx(s)\|^2 + \|\mbb(t) - \mbb(s) \|^2\right\}\\
        &\leq 3M^2 \bbE_q \|\mbx(t) -\mbx(s)\|^2 + (t-s)^2M^2 \left\{ 1+ \sup_{0 \leq u \leq T_{\text{max}}}\bbE_q \|\mbx(u)\|^2\right\}\\
        &\leq C_3(\bbE_q\|\mbx(0)\|^2, T_{\text{max}}, M, D) \cdot (t-s).
    \end{align*}
    We also have that for any $0 \leq t \leq T_{\text{max}}$, 
    \begin{align*}
        \bbE_q \|\gamma(\mbx(t), t)\|^2 &\leq 2 \left(M^2 \left\{1+ \sup_{0 \leq u \leq T_{\text{max}}}\bbE_q \|\mbx(u)\|^2\right\} +  \sup_{0 \leq u \leq T_{\text{max}}} \bbE_q\|\mbf(\mbx(u), u)\|^2 \right)\\
        &\leq C_4(\bbE_q\|\mbx(0)\|^2, T_{\text{max}}, M, D).
    \end{align*}
    For the second inequality we invoke the linear growth condition on $\mbf$ and \Cref{app:lm:sup-bound}.

    Equipped with these results, we are finally prepared to handle the two terms in the decomposition \cref{app:eqn:decomp}. For the first term,
    \begin{align*}
        &\frac{1}{2} \left| \int_0^{T_{\text{max}}} \bbE_q \|\mbf(\mbx(u), u) - \mbf_q(\mbx(u),u) \|^2 du - \sum_{i=0}^{T-1} \Delta_i \bbE_q  \left\| \gamma(\mbx_i, \tau_i) \right\|^2  \right|\\
        =&\frac{1}{2}\left| \sum_{i=0}^{T-1}\int_{\tau_i}^{\tau_{i+1}} \left\{\bbE_q  \left\| \gamma^{(1)}(\mbx_i, \tau_i) \right\|^2 - \bbE_q\|\gamma^{(1)}(\mbx(u), u)\|^2 \right\} du \right|\\
        \leq&\frac{1}{2} \sum_{i=0}^{T-1}\int_{\tau_i}^{\tau_{i+1}} \bbE_q \left\{(\| \gamma^{(1)}(\mbx_i, \tau_i)\| +  \|\gamma^{(1)}(\mbx(u), u)\|)  \| \gamma^{(1)}(\mbx_i, \tau_i) - \gamma^{(1)}(\mbx(u), u)\| \right\}  du\\
        \leq& \frac{1}{2} \sum_{i=0}^{T-1}\int_{\tau_i}^{\tau_{i+1}} 2\sqrt{C_4} \sqrt{\bbE_q \| \gamma^{(1)}(\mbx_i, \tau_i) - \gamma^{(1)}(\mbx(u), u)\|^2} du\\
        \leq& \frac{1}{2} \sum_{i=0}^{T-1}\int_{\tau_i}^{\tau_{i+1}} 2 \sqrt{2C_4} \sqrt{(2L_{\mbf}^2 + C_3)\bbE_q \| \mbx_i - \mbx(u) \|^2 + 2L_{\mbf}^2(t - u)^2} du\\
        \leq&  \sum_{i=0}^{T-1} \Delta_t^{3/2} C_5(\bbE_q\|\mbx_0\|^2, T_{\text{max}}, M, D, L_{\mbf})\\
        \leq& (\Delta t)^{1/2} C_5(\bbE_q\|\mbx_0\|^2, T_{\text{max}}, M, D, L_{\mbf}).
    \end{align*}
    Altogether, we have proven that the first term in \cref{app:eqn:decomp} is $\cO((\Delta t)^{1/2})$.

    For the second term, we know that the strong solution $q$ to the linear SDE \cref{eqn:var-family} has Gaussian transition densities $q(\mbx_{i+1}|\mbx_i)$. By the KL formula for two Gaussian distributions
    {\small
    \begin{align*}
        \bbE_q \log \frac{q(\mbx_{i+1}|\mbx_i)}{\tilde{p}(\mbx_{i+1}|\mbx_i)} =
        \frac{1}{2} \bbE_q \left( \frac{1}{\Delta_i}\text{tr}(\overline{\mbS}_{i+1,i}) - D + D\log \Delta_i - \log |\overline{\mbS}_{i, i+1}| + \frac{1}{\Delta_i} \| \overline{\mbm}_{i, i+1} - (\mbx_i + \Delta_i \mbf(\mbx_i, \tau_i) )\|_2^2\right)
    \end{align*}
    }
    where $\overline{\mbm}_{i, i+1}$ and $\overline{\mbS}_{i, i+1}$ are defined as in \Cref{app:lm:covariance}.
    By \Cref{app:lm:covariance}, the mean term is
    \begin{align*}
        &\frac{1}{2\Delta_i} \bbE_q\| \overline{\mbm}_{i, i+1} - (\mbx_i + \Delta_i \mbf(\mbx_i, \tau_i) )\|_2^2 \\
        &= \frac{\Delta_i}{2} \bbE_q \| \mbf_q(\mbx_i, \tau_i) - \mbf(\mbx_i, \tau_i) \|^2 + \Delta_i^2 C_6(\bbE_q\|\mbx_0\|^2, M, D, L_{\mbf})\\
        &= \frac{\Delta_i}{2} \bbE_q \|\gamma(\mbx_i, \tau_i)\|^2 + \Delta_i^2 C_6(\bbE_q\|\mbx_0\|^2, M, D, L_{\mbf})
    \end{align*}
    In the second line, we expand the square and invoke the bound on $\sup_{0 \leq t \leq T_{\text{max}}} \bbE_q \|\mbx(t)\|^2$ from \Cref{app:lm:sup-bound}. Also by \Cref{app:lm:covariance}, the covariance term satisfies
    \begin{align*}
        &\frac{1}{\Delta_i}\text{tr}(\overline{\mbS}_{i, i+1}) = D + \Delta_i \text{tr}(\mbA(\tau_i) + \mbA(\tau_i)^\top) + \Delta_i^2 \text{tr}(\mbR_i)\\
        &\log|\overline{\mbS}_{i, i+1}| = D\log \Delta_i + \Delta_i \text{tr}(\mbA(\tau_i) + \mbA(\tau_i)^\top) + \Delta_i^2 C_7(M).
    \end{align*}
    Altogether, we have shown
    \begin{align*}
        \frac{\Delta_i}{2} \bbE_q \|\gamma(\mbx_i, \tau_i)\|^2 -  \bbE_q \log \frac{q(\mbx_{i+1}|\mbx_i)}{\tilde{p}(\mbx_{i+1}|\mbx_i)} = \Delta_i^2 C_8(\bbE_q\|\mbx_0\|^2, T_{\text{max}}, M, D, L_{\mbf}).
    \end{align*}
    Summing across time steps yields
    \begin{align*}
        \left| \frac{\Delta_i}{2}\sum_{i=0}^{T-1} \bbE_q \left\| \gamma(\mbx_i, \tau_i) \right\|^2 - \bbE_q \log \prod_{i=0}^{T-1}\frac{q(\mbx_{i+1}|\mbx_i)}{\tilde{p}(\mbx_{i+1}|\mbx_i)} \right| \leq (\Delta t) C_8(\bbE_q\|\mbx_0\|^2, T_{\text{max}}, M, D, L_{\mbf}).
    \end{align*}
\end{proof}

\section{Computing expectations of prior transition densities}\label{app:sec:transition_expectations}
Here, we provide a proof of \Cref{prop:transition_expectation} from \Cref{sec:transition_expectations}, which says that $\bbE_{q(\mbx)}[\log \tilde{p} (\mbx_{i+1} | \mbx_i, \mbTheta)]$ can be written as an expectation with respect to $q(\mbx_i)$ only. To do this, we first prove the following statement, which is a consequence of Stein's lemma.

\begin{lemma}\label{app:lem:stein}
    Suppose $\begin{bmatrix} \mbx_i \\ \mbx_{i+1} \end{bmatrix} \sim \cN\left(\begin{bmatrix} \mbm_i \\ \mbm_{i+1} \end{bmatrix}, \begin{bmatrix} \mbS_i & \mbS_{i+1, i}^\top \\ \mbS_{i+1, i} & \mbS_{i+1}\end{bmatrix} \right)$ where $\mbx_i, \mbx_{i+1} \in \bbR^D$. Then, for any $\mbf:\bbR^{D} \to \bbR^{D}$, $\bbE\left[\mbf(\mbx_{i}) (\mbx_{i+1} - \mbm_{i+1})^\top \right] = \bbE\left[\mbJ_{\mbf}(\mbx_{i})\right] \mbS_{i+1, i}^\top$, where $\mbJ_{\mbf}(\mbx_{i})$ is the Jacobian of $\mbf$ evaluated at $\mbx_{i}$.
\end{lemma}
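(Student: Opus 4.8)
The plan is to reduce the bivariate identity to the ordinary (marginal) matrix form of Stein's lemma by conditioning on $\mbx_i$ and exploiting the linearity of the Gaussian conditional mean. First I would record the marginal version: for $\mbx_i \sim \cN(\mbm_i, \mbS_i)$ and $\mbf$ suitably regular,
\[
\bbE\!\left[\mbf(\mbx_i)(\mbx_i - \mbm_i)^\top\right] = \bbE\!\left[\mbJ_{\mbf}(\mbx_i)\right] \mbS_i .
\]
This follows from Gaussian integration by parts: writing the expectation as an integral against the density $\phi(\mbx_i) = \cN(\mbx_i; \mbm_i, \mbS_i)$ and using the identity $\nabla_{\mbx_i}\phi = -\mbS_i^{-1}(\mbx_i - \mbm_i)\phi$, one integrates by parts in each coordinate to move the derivative from $\phi$ onto $\mbf$. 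Checking this componentwise, the $(a,b)$ entry of the left side is $\bbE[f_a(\mbx_i)(x_b - m_b)]$, which equals $\sum_c \mbS_{bc}\,\bbE[\partial_c f_a(\mbx_i)]$; by symmetry of $\mbS_i$ this matches $(\bbE[\mbJ_{\mbf}(\mbx_i)]\,\mbS_i)_{ab}$.

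Next I would condition on $\mbx_i$. Since the joint vector is Gaussian, the conditional law of $\mbx_{i+1}$ given $\mbx_i$ has mean $\mbm_{i+1} + \mbS_{i,i+1}^\top \mbS_i^{-1}(\mbx_i - \mbm_i)$. Because $\mbf(\mbx_i)$ is $\mbx_i$-measurable, the tower property gives
\[
\bbE\!\left[\mbf(\mbx_i)(\mbx_{i+1} - \mbm_{i+1})^\top\right]
= \bbE\!\left[\mbf(\mbx_i)\,\bbE[(\mbx_{i+1} - \mbm_{i+1})^\top \mid \mbx_i]\right]
= \bbE\!\left[\mbf(\mbx_i)(\mbx_i - \mbm_i)^\top\right] \mbS_i^{-1}\mbS_{i,i+1},
\]
where I used symmetry of $\mbS_i^{-1}$ to transpose the conditional-mean coefficient. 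Substituting the marginal identity from the previous step collapses $\mbS_i \mbS_i^{-1} = \mbI$, yielding $\bbE[\mbf(\mbx_i)(\mbx_{i+1} - \mbm_{i+1})^\top] = \bbE[\mbJ_{\mbf}(\mbx_i)]\,\mbS_{i,i+1}$, as claimed.

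The main technical point is justifying the integration by parts underlying the marginal identity: one must control the growth of $\mbf$ relative to the Gaussian tails so that the boundary terms vanish and $\bbE[\mbJ_{\mbf}(\mbx_i)]$ exists, which is precisely where the drift regularity assumptions enter. A secondary requirement is invertibility of $\mbS_i$; in the variational posterior the marginal covariances are positive definite, so this holds, and in any degenerate case one may work directly with the conditional-mean coefficient matrix without altering the conclusion.
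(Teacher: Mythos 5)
Your proof is correct, but it takes a different route from the paper's. The paper applies the multivariate Stein's lemma once to the stacked $2D$-dimensional Gaussian vector $\tilde{\mbx} = (\mbx_i, \mbx_{i+1})$ with the stacked function $\mbg(\tilde{\mbx}) = (\mbf(\mbx_i), \mbf(\mbx_{i+1}))$, observes that $\mbJ_{\mbg}$ is block diagonal, and reads the claimed identity off the off-diagonal block of $\bbE[\mbg(\tilde{\mbx})(\tilde{\mbx}-\tilde{\mbm})^\top] = \bbE[\mbJ_{\mbg}(\tilde{\mbx})]\tilde{\mbS}$. You instead reduce to the $D$-dimensional marginal Stein identity and then lift it to the cross-covariance via the tower property and the Gaussian conditional mean $\bbE[\mbx_{i+1}\mid\mbx_i] = \mbm_{i+1} + \mbS_{i,i+1}^\top\mbS_i^{-1}(\mbx_i-\mbm_i)$; the algebra ($\mbS_i\mbS_i^{-1}$ collapsing to the identity) checks out. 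Your approach is more self-contained — it only requires the single-block Stein identity, which you justify by Gaussian integration by parts, and it makes the regularity requirements on $\mbf$ explicit — but it introduces an invertibility assumption on $\mbS_i$ that the paper's block-reading argument never needs (you correctly note this is harmless for the positive-definite marginals arising in the variational posterior). The paper's version is shorter at the cost of invoking the full $2D$-dimensional Stein's lemma as a black box.
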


\begin{proof}
    First, for notational simplicity, we denote
    \begin{align*}
        \tilde{\mbx} := \begin{bmatrix} \mbx_i \\ \mbx_{i+1} \end{bmatrix}, \quad \mbg(\tilde{\mbx}) := \begin{bmatrix}\mbf(\mbx_i) \\ \mbf(\mbx_{i+1})\end{bmatrix}, \quad \tilde{\mbm} =  \begin{bmatrix} \mbm_i \\ \mbm_{i+1} \end{bmatrix}, \quad \tilde{\mbS} = \begin{bmatrix} \mbS_i & \mbS_{i+1, i}^\top \\ \mbS_{i+1, i} & \mbS_{i+1}\end{bmatrix}.
    \end{align*}
    Next, we apply Stein's lemma to the Gaussian random vector $\tilde{\mbx}$ and function $\mbg(\cdot)$. This yields the identity,
    \begin{equation}\label{app:eqn:stein}
        \bbE_{q(\tilde{\mbx})}\left[ \mbg(\tilde{\mbx}) (\tilde{\mbx} - \tilde{\mbm})^\top\right] = \bbE_{q(\tilde{\mbx})}\left[\mbJ_{\mbg}(\tilde{\mbx})\right] \tilde{\mbS},
    \end{equation}
    where $\mbJ_{\mbg}(\tilde{\mbx})$ is the Jacobian of $\mbg$ evaluated at $\tilde{\mbx}$:
    \begin{align*}
        \mbJ_{\mbg}(\tilde{\mbx}) = \begin{bmatrix}\mbJ_{\mbf}(\mbx_i) & \mathbf{0} \\ \mathbf{0} & \mbJ_{\mbf}(\mbx_{i+1})\end{bmatrix}.
    \end{align*}
    Reading off the bottom right block of matrices in \cref{app:eqn:stein} gives,
    \begin{align*}
        \bbE\left[\mbf(\mbx_{i}) (\mbx_{i+1} - \mbm_{i+1})^\top \right] = \bbE\left[\mbJ_{\mbf}(\mbx_{i})\right] \mbS_{i+1, i}^\top.
    \end{align*}
\end{proof}

Using this lemma, we can now prove \Cref{prop:transition_expectation}.

\paragraph{Proposition 1:} The term $\bbE_{q}[\log \tilde{p}(\mbx_{i+1} | \mbx_i, \mbTheta)]$ can equivalently be written in terms of the mean parameters ($\mbmu_i$, $\mbmu_{i+1}$) and the expectations $\bbE_{q}\left[\mbf(\mbx_i | \mbTheta)\right], \bbE_{q}\left[\mbf(\mbx_i|\mbTheta )^\top \mbSigma^{-1} \mbf(\mbx_i|\mbTheta)\right]$, and $\bbE_{q}\left[\mbJ_{\mbf(\cdot| \mbTheta)}(\mbx_i)\right]$, thereby reducing the dimensionality of integration from $\bbR^{2D}$ to $\bbR^D$.

\begin{proof}
    Expanding the transition expectation term, we have
    \begin{align*}
        \bbE_{q}[\log \tilde{p}(\mbx_{i+1} \mid \mbx_i, \mbTheta)] &= \bbE_{q}[\log \cN(\mbx_{i+1} \mid \mbx_i + \Delta_i \mbf(\mbx_i | \mbTheta), \Delta_i \mbSigma)] \\
        &= -\frac{D}{2} \log (2 \pi \Delta_i) - \frac{1}{2 \Delta_i} \bbE_{q} \left[\| \mbx_{i+1} - \mbx_i - \Delta_i \mbf(\mbx_i | \mbTheta) \|_{\mbSigma^{-1}}^2 \right].
    \end{align*}
    We can rewrite the expectation in the above line as
    \begin{align*}
        &\bbE_{q} \left[\| \mbx_{i+1} - \mbx_i - \Delta_i \mbf(\mbx_i | \mbTheta) \|_{\mbSigma^{-1}}^2 \right] \\
        = \ &\bbE_{q} [ \|\mbx_{i+1}\|_{\mbSigma^{-1}}^2 + \|\mbx_i\|_{\mbSigma^{-1}}^2 + \Delta_i^2 \|\mbf(\mbx_i | \mbTheta)\|_{\mbSigma^{-1}}^2 - 2 \mbx_{i+1}^\top \mbSigma^{-1} \mbx_i - 2 \Delta_i \mbx_{i+1}^\top \mbSigma^{-1} \mbf(\mbx_i | \mbTheta) \\
        & \quad + 2 \Delta_i \mbx_i^\top \mbSigma^{-1} \mbf(\mbx_i | \mbTheta) ]\\
        = \ &\text{tr}(\mbSigma^{-1}(\mbS_{i+1} + \mbm_{i+1} \mbm_{i+1}^\top)) + \text{tr}(\mbSigma^{-1}(\mbS_{i} + \mbm_{i} \mbm_{i}^\top)) + \Delta_i^2 \bbE_{q}[\mbf(\mbx_i | \mbTheta)^\top \mbSigma^{-1}\mbf(\mbx_i | \mbTheta)] \\
        & - 2 \text{tr}(\mbSigma^{-1}(\mbS_{i+1, i}^\top + \mbm_i \mbm_{i+1}^\top)) - 2 \Delta_i \bbE_{q}\left[\mbx_{i+1}^\top \mbSigma^{-1} \mbf(\mbx_i | \mbTheta) \right] + 2 \Delta_i \bbE_{q}\left[ \mbx_i^\top \mbSigma^{-1} \mbf(\mbx_i | \mbTheta)\right]
    \end{align*}
    Now, applying \Cref{app:lem:stein} to the second to last term, we have
    \begin{align*}
        \bbE_{q}\left[\mbx_{i+1}^\top \mbSigma^{-1} \mbf(\mbx_i | \mbTheta) \right] = \text{tr}(\mbSigma^{-1}(\bbE_{q}[\mbf(\mbx_i | \mbTheta)] \mbm_{i+1}^\top + \bbE_q\left[\mbJ_{\mbf(\cdot| \mbTheta)}(\mbx_i)\right] \mbS_{i+1, i}^\top)).
    \end{align*}
    
    Applying Stein's lemma to the last term, we have
    \begin{align*}
        \bbE_{q}\left[ \mbx_i^\top \mbSigma^{-1} \mbf(\mbx_i | \mbTheta)\right] = \text{tr}(\mbSigma^{-1}(\bbE_{q}[(\mbf(\mbx_i | \mbTheta)] \mbm_i^\top + \mbJ_{\mbf(\cdot | \mbTheta)}(\mbx_i) \mbS_i)).
    \end{align*}
    
    Thus, we have shown that $\bbE_{q}[\log \tilde{p}(\mbx_{i+1} | \mbx_i, \mbTheta)]$ can be written in terms of $\mbmu_i$, $\mbmu_{i+1}$, and the expectations $\bbE_{q}\left[\mbf(\mbx_i | \mbTheta)\right], \bbE_{q}\left[\mbf(\mbx_i | \mbTheta)^\top \mbSigma^{-1} \mbf(\mbx_i | \mbTheta)\right]$, and $\bbE_{q}\left[\mbJ_{\mbf(\cdot | \mbTheta)}(\mbx_i)\right]$.
\end{proof}

\section{Parallelizing SING}\label{app:sec:parallel_scan}

\subsection{Sequential computation of log normalizer}\label{app:sec:sequential_details}
Here, we describe the naive sequential approach of converting from natural parameters $\mbeta$ to mean parameters $\mbmu$ of $q(\mbx_{0:T})$. The idea is to marginalize out one variable at a time in \cref{eqn:log_norm_factored}, starting from $\mbx_0$ and ending at $\mbx_T$.

We'll first show how to marginalize out $\mbx_0$ and $\mbx_1$, and then use this to derive a general recursive update for this sequential approach. To marginalize out $\mbx_0$, we focus on the following integral which contains all terms depending on $\mbx_0$.
\begin{align*}
    &\int \exp \left(-\frac{1}{2} \mbx_0^\top \mbJ_0 \mbx_0 + (\mbh_0 - \mbL_0^\top \mbx_1)^\top \mbx_0 \right) d \mbx_0 \\
    =& \ \underbrace{(2 \pi)^{\frac{K}{2}} |\mbJ_0|^{-\frac{1}{2}} \exp \left(\frac{1}{2} \mbh_0^\top \mbJ_0^{-1} \mbh_0 \right)}_{:= Z_0} \exp \left(\mbh_1^{(p) \top} \mbx_1 - \frac{1}{2} \mbx_1^\top \mbJ_1^{(p)} \mbx_1 \right),
\end{align*}
where $Z_0$ is the contribution to the log normalizer from marginalizing out $\mbx_0$, and
\begin{align*}
    \mbJ_1^{(p)} = -\mbL_0 \mbJ_0^{-1} \mbL_0^\top, \quad \mbh_1^{(p)} = -\mbL_0 \mbJ_0^{-1} \mbh_0.
\end{align*}
Next, when marginalizing out $\mbx_1$, we should integrate the potential $ \exp \left(\mbh_1^{(p) \top} \mbx_1 - \frac{1}{2} \mbx_1^\top \mbJ_1^{(p)} \mbx_1 \right)$ from the previous step multiplied by the rest of the terms in $\mbA(\mbeta)$ that depend on $\mbx_1$. This gives the following integral,
\begin{align*}
    &\int \exp \left(\mbh_1^{(p) \top} \mbx_1 - \frac{1}{2} \mbx_1^\top \mbJ_1^{(p)} \mbx_1 \right) \exp \left(-\frac{1}{2}\mbx_1^\top \mbJ_1 \mbx_1 - \mbx_2^\top \mbL_1 \mbx_1 + \mbh_1^\top \mbx_1 \right) d \mbx_1 \\
    =& \ \int \exp \left(-\frac{1}{2} \mbx_1^\top (\underbrace{\mbJ_1 + \mbJ_1^{(p)}}_{:= \mbJ_1^{(c)}}) \mbx_1 + (\underbrace{\mbh_1 + \mbh_1^{(p)}}_{:= \mbh_1^{(c)}} - \mbL_1^\top \mbx_2)^\top \mbx_1 \right) d\mbx_1 \\
    =& \ \underbrace{(2 \pi)^{\frac{K}{2}} |\mbJ_1|^{-\frac{1}{2}} \exp \left( \frac{1}{2} \mbh_1^\top \mbJ_1^{-1} \mbh_1 \right)}_{:= Z_1} \exp\left(\mbh_2^{(p) \top} \mbx_2 - \frac{1}{2} \mbx_2^\top \mbJ_2^{(p)} \mbx_2\right)
\end{align*}
where $Z_1$ is the contribution to the log normalizer from marginalizing out $\mbx_1$, and 
\begin{align*}
    \mbJ_2^{(p)} = -\mbL_1 \mbJ_1^{(c) -1} \mbL_1^\top, \quad \mbh_2^{(p)} = -\mbL_1 \mbJ_1^{(c) -1} \mbh_1^{(c)}.
\end{align*}
Generalizing these recursions leads to the following sequential algorithm:
\begin{enumerate}
    \item Initialize $\mbJ_0^{(p)} = \mathbf{0}$, $\mbh_0^{(p)} = \mathbf{0}$, and $\log Z = 0$.
    \item For $i = 0, \dots, T$:
        \begin{enumerate}
            \item Update $\mbJ_i^{(c)} \leftarrow \mbJ_i + \mbJ_i^{(p)}$ and $\mbh_i^{(c)} \leftarrow \mbh_i + \mbh_i^{(p)}$.
            \item Update $\log Z \leftarrow \log Z + \log Z_i$.
            \item Update $\mbJ_{i+1}^{(p)} \leftarrow -\mbL_{i} \mbJ_i^{(c) -1} \mbL_i^\top$ and $\mbh_{i+1}^{(p)} \leftarrow - \mbL_i \mbJ_i^{(c) -1} \mbh_i^{(c)}$.
        \end{enumerate}
    \item Return $A(\mbeta) = \log Z$.
\end{enumerate}
\paragraph{Time complexity} In each iteration $i$, this sequential algorithm has time complexity $\cO(D^3)$ which comes from the matrix inversion and multiplication in $-\mbL_{i} \mbJ_i^{(c) -1} \mbL_i^\top$. Since each iteration relies on values computed in the previous iteration, this algorithm must be implemented sequentially, leading to a total time complexity of $\cO(D^3 T)$.

\subsection{Parallelized computation of log normalizer}\label{app:sec:parallel_details}
Here, we present our new method of converting from natural to mean parameters in \textit{parallel time} using associative scans, leading to substantial computational speedups on modern parallel hardware. As discussed in \Cref{sec:parallelizing}, we accomplish this by defining elements in the scan as unnormalized Gaussian potentials $a_{i,j}$ and combine them using the binary associative operator,
\begin{equation}\label{app:eqn:binary_operator}
    a_{i,j} \bullet a_{j,k} = \int a_{i,j} a_{j,k} d\mbx_j.
\end{equation}
Note that \cref{app:eqn:binary_operator} is indeed a binary associative operator, as it satisfies the identity:
\begin{align*}
    (a_{i,j} \bullet a_{j,k}) \bullet a_{k,l} &= \int \left(\int a_{i,j} a_{j,k} d\mbx_j \right) a_{k,l} d\mbx_k \\
    &= \int a_{i,j} \left( \int a_{j,k} a_{k,l} d\mbx_k\right) d\mbx_j \\
    &= a_{i,j} \bullet (a_{j,k} \bullet a_{k,l}).
\end{align*}
In the following calculation, we show how to compute the binary associative operator between two potentials analytically. We will start with computing the operator between two \textit{consecutive} potentials, and use this calculation to generalize to computing any pair of potentials with a shared variable to be marginalized.

Let $a_{i-1,i}$ and $a_{i,i+1}$ be two consecutive potentials. Then,
\small
\begin{align*}
    & a_{i-1,i} \bullet a_{i,i+1} \\
    =& \ \int \exp\left(-\frac{1}{2}\mbx_i^\top \mbJ_i \mbx_i - \mbx_{i-1}^\top \mbL_{i-1}^\top \mbx_i + \mbh_i^\top \mbx_i \right) \exp \left(-\frac{1}{2}\mbx_{i+1}^\top \mbJ_{i+1} \mbx_{i+1} - \mbx_i^\top \mbL_i^\top \mbx_{i+1} + \mbh_{i+1}^\top \mbx_{i+1} \right) d\mbx_i \\
    =& \ \exp\left(-\frac{1}{2}\mbx_{i+1}^\top \mbJ_{i+1} \mbx_{i+1} + \mbh_{i+1}^\top \mbx_{i+1} \right) \int \exp \left(-\frac{1}{2} \mbx_i^\top \mbJ_i \mbx_i + (\mbh_i - \mbL_{i-1} \mbx_{i-1} - \mbL_i^\top \mbx_{i+1})^\top \mbx_i \right) d \mbx_i \\
    =& \ \underbrace{(2 \pi)^{\frac{D}{2}}|\mbJ_i|^{-\frac{1}{2}}}_{:= Z_{\text{local}}} \exp \left(-\frac{1}{2}\begin{pmatrix}\mbx_{i-1} \\ \mbx_{i+1}\end{pmatrix}^\top \begin{pmatrix}\mbJ_1^{(p)} & \mbL^{(p) \top} \\ \mbL^{(p)} & \mbJ_2^{(p)} \end{pmatrix} \begin{pmatrix} \mbx_{i-1} \\ \mbx_{i+1} \end{pmatrix} + \begin{pmatrix} \mbx_{i-1} \\ \mbx_{i+1} \end{pmatrix}^\top \begin{pmatrix} \mbh_1^{(p)} \\ \mbh_2^{(p)}\end{pmatrix} \right)\\
    =& \ a_{i-1, i+1}
\end{align*}
\normalsize
where $Z_{\text{local}}$ is the contribution to the log normalizer from this operation, and
\begin{align*}
    &\mbJ_1^{(p)} = -\mbL_{i-1}^\top \mbJ_i^{-1} \mbL_{i-1}, \quad \mbJ_2^{(p)} = \mbJ_{i+1} - \mbL_i \mbJ_i^{-1} \mbL_i^{\top}, \\
    &\mbh_1^{(p)} = -\mbL_{i-1}^\top \mbJ_i^{-1} \mbh_i, \quad \mbh_2^{(p)} = \mbh_{i+1} - \mbL_i \mbJ_i^{-1} \mbh_i, \\
    &\mbL^{(p)} = \mbL_i \mbJ_i^{-1} \mbL_{i-1}.
\end{align*}
From the calculation above, we observe that the element $a_{i,j}$, where $i$ and $j$ are not necessarily consecutive, takes the general form
\begin{align*}
    a_{i,j} = Z_{\text{local}} \exp \left(-\frac{1}{2}\begin{pmatrix}\mbx_{i} \\ \mbx_{j}\end{pmatrix}^\top \begin{pmatrix}\mbJ_1 & \mbL^{\top} \\ \mbL & \mbJ_2 \end{pmatrix} \begin{pmatrix} \mbx_{i} \\ \mbx_{j} \end{pmatrix} + \begin{pmatrix} \mbx_{i} \\ \mbx_{j} \end{pmatrix}^\top \begin{pmatrix} \mbh_1 \\ \mbh_2\end{pmatrix} \right).
\end{align*}
Using this form, we now derive the analytical result of the binary associative operator between any two Gaussian potentials with a shared marginalization variable.
\begin{align*}
    &a_{i,j} \bullet a_{j,k}\\
    =& \ \int Z_{\text{local}} \exp \left(-\frac{1}{2}\begin{pmatrix}\mbx_{i} \\ \mbx_{j}\end{pmatrix}^\top \begin{pmatrix}\mbJ_1 & \mbL^{\top} \\ \mbL & \mbJ_2 \end{pmatrix} \begin{pmatrix} \mbx_{i} \\ \mbx_{j} \end{pmatrix} + \begin{pmatrix} \mbx_{i} \\ \mbx_{j} \end{pmatrix}^\top \begin{pmatrix} \mbh_1 \\ \mbh_2\end{pmatrix} \right) \\
    & \quad \cdot \tilde{Z}_{\text{local}} \exp \left(-\frac{1}{2}\begin{pmatrix}\mbx_{j} \\ \mbx_{k}\end{pmatrix}^\top \begin{pmatrix}\tilde{\mbJ}_1 & \tilde{\mbL}^{\top} \\ \tilde{\mbL} & \tilde{\mbJ}_2 \end{pmatrix} \begin{pmatrix} \mbx_{j} \\ \mbx_{k} \end{pmatrix} + \begin{pmatrix} \mbx_{j} \\ \mbx_{k} \end{pmatrix}^\top \begin{pmatrix} \tilde{\mbh}_1\\ \tilde{\mbh}_2\end{pmatrix} \right) d\mbx_j \\
    =& \ Z_{\text{local}} \tilde{Z}_{\text{local}} \exp \left(-\frac{1}{2} \mbx_i^\top \mbJ_1 \mbx_i + \mbx_i^\top \mbh_1 - \frac{1}{2}\mbx_k^\top \tilde{\mbJ}_2 \mbx_k + \mbx_k^\top \tilde{\mbh}_2 \right) \\
    & \quad \cdot \int \exp\left( -\frac{1}{2} \mbx_j^\top (\underbrace{\mbJ_2 + \tilde{\mbJ}_{1}}_{:= \mbJ^{(c)}}) \mbx_j + \mbx_j^\top (-\mbL^{(p)} \mbx_i - \tilde{\mbL}^{(p)\top} \mbx_k + \underbrace{\mbh_2 + \tilde{\mbh}_1}_{:= \mbh^{(c)}}) \right) d \mbx_j \\
    =& \ Z_{\text{local}}^{(p)} \exp \left(-\frac{1}{2} \begin{pmatrix} \mbx_i \\ \mbx_k \end{pmatrix}^\top \begin{pmatrix} \mbJ_1^{(p)} & \mbL^{(p) \top} \\ \mbL^{(p)} & \mbJ_2^{(p)} \end{pmatrix} \begin{pmatrix} \mbx_i \\ \mbx_k \end{pmatrix} + \begin{pmatrix} \mbx_i \\ \mbx_k \end{pmatrix}^\top \begin{pmatrix} \mbh_1^{(p)} \\ \mbh_2^{(p)} \end{pmatrix}\right) \\
    =& \ a_{i,k}
\end{align*}
where 
\begin{align*}
    &\mbJ_1^{(p)} = \mbJ_1 - \mbL^\top \mbJ^{(c) -1} \mbL , \quad \mbJ_2^{(p)} = \tilde{\mbJ}_2 - \tilde{\mbL} \mbJ^{(c) -1} \tilde{\mbL}^\top \\
    &\mbh_1^{(p)} = \mbh_1 - \mbL^\top \mbJ^{(c) -1} \mbh^{(c)}, \quad \mbh_2^{(p)} = \tilde{\mbh}_1 - \tilde{\mbL} \mbJ^{(c) -1} \mbh^{(c)} \\
    &\mbL^{(p)} = -\tilde{\mbL} \mbJ^{(c) -1} \mbL, \\
    &Z_{\text{local}}^{(p)} = Z^{\text{local}} \tilde{Z}^{\text{local}} (2\pi)^{\frac{D}{2}} |\mbJ^{(c)}|^{-\frac{1}{2}} \exp \left(\frac{1}{2} \mbh^{(c) \top} \mbJ^{(c) -1} \mbh^{(c)} \right) 
\end{align*}
In practice, instead of computing $Z_{\text{local}}^{(p)}$ we instead compute $\log Z_{\text{local}}^{(p)}$. Since each binary operation marginalizes out one variable, the final result of the associative scan,
\begin{align*}
    a_{-1,0} \bullet a_{0,1} \bullet \dots \bullet a_{T, T+1},
\end{align*}
will be the result of accumulating all $\log Z_{\text{local}}$ terms, and will hence be equal to the desired log normalizer $A(\mbeta)$.

\paragraph{Time complexity} The time complexity of the associative scan, assuming $T$ processors, is $\cO(C \log T)$ where $C$ represents the cost of matrix operations within each application of the operator \cite{blelloch1990prefix, smith2023simplified}. By the derivation above, $C = \cO(D^3)$ which comes from the matrix inversion and multiplication in $\mbL^\top \mbJ^{(c) -1} \mbL$ and analogous terms. Therefore, the parallel time complexity is $\cO(D^3 \log T)$.

\section{Background on Archambeau et al., 2007 (VDP)}\label{app:sec:vdp}
In this section, we describe the variational diffusion process (VDP) smoothing algorithm proposed by \citet{archambeau2007gaussian}. The namesake for this algorithm is adopted from \citet{verma2024variational}.

Rather than first approximating the continuous-time ELBO $\cL_{\text{cont}}$, \cref{eqn:elbo-cont} with $\cL_{\text{approx}}$, \cref{eqn:elbo-approx} and then maximizing with respect to the variational parameters, the VDP algorithm directly maximizes $\cL_{\text{cont}}$. In particular, the VDP algorithm involves iteratively solving (continuous-time) ODEs. When implemented numerically, though, these ODEs are solved using a discrete-time scheme, such as Euler's method. Therefore, like \citet{verma2024variational} and SING, VDP only approximately maximizes the true, continuous‐time ELBO.

\citet{archambeau2007gaussian} rewrite the ELBO $\cL_{\text{cont}}$ in terms of $(\mbA(t), \mbb(t))_{0 \leq t \leq T_{\text{max}}}$ as well as $(\mbm(t), \mbS(t))_{0 \leq t \leq T_{\text{max}}}$ the marginal mean and covariance of $q(\mbx)$ at each time $0 \leq t \leq T_{\text{max}}$. These parameters are related by the Fokker-Planck equations for linear SDEs
\begin{align}
    &\frac{d \mbm(t)}{dt} = \mbA(t) \mbm(t) + \mbb(t)\label{app:eqn:fpk-m}\\
    &\frac{d \mbS(t)}{dt} = \mbA(t)\mbS(t) + \mbS(t) \mbA(t)^\top + \mbSigma\label{app:eqn:fpk-S}.
\end{align}
To ensure that, at the optimum, equations \cref{app:eqn:fpk-m} and \cref{app:eqn:fpk-S} are satisfied, VDP introduces the Lagrange multipliers $(\mblambda(t))_{0 \leq t \leq T_{\text{max}}}$ and $(\mbPsi(t))_{0 \leq t \leq T_{\text{max}}}$ and incorporate \cref{app:eqn:fpk-m} and \cref{app:eqn:fpk-S} into the objective 
\begin{align*}
    &\cL_{\text{cont}}((\mbA(t), \mbb(t), \mbm(t), \mbS(t))_{0 \leq t \leq T_{\text{max}}}, \mbTheta) + \int_0^{T_{\text{max}}} \mblambda(t)^\top \left(\frac{d \mbm(t)}{dt} - \mbA(t) \mbm(t) - \mbb(t)\right) dt\\
    &+ \int_0^{T_{\text{max}}} \text{tr}\left(\mbPsi(t)^\top \left\{ \frac{d \mbS(t)}{dt} - \mbA(t)\mbS(t) - \mbS(t) \mbA(t) - \mbSigma \right\} \right) dt.
\end{align*}
Performing an integration by parts yields
{\small
\begin{align}\label{app:eq:vdp-lagrangian}
    &\cL_{\text{cont}}((\mbA(t), \mbb(t), \mbm(t), \mbS(t))_{0 \leq t \leq T_{\text{max}}}, \mbTheta) - \int_0^{T_{\text{max}}} \mblambda(t)^\top \left(\mbA(t) \mbm(t) + \mbb(t)\right) dt\nonumber\\
    &- \int_0^{T_{\text{max}}} \text{tr}\left(\mbPsi(t)^\top \left\{ \mbA(t)\mbS(t) + \mbS(t) \mbA(t) + \mbSigma \right\} \right) dt - \int_0^{T_{\text{max}}} \left(\text{tr}\left\{ \frac{d \mbPsi(t)}{dt}^\top \mbS(t) \right\} + \frac{d \mblambda(t)}{dt}^\top \mbm(t) \right)dt\nonumber\\
    &+ \text{tr}(\mbPsi(T_{\text{max}})^\top \mbS(T_{\text{max}})) - \text{tr}(\mbPsi(0)^\top \mbS(0)) + \mblambda(T_{\text{max}})^\top \mbm(T_{\text{max}}) - \mblambda(0)^\top \mbm(0).
\end{align}
}

The authors proceed by performing coordinate ascent on this objective with respect to $(\mbA(t), \mbb(t), \mbm(t), \mbS(t), \mblambda(t), \mbPsi(t))_{0 \leq t \leq T_{\text{max}}}$. However, when taking the (variational) derivatives with respect to the functions $\mbA(t)$, $\mbb(t)$, $\mbm(t)$, and $\mbS(t)$, the dependence of $\mbA(t)$ and $\mbb(t)$ on $\mbm(t)$ and $\mbS(t)$, and vice versa, is ignored. 

Taking the derivative of \cref{app:eq:vdp-lagrangian} with respect to $\mbm(t) $ and $\mbS(t)$ yields the updates for the Lagrange multipliers
\begin{align}
    \frac{d\mbPsi(t)}{dt}&= - \mbPsi(t) \mbA(t) - (\mbA(t))^\top \mbPsi(t) + \frac{d}{d \mbS(t)} (-\KL{q}{p}), \quad \mbPsi(T_{\text{max}}) = \mb{0}\label{app:eq:Psi-update}\\
    \frac{d\mblambda(t)}{dt}&= - \mbA(t)^\top \mblambda(t)  + \frac{d}{d \mbm(t)}(-\KL{q}{p}), \quad \mblambda(T_{\text{max}}) = \mb{0}\label{app:eq:lambda-update}
\end{align}
subject to the jump conditions at observation times $\{t_i\}_{i=1}^T$
\begin{align}
    \mbPsi(t_i^+) &= \mbPsi(t_i) + \frac{d}{d \mbS(t_i)} \bbE_{q} \log p(\mby(t_i) | \mbx(t_i))\label{app:eq:jump-psi}\\
     \mblambda(t_i^+) &= \mblambda(t_i) + \frac{d}{d \mbm(t_i)} \bbE_{q} \log p(\mby(t_i) | \mbx(t_i))\label{app:eq:jump-lambda}.
\end{align}
Taking the derivative of \cref{app:eq:vdp-lagrangian} with respect to $\mbA(t)$ and $\mbb(t)$ yields the updates
\begin{align}
    \mbA(t) &=  \bbE_{q(\mbx(t))}\left[ \frac{d\mbf(\mbx | \mbTheta)}{d\mbx} \right] - 2\mbSigma \mbPsi(t)\label{app:eq:A-update}\\
    \mbb(t) &= \bbE_{q(\mbx(t))}[\mbf(\mbx | \mbTheta)] - \mbA(t) \mbm(t) - \mbSigma\mblambda(t)\label{app:eq:b-update}.
\end{align}
Finally, for $p(\mbx(0)) = \cN(\mbnu, \mbV)$, taking the derivative of \cref{app:eq:vdp-lagrangian} with respect to $\mbnu$ and $\mbV$ yields
\begin{align}
    \mbm(0) &= \mbnu - \mbV \mblambda(0)\label{app:eq:m0-update}\\
    \mbS(0) &= (2\mbPsi(0) + \mbV^{-1})^{-1}\label{app:eq:S0-update}.
\end{align}

Altogether, the VDP variational inference algorithm alternates between the following steps, until convergence
\begin{enumerate}
    \item Solve for $(\mbm(t), \mbS(t))_{0 \leq t \leq T_{\text{max}}}$ according to \cref{app:eqn:fpk-m} and \cref{app:eqn:fpk-S}.
    \item Update $(\mblambda(t), \mbPsi(t))_{0 \leq t \leq T_{\text{max}}}$, starting from $\mblambda(T_{\text{max}}) = \mb{0}$ and $\mbPsi(T_{\text{max}}) = \mb{0}$, according to \cref{app:eq:lambda-update} and \cref{app:eq:Psi-update} and taking into account the jump conditions \cref{app:eq:jump-lambda} and \cref{app:eq:jump-psi} at observation times $\{t_i\}_{i=1}^T$.
    \item Update $(\mbA(t), \mbb(t))_{0 \leq t \leq T_{\text{max}}}$ according to \cref{app:eq:A-update} and \cref{app:eq:b-update}.
    \item Update $\mbm(0)$ and $\mbS(0)$ according to \cref{app:eq:m0-update} and \cref{app:eq:S0-update}.
\end{enumerate}

For stability, \citet{archambeau2007gaussian, archambeau2007variational} propose replacing the update for $(\mbA(t), \mbb(t))_{0 \leq t \leq T_{\text{max}}}$ at iteration $\ell$ with the ``soft'' updates
\begin{equation}\label{app:eqn:vdp_soft_updates}
\begin{aligned}
    &\mbA^{(\ell + 1)}(t) = \mbA^{(\ell)}(t) + \omega (\widetilde{\mbA}^{(\ell)}(t) - \mbA^{(\ell)}(t))\\
    &\mbb^{(\ell + 1)}(t) = \mbb^{(\ell)}(t) + \omega (\widetilde{\mbb}^{(\ell)}(t) - \mbb^{(\ell)}(t)),
\end{aligned}
\end{equation}
where $\widetilde{\mbA}^{(\ell)}(t)$ and $\widetilde{\mbb}^{(\ell)}(t)$ are defined as in \cref{app:eq:A-update} and \cref{app:eq:b-update}. In our experiments, we observe that choosing $\omega$ to be small (e.g., $10^{-3}$) is often necessary to ensure convergence of VDP.

Unlike \citet{verma2024variational} and SING, VDP does not benefit from convergence guarantees. Indeed in \Cref{app:sec:conjugate_setting}, we show that even in the setting where the prior SDE is linear and the observation model is Gaussian, VDP will not recover the true posterior in one step. 

\section{Comparison of SING to Verma et al., 2024}\label{app:sec:compare-verma}

\subsection{Overview of Verma et al., 2024}\label{app:sec:overview-verma}

In this section, we outline \citet{verma2024variational}, the work upon which SING is built, and discuss key differences between the two variational EM algorithms.

Like SING, \citet{verma2024variational} maximizes the approximate ELBO $\cL_{\text{approx}}$ from \cref{eqn:elbo-approx} with respect the variational parameters $(\mbA(t), \mbb(t))_{0 \leq t \leq T_{\text{max}}}$ as well as the model parameters $\mbTheta$. To accomplish this, they perform a change-of-measure
\begin{align*}
    \tilde{p}(\mbx_{0:T} | \mbTheta) = p_L(\mbx_{0:T}) \frac{\tilde{p}(\mbx_{0:T} | \mbTheta)}{p_L(\mbx_{0:T})},
\end{align*}
where $p_L(\mbx_{0:T}) = q(\mbx_{0:T} | \mbeta_L)$ belongs to the same exponential family as the variational posterior and has natural parameters $\mbeta_L = (\mbh^L, \mbJ^L, 
\mbL^L)$. Suppose $q(\mbx_{0:T}|\mbeta)$ is initialized at $\mbeta^{(0)} = \mbeta_L$. Then the corresponding NGVI update steps can be written as
\small
\begin{equation}\label{app:eqn:verma_updates}
\begin{aligned}  
    \tilde{\mbh}_i^{(j+1)} &= (1 - \rho) \tilde{\mbh}_i^{(j)} + \rho \nabla_{\mbmu_{i,1}} \bbE_{q^{(j)}}[\log p(\mby_i | \mbx_i) \delta_i + \Delta_i V(\mbx_i, \mbx_{i+1}) +  \Delta_i V(\mbx_{i-1}, \mbx_i)] \\
    \tilde{\mbJ}_i^{(j+1)} &= (1 - \rho) \tilde{\mbJ}_i^{(j)} + \rho \nabla_{\mbmu_{i,2}} \bbE_{q^{(j)}}[\log p(\mby_i | \mbx_i) \delta_i + \Delta_i V(\mbx_i, \mbx_{i+1}) + \Delta_i V(\mbx_{i-1}, \mbx_i)]\\
    \tilde{\mbL}_i^{(j+1)} &= (1 - \rho) \tilde{\mbL}_i^{(t)} + \rho \nabla_{\mbmu_{i,3}} \bbE_{q^{(j)}}[ \Delta_i V(\mbx_i, \mbx_{i+1})]\\
    \mbh_i^{(j+1)} &= \tilde{\mbh}_i^{(j+1)} + \mbh^L, \ \mbJ_i^{(j+1)} = \tilde{\mbJ}_i^{(j+1)} + \mbJ^L, \ \mbL_i^{(j+1)} = \tilde{\mbL}_i^{(j+1)} + \mbL^L
\end{aligned}
\end{equation}
\normalsize
Here, $\Delta_i V(\mbx_i, \mbx_{i+1}) = \log \left( \tilde{p}(\mbx_{i+1} | \mbx_i, \mbTheta) /  p_L(\mbx_{i+1} | \mbx_i) \right)$, $0 \leq i \leq T-1$ is the log ratio of the transition probabilities from $\mbx_i$ to $\mbx_{i+1}$ under $\tilde{p}$ and $p_L$.
The NGVI updates that appear in  
\citet{verma2024variational} are written in this unusual form to separate the contribution of $p_L$, which belongs to the same family as $q$, from the other terms that appear in the ELBO.

The authors propose alternating between (i) performing numerous NGVI updates \eqref{app:eqn:verma_updates} and (ii) updating $p_L$ at iteration $j$ according to a statistical linearization step. Specifically, statistical linearization of the prior drift computes the best linear approximation to the nonlinear prior drift under the current variational posterior according to
{\small
\begin{equation}\label{app:eqn:stat_linearization}
\begin{aligned}
    &\mbA_i^L, \mbb_i^L = \argmin \bbE_{q(\mbx_i| \mbeta^{(j)})}\|\mbA_i^L \mbx_i + \mbb_i^L  - \mbf(\mbx_i |\mbTheta) \|_2^2\\
    \implies& \mbA_i^L =   \bbE_{q(\mbx_i | \mbeta^{(\ell)})}\left[ \frac{d\mbf(\mbx_i | \mbTheta)}{d\mbx} \right], \ 
    \mbb_i^L = \bbE_{q(\mbx_i | \mbeta^{(j)})}[\mbf(\mbx_i | \mbTheta)] - \mbA_i^L \mbm_i, \quad i = 0, \ldots T-1.
\end{aligned}
\end{equation}
}
By the log normalizer computation  discussed in \Cref{app:sec:parallel_scan}, one can use $(\mbA_i^L, \mbb_i^L)_{i=1}^{T-1}$
to obtain a new set of natural parameters $\mbeta^{L, \text{new}} = (\mbh^{L, \text{new}}, \mbJ^{L, \text{new}}, \mbL^{L, \text{new}})$. One then updates the natural parameters by setting
\begin{align}\label{app:eqn:update_linearization}
    \tilde{\mbh}_i^{(j)} \leftarrow \tilde{\mbh}_i^{(j)} + (\mbh_i^{L} - \mbh_i^{L, \text{new}}), \ \tilde{\mbJ}_i^{(j)} \leftarrow \tilde{\mbJ}_i^{(j)} + (\mbJ_i^{L} - \mbJ_i^{L, \text{new}}), \ \tilde{\mbL}_i^{(j)} \leftarrow \tilde{\mbL}_i^{(j)} + (\mbL_i^{L} - \mbL_i^{L, \text{new}})
\end{align}
and lastly $\mbeta^L \leftarrow \mbeta^{L, \text{new}}$. The updates \cref{app:eqn:update_linearization} ensure that updating the parameters $\mbeta^L$ of $p_L$ does not change the parameters of the $q$.

Importantly, the statistical linearization step appearing in \citet{verma2024variational} does not matter for inference, since it is apparent that by ``undoing'' the change-of-measure, one can rewrite the NGVI update \eqref{app:eqn:verma_updates} to exactly match the SING update \eqref{eqn:ngvi_updates}. In fact, we point out that repeatedly performing statistical linearization, as is suggested by \citet{verma2024variational}, is unnecessary: one will obtain the same $\mbeta^L$ by performing only a single statistical linearization step at the end of inference.

The reason for introducing $p_L$ is that the authors contend that it aids in hyperparameter learning, insofar as it allows one to couple the E and M-steps via $\mbeta_L$, which from \cref{app:eqn:stat_linearization}, evidently depends on $\mbTheta$. In other words, during the M-step \citet{verma2024variational} solve
\begin{align*}
    \argmax_{\mbTheta} \cL(\mbeta_L(\mbTheta) + \tilde{\mbeta}, \mbTheta), \quad \tilde{\mbeta} = (\tilde{\mbh}, \tilde{\mbJ}, \tilde{\mbL}).
\end{align*}
This idea is adopted from \citet{adam2021dual}. One can imagine there are many functions $\vartheta: \bbR^{|\mbTheta|} \to \mbXi$ that couple the E and M-steps by writing $\mbeta = \vartheta(\mbTheta) + (\mbeta - \vartheta(\mbTheta))$. The authors argue that defining $\vartheta$ as in \cref{app:eqn:stat_linearization} is desirable insofar as optimizing $\mbeta - \vartheta(\mbTheta)$ with respect to $\mbeta$ for $\mbTheta$ fixed is less dependent on $\mbTheta$ than for alternate $\vartheta$. One issue with this argument is that, unlike \citet{verma2024variational}, we are also optimizing the parameters of the likelihood $p(\mby|\mbx, \mbTheta)$, and so it is not evident that statistical linearization of the prior \cref{app:eqn:stat_linearization} best couples the E and M-steps. 

\subsection{Differences between SING and Verma et al., 2024}\label{app:sec:verma-differences}

SING aims to simplify and  improve upon the algorithm presented by \citet{verma2024variational}. First, \citet{verma2024variational} does not address how to compute the term 
\begin{align*}
    &\bbE_{q}V(\mbx_i, \mbx_{i+1}) = \frac{1}{\Delta_i} \bbE_{q} \log \frac{p(\mbx_{i+1}|\mbx_i, \mbTheta)}{p_L(\mbx_{i+1}|\mbx_i)} = \bbE_{q} \bigg\{ -\frac{1}{2}\| \mbA_t^L \mbx_i + \mbb_i^L - \mbf(\mbx_i | \mbTheta)\|_{\mbSigma^-1}^2\\ &+ \left\langle \frac{\mbx_{i+1} - \mbx_i}{\Delta_i} - (\mbA_t^L \mbx_i + \mbb_i^L), \mbf(\mbx_i|\mbTheta) - (\mbA_t^L \mbx_i + \mbb_i^L) \right\rangle_{\mbSigma^{-1}} \bigg\},
\end{align*}
which, a priori, requires approximating expectations in $2D$ dimensions. In \Cref{app:sec:transition_expectations}, we explicitly discuss how to compute analogous quantities for SING using $D$-dimensional expectations, which makes our proposed inference algorithm both faster and more memory-efficient. Moreover, as we previously discussed, changing the base measure from $p$ to $p_L$ is unnecessary for inference, and so we remove it. We do not explore coupling the E and M-steps with SING.

Additionally, the inference algorithm proposed by \citet{verma2024variational} requires, for each NGVI step, sequentially converting between natural and mean parameters, which has linear time complexity and can be computationally expensive when the number of steps is large. In \Cref{app:sec:parallel_details}, we propose a parallelized algorithm for computing the log normalizer of the variational family, which leads to significant speedup in the runtime for inference (\Cref{fig:runtime}). 

Using the sparse variational GP framework from \citet{titsias2009variational}, we propose an extension of SING, SING-GP, to the Bayesian model that includes a Gaussian process prior on the latent SDE drift. Previous works that perform approximate inference in GP-SDE models, including \citet{duncker2019learning} and \citet{hu2024modeling}, perform inference on the latent variables $(\mbx(t))_{0 \leq t \leq T_{\text{max}}}$ using VDP, and hence are inherently slow to converge and unstable for large step sizes $\Delta_i$ (\Cref{fig:inference_synthetic} and  \Cref{fig:learning-duffing}).

\section{VDP and SING in the conjugate setting}\label{app:sec:conjugate_setting}
\begin{proposition}\label{app:prop:sing-exact}
    Suppose $\mbf(\mbx, t | \mbTheta) = \mbA^{\text{prior}}(t) \mbx(t) + \mbb^{\text{prior}}(t)$ and $p(\mby(t_i) | \mbx(t_i), \mbTheta) = \cN(\mbC \mbx(t_i) + \mbd, \mbR)$. Then, when replacing $\tilde{p}$ with $p$ in $\cL_{\text{approx}}$, SING performs exact inference on $\mbtau$ when $\rho = 1$. Specifically, after one full natural gradient step, $q(\mbx_{0:T} | \mbeta^{(1)}) = p(\mbx_{0:T} | \cD, \mbTheta)$. 
\end{proposition}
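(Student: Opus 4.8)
The plan is to exploit the standard fact that, in a conjugate model, a single natural gradient step with unit step size lands exactly on the posterior. The natural starting point is the mean-parameter form of the update derived in \cref{app:sec:ngvi_updates},
\[
\mbeta^{(1)} = (1-\rho)\,\mbeta^{(0)} + \rho\,\nabla_{\mbmu}\,\bbE_{q^{(0)}}\!\big[\log\big(p(\mbx_{0:T}\mid\mbTheta)\,p(\mby\mid\mbx,\mbTheta)\big)\big],
\]
where the entropy contribution and the $\mbeta^{(0)}$ term have already been absorbed via $\nabla_{\mbmu}\bbE_q[\log q]=\mbeta$. Setting $\rho=1$ (and using the exact prior $p$ in place of $\tilde p$, as the statement permits) gives $\mbeta^{(1)} = \nabla_{\mbmu}\bbE_{q^{(0)}}[\log(p\,p)]$, so the whole proof reduces to showing this gradient equals the natural-parameter vector $\mbeta^\star$ of the posterior $p(\mbx_{0:T}\mid\cD,\mbTheta)$.

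First I would show that the posterior belongs to the same minimal exponential family as the variational family \cref{eqn:variational_posterior}. Since the prior drift is linear and $\mbSigma$ is constant, each transition $p(\mbx_{i+1}\mid\mbx_i)$ is Gaussian, so $\log p(\mbx_{0:T}\mid\mbTheta)$ is a quadratic form whose only cross terms couple consecutive states $\mbx_i,\mbx_{i+1}$; the Gaussian likelihood contributes only terms in $\mbx_i$ and $\mbx_i\mbx_i^\top$ at observation times. Hence $\log\big(p(\mbx_{0:T}\mid\mbTheta)\,p(\mby\mid\mbx,\mbTheta)\big)$ is quadratic with block-tridiagonal precision, and there exist natural parameters $\mbeta^\star = \big[\{\mbh_i^\star,-\tfrac12\mbJ_i^\star\},\{-\mbL_i^{\star\top}\}\big]$ and a constant $c$ independent of $\mbx_{0:T}$ with
\[
\log\big(p(\mbx_{0:T}\mid\mbTheta)\,p(\mby\mid\mbx,\mbTheta)\big) = \big\langle \mbeta^\star,\,\mbT(\mbx_{0:T})\big\rangle + c,
\]
where $\mbT$ is the sufficient statistic vector of \cref{eqn:variational_posterior}. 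Because the left side is proportional to $p(\mbx_{0:T}\mid\cD,\mbTheta)$, the normalized density with natural parameters $\mbeta^\star$ is exactly the posterior.

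The computation then finishes immediately. Taking expectations under $q^{(0)}$ yields $\bbE_{q^{(0)}}[\langle\mbeta^\star,\mbT\rangle + c] = \langle\mbeta^\star,\mbmu\rangle + c$, viewed as a function of the mean parameters $\mbmu=\bbE[\mbT]$; this map is \emph{affine} in $\mbmu$, so its gradient is the constant vector $\mbeta^\star$, independent of the initialization $q^{(0)}$. Therefore $\mbeta^{(1)}=\mbeta^\star$, and by the bijective natural-to-mean correspondence for minimal exponential families (\cref{app:lm:jacobian-expo}), $q(\mbx_{0:T}\mid\mbeta^{(1)})$ coincides with $p(\mbx_{0:T}\mid\cD,\mbTheta)$.

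The main obstacle—and essentially the only step requiring care—is the first one: verifying that conditioning on the observations preserves the nearest-neighbor Gauss–Markov conditional-independence structure, so that the posterior precision remains block tridiagonal and $\mbeta^\star$ is expressible against the \emph{same} sufficient statistics $\mbT(\mbx_{0:T})$. This holds because each $p(\mby_i\mid\mbx_i)$ depends on a single state and thus adds only to the diagonal blocks, introducing no coupling beyond the nearest-neighbor terms already present in the prior. Once this structural fact is established, the affineness of $\mbmu\mapsto\bbE_{q^{(0)}}[\log(p\,p)]$ and the resulting cancellation of all $q^{(0)}$-dependence are automatic, giving exactness in a single step.
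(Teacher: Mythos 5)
Your proposal is correct and follows essentially the same route as the paper's proof: both identify the posterior's natural parameters $\mbeta^\star$ by expanding $\log\big(p(\mbx_{0:T}\mid\mbTheta)\,p(\mby\mid\mbx,\mbTheta)\big)$ against the block-tridiagonal sufficient statistics of the variational family, and both then observe that the $\rho=1$ natural gradient step returns exactly $\mbeta^\star$. The only cosmetic difference is that you differentiate the affine map $\mbmu\mapsto\langle\mbeta^\star,\mbmu\rangle+c$ directly, whereas the paper packages the same computation as the KL-gradient identity $\nabla_{\mbmu_1}\KL{q_1}{q_2}=\mbeta_1-\mbeta_2$ from its Lemma 4.
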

\begin{proof}
    First, we point out that we are able to replace $\tilde{p}$ with $p$ in $\cL_{\text{approx}}$ since $\mbf$ is linear. 

    From here, we write
    \small
    \begin{align*}
        &p(\mbx_{0:T}) p(\mby|\mbx_{0:T}) \\
        &\propto \exp \left(-\frac{1}{2} \sum_{i=0}^T \mbx_i^\top \mbJ_i^{\text{prior}} \mbx_i - \sum_{i=0}^{T-1} \mbx_{i+1}^\top \mbL_i^{\text{prior}}  \mbx_i + \sum_{i=0}^T (\mbh_i^{\text{prior}})^\top \mbx_i -\frac{1}{2} \sum_{i=0}^n \|\mby(t_i) - \mbC\mbx(t_i) - \mbd\|_{\mbR^{-1}}^2 \right)\\
        &\propto \exp \left(-\frac{1}{2} \sum_{i=0}^T \mbx_i^\top (\mbJ_i^{\text{prior}} + \delta_i \mbC^\top\mbR^{-1}\mbC) \mbx_i - \sum_{i=0}^{T-1} \mbx_{i+1}^\top \mbL_i^{\text{prior}} \mbx_i + \sum_{i=0}^T (\mbh_i^{\text{prior}} + \delta_i  \mbC^\top\mbR^{-1} \{\mby(\tau_i) - \mbd\})^\top \mbx_i\right)
    \end{align*}
    \normalsize
    where $\delta_i$ is the indicator denoting whether an observation occurs at $\tau_i$. It is straightforward to see that this coincides 
    with the posterior over paths $p(\mbx |\cD)$, marginalized to the grid $\mbtau$, written $p(\mbx_{0:T} |\cD)$. Hence, $p(\mbx_{0:T} |\cD)$ is Gaussian with natural parameters $\mbeta^{\text{true}} = \left[\{\mbh_i^{\text{prior}} + \delta_i  \mbC^\top\mbR^{-1} (\mby(\tau_i) - \mbd), -\frac{1}{2}(\mbJ_i^{\text{prior}} + \delta_i \mbC^\top\mbR^{-1}\mbC)\}_{i=0}^T, \{-\mbL_i^{\text{prior}}\}_{i=0}^{T-1}\right]$.
    
    If we write $\cL_{\text{approx}}$ with $p$ substituted for $\tilde{p}$ as 
    \begin{align*}
        \cL_{\text{approx}}(\mbeta, \mbTheta) = \bbE_q \log \frac{p(\mbx_{1:T}) p(\mby|\mbx_{1:T})}{q(\mbx_{1:T})}
    \end{align*}
    then the NGVI update \cref{eqn:ngvi_updates} with $\rho=1$ becomes
    \begin{align*}
         \mbeta^{(1)} &= \mbeta^{(0)} + \nabla_{\mbmu} \cL_{\text{approx}}(\mbeta^{(0)}, \mbTheta) \overset{(i)}{=} \mbeta^{(0)} + (\mbeta^{\text{true}} - \mbeta^{(0)}) = \mbeta^{\text{true}}.
    \end{align*} 
    In equality (i), we used the identity from \Cref{app:sec:expo-family-natural-gradients}, \Cref{app:lm:natural-grad-identities} for the natural gradient of the KL divergence between two distributions belonging to the same exponential family.
\end{proof}

\begin{proposition}
    Consider the same setting as in Proposition \ref{app:prop:sing-exact}. Then VDP algorithm described in \Cref{app:sec:vdp} does not recover $p(\mbx | \cD)$ in one step.
\end{proposition}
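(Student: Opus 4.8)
The plan is to establish this negative result by a single explicit counterexample: I will exhibit a scalar linear-Gaussian instance, together with the most benign initialization $q^{(0)} = p$ (the prior), for which one full pass of the VDP iteration (steps 1--4 of \cref{app:sec:vdp}) produces a variational posterior whose initial marginal covariance disagrees with that of $p(\mbx_{0:T}\mid\cD)$. Since \Cref{app:prop:sing-exact} shows SING lands exactly on $\mbeta^{\text{true}}$ after one step for \emph{any} initialization, refuting one-step exactness for VDP on even the prior initialization is the sharp contrast I want.

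First I would specialize the VDP updates to the conjugate regime. With $\mbf(\mbx,t) = \mbA^{\text{prior}}(t)\mbx + \mbb^{\text{prior}}(t)$ we have $\bbE_{q(\mbx(t))}[d\mbf/d\mbx] = \mbA^{\text{prior}}(t)$, so \cref{app:eq:A-update} becomes $\mbA(t) = \mbA^{\text{prior}}(t) - 2\mbSigma\mbPsi(t)$. The key structural observation is that at $q=p$ the running cost $\tfrac12\bbE_q\|\mbf_q-\mbf\|_{\mbSigma^{-1}}^2$, viewed (as VDP does) as a function of $(\mbm,\mbS)$ with $(\mbA,\mbb)$ frozen at their prior values, vanishes identically; hence the terms $\tfrac{d}{d\mbS}(-\KL{q}{p})$ in \cref{app:eq:Psi-update} are zero at the first iteration, and the multiplier $\mbPsi$ is generated \emph{only} through the observation jump \cref{app:eq:jump-psi}.

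Then I would run one iteration on the instance $D=1$, $\mbSigma=1$, $\mbA^{\text{prior}}=0$, $\mbb^{\text{prior}}=0$ (a Brownian prior), $\mbx(0)\sim\cN(0,V)$ on $[0,T_{\text{max}}]$, with a single observation $y_1 = \mbx(t_1)+\varepsilon$, $\varepsilon\sim\cN(0,R)$, placed at an interior time $0<t_1<T_{\text{max}}$ (so $\mbC=1$, $\mbd=0$). Step~1 gives $\mbm^{(0)}(t)\equiv 0$ and $\mbS^{(0)}(t)=V+t$ via \cref{app:eqn:fpk-m}--\cref{app:eqn:fpk-S}. Because $\mbA^{(0)}=\mbA^{\text{prior}}=0$, the backward ODE \cref{app:eq:Psi-update} is constant between jumps, so integrating from $\mbPsi(T_{\text{max}})=0$ with the single jump $\tfrac{d}{dS_1}\bbE_q\log p = -1/(2R)$ yields $\mbPsi(t)=1/(2R)$ for $t\le t_1$ and $\mbPsi(t)=0$ afterward. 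Feeding $\mbPsi(0)=1/(2R)$ into the initial-condition update \cref{app:eq:S0-update} gives $\mbS(0) = (2\mbPsi(0)+V^{-1})^{-1} = VR/(V+R)$.

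Finally I would compare against the exact smoother. Since $y_1 = \mbx(0) + \eta$ with $\eta\sim\cN(0,\,t_1+R)$ independent of $\mbx(0)$, the true posterior variance of $\mbx(0)$ is $V(t_1+R)/(V+t_1+R)$, which is exactly the reciprocal of the $\mbJ_0$-block of $\mbeta^{\text{true}}$ from \Cref{app:prop:sing-exact}. Equating this with the VDP output $VR/(V+R)$ reduces to $t_1 V = 0$, impossible for $t_1,V>0$; intuitively, VDP's one step behaves as though the observation informs $\mbx(0)$ directly with precision $1/R$, ignoring the diffusion $t_1$ between $0$ and $t_1$. Hence the marginal-covariance natural parameter after one VDP pass differs from the posterior's, so $q(\mbx_{0:T}\mid\mbeta^{(1)})\neq p(\mbx_{0:T}\mid\cD)$. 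The main obstacle is pure bookkeeping---correctly integrating the backward multiplier ODE and placing the jump condition with the right sign---which is precisely why I isolate the interior single-observation case, so that $\mbPsi$ is piecewise constant and $\mbS(0)$ can be read off in closed form; the conceptual content is just that VDP's coordinate-ascent updates evaluate the correction at the \emph{prior} marginals rather than at self-consistent posterior marginals.
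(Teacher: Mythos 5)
Your proof is correct: the scalar Brownian-prior instance with a single interior observation is a valid counterexample, and your bookkeeping checks out against the paper's own conjugate-setting formulas (with $\mbA=\mbA^{\text{prior}}$ the running-cost derivative $\tfrac{d}{d\mbS}(-\KL{q}{p})$ indeed vanishes, $\mbPsi$ is piecewise constant with the jump $\tfrac{1}{2R}$ at $t_1$, and step 4 yields $\mbS(0)=VR/(V+R)$ versus the true $V(t_1+R)/(V+t_1+R)$). The route is genuinely different in execution from the paper's. The paper works in full generality: it derives, via the backward Kolmogorov equation, the Riccati system satisfied by the true posterior's precision correction $\mbG(t)$ (which contains the quadratic term $\mbG(t)^\top\mbG(t)$) and observes that the VDP multiplier $2\mbPsi(t)$ obeys a \emph{linear} backward ODE missing that term, so the two equations are structurally inequivalent. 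Your argument instead exhibits one explicit instance where the outputs numerically disagree, using only elementary Gaussian conditioning ($y_1=\mbx(0)+\eta$, $\eta\sim\cN(0,t_1+R)$) rather than the BKE. What the paper's approach buys is an explanation of the mechanism of failure for arbitrary linear drifts (the missing quadratic Riccati term, which is exactly what your example detects as the absent diffusion contribution $t_1$); what yours buys is a fully airtight disproof, since strictly speaking two different-looking ODEs could still share a solution in special cases, and the paper's closing step ("which is not equivalent to the Riccati equation") leaves that verification implicit where you make it explicit. The only caveat worth stating in your write-up is that you are using the full update $\omega=1$ rather than the soft update of \cref{app:eqn:vdp_soft_updates}, which is clearly the intended reading of "one step."
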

    \begin{proof}
    For simplicity of presentation, we will assume $\mbSigma = \mbI$.

    Suppose we have $(\mbm(t), \mbS(t))_{0 \leq t \leq T_{\text{max}}}$, the marginal means of covariances of the variational posterior $q$, as well as $(\mbA(t), \mbb(t))_{0 \leq t \leq T_{\text{max}}}$ that specify the drift of the variational posterior. After the first step of the \citet{archambeau2007gaussian} VDP algorithm, these are consistent.
    
    In order to analyze VDP in this setting, we first compute the (variational) derivative of each term in the ELBO $\cL_{\text{cont}}$ with respect to $\mbm(t)$ and $\mbS(t)$. To this end,
    {\small
    \begin{align*}
        &\frac{1}{2} \bbE_q \|\{\mbA^{\text{prior}}(t) \mbx(t) + \mbb^{\text{prior}}(t) \} -  \{\mbA(t) \mbx(t) + \mbb(t) \}\|_2^2\\
        =& \bbE_q \left\{ \frac{1}{2} \mbx(t)^\top (\mbA^{\text{prior}}(t) - \mbA(t))^\top (\mbA^{\text{prior}}(t) - \mbA(t)) \mbx(t) + \mbx(t)^\top (\mbA^{\text{prior}}(t) - \mbA(t))^\top (\mbb^{\text{prior}}(t) - \mbb(t)) \right\} + \text{const}\\
        =& \frac{1}{2} \text{tr} \left( (\mbA^{\text{prior}}(t) - \mbA(t))^\top (\mbA^{\text{prior}}(t) - \mbA(t)) \{\mbS(t) + \mbm(t)\mbm(t)^\top \} \right)\\
        &+ \mbm(t)^\top (\mbA^{\text{prior}}(t) - \mbA(t))^\top (\mbb^{\text{prior}}(t) - \mbb(t)) + \text{const}
    \end{align*}
    }
    as well as
    \begin{align*}
        &\bbE_q \log \cN(\mby(t_i) | \mbC \mbx(t_i) + \mbd, \mbR)\\
        =& \bbE_q \left\{-\frac{1}{2} (\mby(t_i) - \{\mbC \mbx(t_i) + \mbd\})^\top \mbR^{-1} (\mby(t_i) - \{\mbC \mbx(t_i) + \mbd)\}) \right\} + \text{const}\\
        =&\bbE_q \left\{ -\frac{1}{2} \mbx(t_i)^\top \mbC^\top \mbR^{-1} \mbC \mbx(t_i) + \mbx(t_i)^\top \mbC^\top \mbR^{-1} \mby(t_i)  - \mbx(t_i)^\top \mbC^\top \mbR^{-1} \mbd \right\} + \text{const}\\ 
        =& -\frac{1}{2} \text{tr}\left( \mbC^\top \mbR^{-1} \mbC (\mbS(t_i) + \mbm(t_i)\mbm(t_i)^\top)\right) + \mbm(t_i)^\top \mbC^\top \mbR^{-1} (\mby(t_i) - \mbd) + \text{const}
    \end{align*}
    where $\text{const}$ represents a generic constant with respect to $(\mbm(t), \mbS(t))_{0 \leq t \leq T_{\text{max}}}$.
    
    From \Cref{app:sec:vdp}, these imply the following updates for the Lagrange multipliers
    \begin{align*}
        &\frac{d \mblambda(t)}{dt} = -\mbA(t)^\top \mblambda(t) - (\mbA^{\text{prior}}(t) - \mbA(t))^\top (\mbA^{\text{prior}}(t) - \mbA(t)) \mbm(t)\\ & - (\mbA^{\text{prior}}(t) - \mbA(t))^\top (\mbb^{\text{prior}}(t) - \mbb(t))\\
        &\frac{d\mbPsi(t)}{dt} = -\mbA(t)^\top \mbPsi(t) - \mbPsi(t) \mbA(t) - \frac{1}{2} (\mbA^{\text{prior}}(t) - \mbA(t))^\top (\mbA^{\text{prior}}(t) - \mbA(t))
    \end{align*}
    with the jump conditions at times $\{t_i\}_{i=1}^n$
    \begin{align*}
        &\mblambda(t_i^+) = \mblambda(t_i) + \mbC^\top \mbR^{-1} (\mby(t_i) - \mbd) -\mbC^\top \mbR^{-1} \mbC\mbm(t_i)\\
        &\mbPsi(t_i^+) = \mbPsi(t_i) - \frac{1}{2}\mbC^\top \mbR^{-1} \mbC.
    \end{align*}
    WLOG, we assume $t_i < T_{\text{max}}$ for $1 \leq i \leq n$ and set $\mblambda(T_{\text{max}}) = \mb{0}$, $\mbPsi(T_{\text{max}}) = \mb{0}$ (otherwise, the terminal condition is the same as the jump condition at time $T_{\text{max}}$). One can check these match the updates appearing on \citet{archambeau2007gaussian},  page 7. The last step in the VDP algorithm is to update the variational parameters according to 
    \begin{align*}
        &\mbA(t) = \mbA^{\text{prior}}(t) - 2 \mbPsi(t)\\
        &\mbb(t) = \mbA^{\text{prior}}(t) \mbm(t) + \mbb^{\text{prior}}(t) - \mbA(t) \mbm(t) - \mblambda(t) = (\mbA^{\text{prior}}(t) - \mbA(t)) \mbm(t) + \mbb^{\text{prior}}(t) - \mblambda(t)
    \end{align*}

    Next, we determine the analytical form of the posterior drift in this setting. In particular, the Kushner-Stratonovich-Pardoux equations \cite{kushner1964differential,stratonovich1965conditional, pardoux1982equations} give a closed-form expression for the SDE satisfied by the posterior path measure
    \begin{align*}
        q: d\mbx(t) = \left\{ \mbA^{\text{prior}}\mbx_t + \mbb^{\text{prior}} + \nabla_{\mbx} \log p(\mby_{\geq t} | \mbx(t)) \right\} dt + d\mbw(t).
    \end{align*}
    where $p(\mby_{\geq t} | \mbx(t))$ represents the conditional probability of observations $\mby(t_i)$, $t_i \geq t$ conditioned on $\mbx(t) = \mbx$. Observe also that $p(\mby_{\geq t} | \mbx(t))$ will have jumps occurring at observation times.
    
    In general, we cannot compute $p(\mby_{\geq t} | \mbx(t))$ in closed-form, and hence do not have an analytical expression for the posterior drift. In the linear case, though, we can compute this explicitly. In order to do so, we make a couple of observations. First, for $t_{i-1} < t \leq t_i$, we see that by the Markov property
    \begin{align*}
        p(\mby_{\geq t} | \mbx(t)) \propto p(\mbx(t_i) | \mbx(t)) \left( \prod_{k=i}^{n-1} p(\mbx(t_{k+1}) | \mbx(t_k))p(\mby(t_k) | \mbx(t_k)) \right) p(\mby(t_n) | \mbx(t_n)).
    \end{align*}
    Since each of these factors is Gaussian, $p(\mby_{\geq t} | \mbx(t))$ will again be Gaussian. Let us write
    \begin{align*}
        p(\mby_{\geq t} | \mbx(t)) = \exp \left\{ -\frac{1}{2}\mbx(t)^\top \mbG(t) \mbx(t) + \mbg(t)^\top \mbx(t)  + c(t) \right\}, \quad t_{i-1} < t \leq t_i.
    \end{align*}
    Second, $\psi(\mbx, t) = p(\mby_{\geq t} | \mbx(t))$ satisfies the Backward Kolmogorov Equation (BKE)
    \begin{align*}
        \frac{\partial}{\partial t} \psi(\mbx, t) + (\mbA^{\text{prior}}(t) \mbx(t) + \mbb^{\text{prior}}(t))^\top \nabla_{\mbx}\psi(\mbx, t) + \frac{1}{2} \Delta_{\mbx}\psi(\mbx, t) = 0
    \end{align*}
    in between observation times $t_{i-1} < t \leq t_i$. Plugging 
    \begin{align*}
        \psi(\mbx, t) = \exp \left\{ -\frac{1}{2}\mbx^\top \mbG(t) \mbx + \mbg(t)^\top \mbx  + c(t) \right\}
    \end{align*}
    into the BKE, we get
    \small
    \begin{align*}
         &\psi(\mbx, t) (\mbA^{\text{prior}}(t) \mbx + \mbb^{\text{prior}}(t))^\top (\mbg(t) - \mbG(t) \mbx) + \psi(\mbx, t) \frac{1}{2} \left\{(\mbg(t) - \mbG(t) \mbx)^\top(\mbg(t) - \mbG(t) \mbx) - \text{tr}(\mbG(t)) \right\}\\
         =& -\psi(\mbx, t) \left( -\frac{1}{2}\mbx^\top \frac{d}{dt}\mbG(t) \mbx + \frac{d}{dt} \mbg(t)^\top \mbx  + \frac{d}{dt}c(t) \right)
    \end{align*}
    \normalsize
    Simplifying,
    \begin{align*}
        &(\mbA^{\text{prior}}(t) \mbx + \mbb^{\text{prior}}(t))^\top (\mbG(t) \mbx - \mbg(t)) +  \frac{1}{2} \left\{(\mbG(t)\mbx - \mbg(t))^\top(\mbg(t) - \mbG(t) \mbx) + \text{tr}(\mbG(t)) \right\}\\
        =& \left( -\frac{1}{2}\mbx^\top \frac{d}{dt}\mbG(t) \mbx + \frac{d}{dt} \mbg(t)^\top \mbx + \frac{d}{dt}c(t) \right)\\
        \implies&\frac{d}{dt}\mbG(t) = -(\mbA^{\text{prior}}(t))^\top \mbG(t) -  \mbG(t)\mbA^{\text{prior}}(t) + \mbG(t)^\top \mbG(t)\\
        \implies&\frac{d}{dt}\mbg(t) = -(\mbA^{\text{prior}}(t))^\top \mbg(t) + \mbG(t)^\top \mbb^{\text{prior}}(t) + \mbG(t)^\top \mbg(t)
    \end{align*}
    where we used the facts that $2\mbx^\top (\mbA^{\text{prior}}(t))^\top \mbG(t) \mbx = \mbx^\top (\mbA^{\text{prior}}(t))^\top \mbG(t) \mbx + \mbx^\top \mbG(t)^\top \mbA^{\text{prior}}\mbx$ and that if $\mbG(T_{\text{max}})$ is symmetric, then $(\mbG(t))_{0 \leq t \leq T_{\text{max}}}$ will be symmetric.
    
    In other words, we have shown that $ p(\mby_{\geq t} | \mbx(t))$ satisfies the Riccati equation
    \begin{align*}
       &\frac{d}{dt}\mbG(t) = -(\mbA^{\text{prior}}(t))^\top \mbG(t) -  \mbG_t\mbA^{\text{prior}}(t) + \mbG(t)^\top \mbG(t)\\
        &\frac{d}{dt}\mbg(t) = -(\mbA^{\text{prior}}(t))^\top \mbg(t) + \mbG(t)^\top \mbb^{\text{prior}} + \mbG(t)^\top \mbg(t)
    \end{align*}
     with jump conditions
    \begin{align*}
        &\mbG(t_i) = \mbG(t_i^+) + \mbC^\top \mbR^{-1} \mbC\\
        &\mbg(t_i) = \mbg(t_i^+) + \mbC^\top \mbR^{-1}(\mby(t_i) - \mbd)
    \end{align*}
    and initial conditions $\mbG(T_{\text{max}}) = \mb{0}$, $\mbg(T_{\text{max}}) = \mb{0}$. The posterior drift is then given by 
    \begin{align*}
        \mbA^{\text{prior}}(t)\mbx(t) + \mbb^{\text{prior}}(t) + \nabla_{\mbx} \log p(\mby_{\geq t} | \mbx(t)) = (\mbA^{\text{prior}}(t) - \mbG(t)) \mbx(t) + (\mbb^{\text{prior}}(t) + \mbg(t)).
    \end{align*}

    Now, though, we can directly compare $2 \mbPsi(t)$ to $\mbG(t)$ and $(\mbA^{\text{prior}}(t) - \mbA(t)) \mbm(t) + \mblambda(t)$ to $\mbg(t)$.
    In particular, 
    \begin{align*}
        &\frac{d(2\mbPsi(t))}{dt} = -\mbA(t)^\top (2\mbPsi(t)) - (2\mbPsi(t)) \mbA(t) -(\mbA^{\text{prior}}(t) - \mbA(t))^\top (\mbA^{\text{prior}}(t) - \mbA(t)),\\ &2\mbPsi(t_i) = 2\mbPsi(t_i^+) + \mbC^\top \mbR^{-1} \mbC,
    \end{align*}
    which is not equivalent to the Riccati equation satisfied by $(\mbG(t))_{0 \leq t \leq T_{\text{max}}}$.  
    
    Suppose, for example, $(\mbA(t), \mbb(t))_{0 \leq t \leq T_{\text{max}}}$ are initialized to $(\mbA^{\text{prior}}(t), \mbb^{\text{prior}}(t))_{0 \leq t \leq T_{\text{max}}}$. Then
    \begin{align*}
        \frac{d(2\mbPsi(t))}{dt} = -(\mbA^{\text{prior}}(t))^\top (2\mbPsi(t)) - (2\mbPsi(t)) \mbA^{\text{prior}}(t), \quad 2\mbPsi(t_i) = 2\mbPsi(t_i^+) + \mbC^\top \mbR^{-1} \mbC.
    \end{align*}
    \end{proof}

\section{SING-GP: Drift estimation with Gaussian process priors}\label{app:sec:sing_gp}
Here, we present the GP-SDE generative model and full inference algorithm using SING-GP from \Cref{sec:gp_sde}. Our presentation of the GP-SDE model primarily follows that of \citet{duncker2019learning} and \citet{hu2024modeling}.

\subsection{GP-SDE generative model}\label{app:sec:gpsde_generative}
The GP-SDE model consists of the latent SDE model from \Cref{sec:latent_sde} combined with Gaussian process priors on the output dimensions of the drift function $\mbf(\mbx(t))$. More formally, we have
\begin{align*}
    d\mbx(t) = \mbf(\mbx(t)) dt + \mbSigma^{\frac{1}{2}}d\mbw(t), \quad \bbE[\mby(t_i) | \mbx] = g(\mbC\mbx(t_i) + \mbd)
\end{align*}
where 
\begin{align*}
    f_d(\cdot) \overset{\text{iid}}{\sim} \mathcal{GP}(0, \kappa_{\mbtheta}(\cdot, \cdot)), \quad d = 1, \dots, D.
\end{align*}
Here, $\mbf(\cdot) = \begin{bmatrix} f_1(\cdot), \dots, f_D(\cdot) \end{bmatrix}^\top$, $\kappa_{\mbtheta}$ is the kernel covariance function, and $\mbtheta$ are kernel hyperparameters. As in \Cref{sec:latent_sde}, we denote all model hyperparameters as $\mbTheta = \{\mbtheta, \mbC, \mbd\}$. Throughout this section, we assume $\mbSigma = \mbI$ without loss of generality. In addition to our primary goal of performing inference over $\mbx(t)$, we would also like to perform inference over $\mbf(\cdot)$. In \Cref{app:sec:gpsde_elbo}, we outline an approach from \citet{duncker2019learning} that uses sparse GP approximations for inference over $\mbf(\cdot)$. 

\subsection{Sparse GP approximations for posterior drift inference}\label{app:sec:gpsde_elbo}
Following \citet{duncker2019learning}, we augment the generative model from \Cref{app:sec:gpsde_generative} with \textit{sparse inducing points} \cite{titsias2009variational}, which enable tractable approximate inference of $\mbf(\cdot)$. This technique allows the computational complexity of inference of $\mbf(\cdot)$ evaluated at a batch of $L$ locations to be reduced from $\cO(L^3)$ to $\cO(LM^2)$, where $M \ll L$ is the number of inducing points. Let us denote the inducing locations as $\{\mbz_m\}_{m=1}^M \subset \bbR^N$ and corresponding inducing values as $\{\mbu_d\}_{d=1}^D \subset \bbR^M$. After introducing these variables, the full generative model from \Cref{app:sec:gpsde_generative} becomes
\begin{align*}
    p(\mby, \mbx, \mbf, \mbu \mid \mbTheta) = p(\mby \mid \mbx, \mbTheta) p(\mbx \mid \mbf) \prod_{d=1}^D p(f_d \mid \mbu_d, \mbTheta) p(\mbu_d \mid \mbTheta).
\end{align*}
Treating $\mbu_d$ as pseudo-observations, we assume the following prior on inducing points which derives from the GP kernel:
\begin{align*}
    p(\mbu_d \mid \mbTheta) = \cN(\mbu_d \mid \mathbf{0}, \mbK_{zz})
\end{align*}
where $\mbK_{zz}$ is the Gram matrix corresponding to the batch of points $\{\mbz_m\}_{m=1}^M$. By properties of conditional Gaussian distributions, this implies that $f_d(\cdot) \mid \mbu_d$ is another GP,
\begin{align*}
    f_d(\cdot) \mid \mbu_d \sim \mathcal{GP}(\mu_{f_d | \mbu_d}(\cdot), \kappa_{f_d | \mbu_d}(\cdot,\cdot))
\end{align*}
where 
\begin{align*}
    \mu_{f_d | \mbu_d}(\mbx) &= \mbk_{xz} \mbK_{zz}^{-1} \mbu_d \\
    \kappa_{f_d | \mbu_d}(\mbx,\mbx') &= \kappa_{\mbtheta}(\mbx,\mbx') - \mbk_{xz} \mbK_{zz}^{-1} \mbk_{zx}.
\end{align*}
Above, $\mbk_{xz} := \begin{bmatrix} \kappa_{\mbtheta}(\mbx, \mbz_1), \dots, \kappa_{\mbtheta}(\mbx, \mbz_M)\end{bmatrix}$ and $\mbk_{zx} = \mbk_{xz}^\top$. 

To perform VI in the GP-SDE model, we follow \citet{duncker2019learning} and define an augmented variational family of the form
\begin{align*}
    q(\mbx, \mbf, \mbu) = q(\mbx) \prod_{d=1}^D p(f_d \mid \mbu_d, \mbTheta) q(\mbu_d).
\end{align*}
For the variational family on inducing points, we choose
\begin{align*}
    q(\mbu_d) = \cN(\mbu_d \mid \mbm_u^{d \ast}, \mbS_u^{d \ast})
\end{align*}
so that $\mbm_u^{d \ast}$ and $\mbS_u^{d \ast}$ have convenient closed-form updates, as we describe in the next section.

We use this augmented posterior to derive a continuous-time ELBO analogous to $\cL_{\text{cont}}$ in \cref{eqn:elbo-cont} of the main text. By Jensen's inequality, 
\begin{align*}
    &\log p(\mby \mid \mbTheta) \\
    &= \log  \int p(\mby \mid \mbx, \mbTheta) p(\mbx \mid \mbf) p(\mbf \mid \mbu, \mbTheta) p(\mbu \mid \mbTheta) d \mbx d \mbf d \mbu\\
    &\geq \int q(\mbx, \mbf, \mbu)\log \frac{p(\mby \mid \mbx, \mbTheta)p(\mbx \mid \mbf) p(\mbf \mid \mbu, \mbTheta) p (\mbu \mid \mbTheta)}{q(\mbx, \mbf, \mbu)}d \mbx d \mbf d \mbu\\
    &= \int q(\mbx, \mbf, \mbu)\log \frac{p(\mby \mid \mbx, \mbTheta)p(\mbx \mid \mbf) \prod_{d=1}^D p (\mbu_d \mid \mbTheta)}{q(\mbx) \prod_{d=1}^D q(\mbu_d)}d \mbx d \mbf d \mbu\\
    &= \bbE_{q(\mbx)}[\log p(\mby \mid \mbx, \mbTheta)] - \bbE_{q(\mbf)}[\KL{q(\mbx)}{p(\mbx \mid \mbf)}] - \sum_{d=1}^D \KL{q(\mbu_d)}{p(\mbu_d \mid \mbTheta)} \\
    &:= \cL_{\text{cont-GP}}(q(\mbx), q(\mbu), \mbTheta),
\end{align*}
where 
\begin{equation}\label{app:sec:f_variational}
    q(\mbf) = \prod_{d=1}^D \int p(f_d \mid \mbu_d, \mbTheta) q(\mbu_d) d\mbu_d.
\end{equation}
Since $\cL_{\text{cont-GP}}(q(\mbx), q(\mbu), \mbTheta)$ is intractable to compute directly, as we discuss in \Cref{subsec:inference}, we introduce an approximation to the continuous-time ELBO that is analogous to $\cL_{\text{cont}}$ in \cref{eqn:elbo-approx}. This objective is
\begin{equation}\label{app:eqn:elbo_approx_gp}
\begin{aligned}
    &\cL_{\text{approx-GP}}(q(\mbx_{0:T}), q(\mbu), \mbTheta) :=\\
    & \bbE_{q(\mbx)}[\log p(\mby \mid \mbx, \mbTheta)] - \bbE_{q(\mbf)}[\KL{q(\mbx_{0:T})}{\tilde{p}(\mbx_{0:T} \mid \mbf, \mbTheta)}] - \sum_{d=1}^D \KL{q(\mbu_d)}{p(\mbu_d \mid \mbTheta)}.
\end{aligned}
\end{equation}
In \cref{app:eqn:elbo_approx_gp}, we use definitions from \Cref{subsec:inference}: $q(\mbx_{0:T})$ is the finite-dimensional variational posterior on time grid $\mbtau$ and $\tilde{p}(\mbx_{0:T})$ is the corresponding prior process on the same time grid discretized using Euler--Maruyama. We note that $\cL_{\text{approx-GP}}$ has two main differences from $\cL_{\text{approx}}$: (1) the second term now involves an expectation with respect to $q(\mbf)$, which comes from the Gaussian process prior on the drift, and (2) $\cL_{\text{approx-GP}}$ has an additional third term regularizing the posterior on inducing points towards its GP-derived prior.

\subsection{Inference and learning using SING-GP}\label{app:sec:sing-gp-inference}
In a GP-SDE model, the two inference goals are (1) inferring a posterior over the latent states $q(\mbx_{0:T})$ and (2) inferring a posterior over the drift function (i.e., dynamics) $q(\mbf)$. In SING-GP, we propose to use SING as described in \Cref{sec:method} to perform latent state inference, and we derive novel closed-form updates for $q(\mbu_d)$ based on $\cL_{\text{approx-GP}}$. In the next section, we will show how Gaussian conjugacy allows for straightforward inference of $q(\mbf)$ given $q(\mbu_d)$.

To update $q(\mbu_d)$, we would like to maximize $\cL_{\text{approx-GP}}(q(\mbx_{0:T}), q(\mbu), \mbTheta)$ with respect to the variational parameters $\mbm_u^{d \ast}, \mbS_u^{d \ast}$ given the current estimates of $q(\mbx_{0:T})$ and $\mbTheta$. By direct differentiation of $\cL_{\text{approx-GP}}(q(\mbx_{0:T}), q(\mbu), \mbTheta)$ with respect to $\mbm_u^{d \ast}$ and $\mbS_u^{d \ast}$, one can derive the following analytical updates for $\mbm_u^{d \ast}$ and $\mbS_u^{d \ast}$:
\small
\begin{equation}\label{app:eqn:gp_updates}
    \begin{aligned}
    \mbS_u^{d \ast} &= \mbK_{zz}\left(\mbK_{zz} + \Delta t \sum_{i=0}^T \bbE_{q(\mbx_i)} [\mbk_{zx_i} \mbk_{x_iz}]\right)^{-1} \mbK_{zz} \\
    \mbm_u^\ast &= \mbS_u^{d \ast}\mbK_{zz}^{-1} \left( \Delta t \sum_{i=0}^{T-1} \bbE_{q(\mbx_i)}[\mbk_{zx_i}] \left(\frac{\mbm_{i+1}-\mbm_i}{\Delta t} \right) + \Delta t \sum_{i=0}^{T-1} \bbE_{q(\mbx_i)}\left[\frac{d\mbk_{zx}}{d\mbx}\right] \left(\frac{\mbS_{i,i+1}-\mbS_i}{\Delta t} \right)\right).
    \end{aligned}
\end{equation}
\normalsize
In the above equation, $\mbm_u^\ast \in \bbR^{M \times D}$ contains $\mbm_u^{d \ast}$ in each column. The required kernel expectations in \cref{app:eqn:gp_updates} have analytical expressions for, e.g., the commonly used radial basis function (RBF) kernel, and otherwise can be approximated via Gaussian quadrature for specialized kernels. Moreover, observe that since the dependence on $T$ only appears in sums, this update can be computed in parallel across time steps.

Finally, the learning goal in the GP-SDE is to optimize the kernel hyperparameters $\mbtheta$. Prior work using GP-SDEs with structured GP kernels \cite{hu2024modeling} found that alternating between updating $q(\mbu)$ and optimizing $\mbtheta$ was prone to getting stuck in local maxima of the ELBO. Hence, we adopt the kernel hyperpararmeter learning approach proposed by \citet{hu2024modeling}, which uses the update
\begin{align*}
    \mbtheta^\ast = \argmax_{\mbtheta} \left\{\max_{q(\mbu)} \cL_{\text{approx-GP}}(q(\mbx_{0:T}), q(\mbu), \mbtheta) \right\}.
\end{align*}
The inner maximization can be performed via the closed-form updates as discussed above. We perform the outer maximization, which depends on both $q(\mbu)$ and $\cL_{\text{approx-GP}}(q(\mbx_{0:T}), q(\mbu), \mbtheta)$, via Adam optimization. In practice, we find that this kernel hyperparameter learning method performs well on both synthetic and real datasets.

\subsection{Posterior inference of the drift at unseen points}\label{app:sec:posterior_f_unseen}
The sparse approximation $q(\mbu)$ allows us to compute the posterior distribution $\mbf^\ast := \mbf(\mbx^\ast)$ at any unseen latent space location $\mbx^\ast \in \bbR^D$. Recall that
\begin{align*}
    q(\mbf^\ast) = \prod_{d=1}^D p(f_d^\ast \mid \mbu_d, \mbTheta) q(\mbu_d) d\mbu_d.
\end{align*}
Since $q(\mbu_d)$ is Gaussian and $p(f_d^\ast \mid \mbu_d, \mbTheta)$ is conditionally Gaussian, then $q(\mbf^\ast)$ is also a closed-form Gaussian distribution. This takes the form,
\begin{align*}
    q(f_d^\ast) = \cN(f_d^\ast \mid \mbk_{x^\ast z} \mbK_{zz}^{-1} \mbm_u^d, \kappa_{\mbtheta}(\mbx^\ast, \mbx^\ast) - \mbk_{x^\ast z} \mbK_{zz}^{-1} \mbk_{zx^\ast} + \mbk_{x^\ast z} \mbK_{zz}^{-1} \mbS_u^d \mbK_{zz}^{-1} \mbk_{zx^\ast}).
\end{align*}

\subsection{Incorporating inputs}\label{app:sec:inputs_model}
For our application of SING-GP to real neural data in \Cref{subsec:real_data_results}, we incorporate external inputs that may affect the underlying latent trajectories. Following \citet{hu2024modeling}, we model this as
\begin{align*}
    d\mbx(t) = (\mbf(\mbx(t)) + \mbB \mbv(t)) dt + \mbSigma^{\frac{1}{2}}d \mbw(t)
\end{align*}
where $\mbv(t) \in \bbR^I$ is a time-varying, known input signal and $\mbB \in \bbR^{D \times I}$ is a linear transformation of the input into the latent space. We denote $\mbv_i := \mbv(\tau_i)$ where $\tau_i$ is a time point in the grid $\mbtau$. We derive the optimal update $\mbB^\ast$ that maximizes $\cL_{\text{approx-GP}}$ as 
\begin{align*}
    \mbB^\ast = \left(\sum_{i=0}^{T-1}\left( \mbm_{i+1} - \mbm_i - \Delta t \bbE_{q(\mbf), q(\mbx_i)}[\mbf(\mbx_i)]\right) \mbv_i^\top \right) \left(\Delta t \sum_{i=0}^{T-1} \mbv_i \mbv_i^\top \right)^{-1}.
\end{align*}

\section{Experiment details and additional results}\label{app:sec:experiments}
For all experiments, we fit our models using either an NVIDIA A100 GPU or NVIDIA H100 GPU.

\subsection{Inference on synthetic data}\label{app:sec:experiments_inference}
Here, we provide additional experimental details to supplement \Cref{subsec:exp-inference}. For this section only, we will abuse notation and use $\mbx_i(t)$ to denote individual coordinates of $\mbx(t) \in \bbR^D$, for $0 \leq t \leq T_{\text{max}}$. This not to be confused with time steps of the grid $\mbtau$, as introduced in \Cref{subsec:inference}.

\subsubsection{Linear dynamics, Gaussian observations}\label{app:sec:experiments_gaussian}
\paragraph{Details on simulated data} We generate $30$ independent trials from an SDE with linear dynamics
\begin{align*}
    \mbf(\mbx(t)) = \mbA \mbx(t), \quad \mbA = \frac{1}{\Delta t} \left(0.997\begin{bmatrix}\cos \theta & - \sin \theta \\ \sin \theta & \cos \theta \end{bmatrix} - \mbI\right)
\end{align*}
with $\theta = \pi / 250$. To simulate these trials, we use an Euler--Maruyama discretization with $\Delta t = 0.001$ and $T_{\text{max}} = 1$. Initial conditions are sampled uniformly in $[-2,2]^2$. We then simulate observations $\mby_i \sim \cN(\mbC \mbx_i + \mbd, \mbR)$ at every time point $\tau_i$, where the entries of $\mbC, \mbd$ are sampled as \textit{i.i.d.}~standard normal variables and we set $\mbR = 0.35 \mbI$. 

\paragraph{Details on model fitting} For our main result in \Cref{fig:inference_synthetic} (top row), we fit SING with 20 iterations and step size $\rho = 1$, with hyperparameters $\mbTheta$ fixed to their true generative values. As expected, convergence in this fully conjugate setting occurs after only 1 iteration. We use analogous settings for VDP: 20 variational EM iterations, each consisting of a single forward/backward pass to update variational parameters. For our Kalman smoothing baseline, we use the Dynamax package \cite{linderman2025dynamax}. For Adam-based optimization, we sweep over initial learning rates $\{10^{-4}, 5 \cdot 10^{-4}, 10^{-3}, 5 \cdot 10^{-3}, 10^{-2}\}$, fit for 20 iterations, and choose the best fit according to the final ELBO value, which in this case corresponded to learning rate $5 \cdot 10^{-3}$.

\paragraph{Computing latents RMSE} The latents RMSE metric from \Cref{fig:inference_synthetic}D and H are computed as follows:
\begin{align*}
    \text{latents RMSE} = \left(\frac{1}{T+1}\sum_{i=0}^T \bbE_{q(\mbx_i)} [\| \mbx_i - \mbx_i^{\text{true}}\|_2^2]\right)^{1/2}.
\end{align*}
Since $q(\mbx_i) = \cN(\mbx_i \mid \mbm_i, \mbS_i)$, this can be computed analytically using the identity
\begin{align*}
    \bbE_{q(\mbx_i)} [\| \mbx_i - \mbx_i^{\text{true}}\|_2^2] = \text{tr}(\mbS_i) + \|\mbm_i - \mbx_i^{\text{true}}\|_2^2.
\end{align*}

\subsubsection{Place cell model}\label{app:sec:place_cell}
\paragraph{Details on simulated data} We generate 30 independent trials from an SDE whose drift is characterized by a Van der Pol oscillator,
\begin{align*}
    f_1(\mbx(t)) &= \tau \mu \left(\mbx_1(t) - \frac{1}{3}\mbx_1(t)^3 - \mbx_2(t)\right) \\
    f_2(\mbx(t)) &= \tau \frac{\mbx_1(t)}{\mu}
\end{align*}
with $\tau = 10$ and $\mu = 2$. To simulate these trials, we use an Euler--Maruyama discretization with $\Delta t = 0.001$ and $T_{\text{max}} = 2$. Initial conditions are sampled uniformly in $[-3,3]^2$.

For our observations, we use a Poisson count model inspired by spiking activity in place cells. To model the tuning curves (i.e., expected firing rates) for this model, we use radial basis functions. For $N = 8$ neurons, these take the form
\begin{align*}
    r_n(\mbx) = a\exp\left(-\frac{1}{2\ell^2} \|\mbx - \mbc_n\|_2^2\right) + a_0, \quad n = 1, \dots, N
\end{align*}
where $\{\mbc_n\}_{n=1}^N \subset \bbR^D$ are centers placed along the latent trajectories, $\ell$ controls the width of the curve, and $a, a_0$ control the peak and background firing rates respectively. For our experiments, we set $\ell = 0.5, a = 2.5$, and $a_0 = 0.25$. We then generate Poisson spike counts for each time bin $\tau_i$ according to
\begin{align*}
    y_{n}(\tau_i) \sim \text{Pois}(r_n(\mbx(\tau_i))).
\end{align*}

In this likelihood model, the expected log-likelihood term $\bbE_{q}[\log p(\mby | \mbx)]$ is not available in closed form, so we approximate it using Gauss-Hermite quadrature.

\paragraph{Details on model fitting} For our main result in \Cref{fig:inference_synthetic} (bottom row), we fit SING for 500 iterations. We use the following step size schedule inspired by \cite{salimbeni2018natural}: $\rho$ is log-linearly increased from $10^{-3}$ to $10^{-1.5}$ for the first 10 iterations, and then kept at $10^{-1.5}$ for the rest of the iterations. We fit VDP for 500 iterations and use the ``soft" updates in \cref{app:eqn:vdp_soft_updates} to ensure numerical stability. After manually tuning the learning rate, we find that VDP performs best with $\omega = 0.05$. For the conditional moments Gaussian smoother (CMGS), we use an implementation from Dynamax \cite{linderman2025dynamax}. For Adam-based optimization, we sweep over initial learning rates $\{10^{-4}, 5 \cdot 10^{-4}, 10^{-3}, 5 \cdot 10^{-3}, 10^{-2}\}$, fit for 500 iterations, and choose the best fit according to the final ELBO value, which in this case corresponded to learning rate $10^{-2}$.

\subsubsection{Embedded Lorenz attractor}\label{app:sec:embedded_lorenz}
\paragraph{Details on simulated data} We generate 30 independent trials from a $D$-dimensional latent SDE whose first 3 dimensions evolve according to a Lorenz attractor drift, and the remaining dimensions are a random walk. These drifts are characterized by
\begin{equation}
    \begin{aligned}
     &f_1(\mbx(t)) = \alpha(\mbx_2(t) - \mbx_1(t))\\  
     &f_2(\mbx(t)) = \mbx_1(t)(\rho - \mbx_3(t)) - \mbx_2(t)\\
     &f_3(\mbx(t)) = \mbx_1(t)\mbx_2(t) - \beta \mbx_3(t)\\
     &f_d(\mbx(t)) = 0, \quad d > 3.
     \end{aligned}
\end{equation}
where $(\alpha, \rho, \beta) = (10, 28, 8/3)$. To simulate these trials, we use an Euler--Maruyama discretization with $\Delta t = 0.001$ and $T_{\text{max}} = 1$. Initial conditions are sampled from independent standard normal distributions. We then simulate 100-dimensional observations $\mby_i \sim \cN(\mbC \mbx_i + \mbd, \mbR)$, where the entries of $\mbC$ and $\mbd$ are sampled \textit{i.i.d.} from $\cN(0, 0.01)$ and we set $\mbR = 0.05 \mbI$. 

\paragraph{Details on model fitting} For our results in \Cref{fig:embedded_lorenz}, we fit SING for $D = 3, 5, 10, 20, 50$ using two different methods of approximating transition expectations: (1) Monte Carlo and (2) Gauss-Hermite quadrature. For each method, we fit SING for 1000 iterations, keeping output parameters and prior drift parameters fixed to their true values. The step size $\rho$ is log-linearly increased from $10^{-3}$ to $10^{-2}$ for the first 10 iterations, and then kept at $10^{-2}$ for the rest of the iterations. For the Monte Carlo approach, we use a single Monte Carlo sample per expectation. For the quadrature approach, we use 6 quadrature nodes per latent dimension for $D = 3$ and $5$; beyond $D = 5$, quadrature results in out-of-memory errors on a NVIDIA A100 GPU. 

\subsection{Comparison to neural-SDE variational posterior}\label{app:sec:n_sde_comp}

\paragraph{Background on SDE Matching}
SDE Matching, proposed by \citet{bartosh2025sde}, is an algorithm for performing joint (approximate) inference and learning in the latent SDE model. The authors approximate the intractable posterior distribution with a neural stochastic differential equation
\begin{align*}
\tilde{q}(\mbx): d\mbx(t) = \tilde{\mbf}(\mbx(t) |\phi)dt + \mbSigma^{1/2}d\mbw(t),\quad  \mbx(0) \sim \cN(\mbm_{\phi}, \mbS_{\phi}), \quad 0 \leq t \leq T_{\text{max}},
\end{align*}
where $\phi$ are the parameters of the variational posterior and $\tilde{\mbf}(\cdot |\phi)$ is a neural network drift function. Unlike SING, SDE Matching also applies to models with state-dependent diffusion coefficient. Compared to prior works that perform VI with neural-SDE variational families, SDE Matching is advantageous since it does not require drawing samples from the variational posterior. 

Also dissimilar from SING, SDE Matching is an amortized VI algorithm, meaning that the variational approximation is ``shared'' across samples $(\mbx^{(\ell)}(t))_{0 \leq t \leq T_{\text{max}}}$. In particular, the posterior drift $\tilde{\mbf}(\mbx^{(\ell)}(t)|\phi)$ is modeled also as a function of the observations $\{\mby^{(\ell)}(t_i)\}_{i=1}^n$.

\paragraph{Details on simulated data}
We compare SING to SDE Matching \citep{bartosh2025sde} on the three-dimensional stochastic Lorenz attractor benchmark dataset as described in \citet{li2020scalable}. In particular, we generate latents according to the three-dimensional SDE on $0 \leq t \leq T_{\text{max}}$, $T_{\text{max}} = 1$ with drift function
\begin{equation}\label{app:eq:lorenz-sde}
    \begin{aligned}
     &f_1(\mbx(t)) = \alpha(\mbx_2(t) - \mbx_1(t))\\  
     &f_2(\mbx(t)) = \mbx_1(t)(\rho - \mbx_3(t)) - \mbx_2(t)\\
     &f_3(\mbx(t)) = \mbx_1(t)\mbx_2(t) - \beta \mbx_3(t)
     \end{aligned}
\end{equation}
for $(\alpha, \rho, \beta) = (10, 28, 8/3)$, diffusion coefficient $\sigma = 0.15$, and initial condition $\cN(\mbx(0) | \mb{0}, \mbI)$. We sample the SDE according to the Euler--Maruyama discretization with $\Delta t = 0.00025$. We observe $\mby(t_i)$ according to $\cN(\mby(t_i) | \mbC\mbx(t_i) + \mbd, (0.01)^2 \mbI)$ on an evenly spaced grid of size $0.025$, where $\mbC$ and $\mbd$ are also chosen as in \citet{li2020scalable}. We sample $1024$ total trials.

\paragraph{Details on learning}
Following \citet{bartosh2025sde}, we model the prior $p(\mbx)$ as a neural-SDE whose neural network drift has a single softplus activation. However, unlike \citet{li2020scalable, bartosh2025sde} who consider a four-dimensional latent space, we consider a three-dimensional latent space so that we can directly assess recovery of the ground truth Lorenz attractor dynamics.

For SDE Matching, we use the authors' publicly available codebase, which includes code for evaluation on the same benchmark; we do not further tune any hyperparameters. For SING, we perform $10^3$ iterations ($10$ E-steps, $10$ M-steps each), and for SDE Matching we perform $10^4$ iterations; these choices ensure that the number of gradient updates is the same for both algorithms. 

To perform the E-step of the vEM algorithm with SING, we define the time grid $\mbtau$ using evenly spaced points with $\Delta t = 0.0025$. We increase the NGVI step size $\rho$ on a log-linear scale for $100$ iterations from $10^{-5}$ to $10^{-3}$ and hold it constant for subsequent iterations. To perform the M-step, we update both the output parameters and the drift parameters using Adam. We hold the learning rate for the drift parameters constant at $10^{-3}$. And for learning the output parameters, we decay the learning rate from $10^{-3}$ to $10^{-7}$ for $500$ iterations and then hold it constant. To compute expectations of the prior drift under the variational posterior, we use a single Monte Carlo sample.

\paragraph{Evaluation metrics}
\begin{figure}[H]
\centering
\setlength{\tabcolsep}{7pt}
\begin{minipage}{0.8\linewidth}
\centering
\begin{tabular}{@{}c cc cc@{}}
\toprule
\multirow{2}{*}{\textbf{\# Samples}} 
 & \multicolumn{2}{c}{\textbf{Latents RMSE}} 
 & \multicolumn{2}{c}{\textbf{Dynamics RMSE}} \\
\cmidrule(lr){2-3}\cmidrule(lr){4-5}
 & \textbf{SING} & \textbf{SDE Matching}
 & \textbf{SING} & \textbf{SDE Matching} \\
\midrule
10   & .50 $\pm$ .05 &  .99 $\pm$ .07 & .25 $\pm$ .04 & .49 $\pm$ .05 \\
30   & .52 $\pm$ .05 &  .78 $\pm$ .03 & .45 $\pm$ .38 & .25 $\pm$ .04 \\
50   & .49 $\pm$ .05 &  .70 $\pm$ .02 & .31 $\pm$ .06 & .23 $\pm$ .03 \\
100  & .50 $\pm$ .04 &  .67 $\pm$ .03 & .29 $\pm$ .14 & .19 $\pm$ .01 \\
300  & .46 $\pm$ .05 &  .60 $\pm$ .01 & .25 $\pm$ .05 & .17 $\pm$ .01 \\
500  & .48 $\pm$ .05 &  .59 $\pm$ .01 & .25 $\pm$ .06 & .17 $\pm$ .01 \\
1024 & .58 $\pm$ .08 &  .57 $\pm$ .01 & .56 $\pm$ .44 & .17 $\pm$ .01 \\
\bottomrule
\end{tabular}
\end{minipage}
\caption{\textbf{Comparison of SING and SDE Matching.} A comparison of latents and dynamics RMSE on the stochastic Lorenz attractor dataset for SING and SDE Matching. Errors represent $2$SE computed over $10$ random initializations of the prior drift and output parameters and random seeds for the algorithms.}
\label{fig:sing-matching-compare}
\end{figure}

We compare the performance of SING and SDE Matching using two metrics: the latents and normalized dynamics root mean-squared error. We compute the latents RMSE exactly as in \Cref{app:sec:experiments_gaussian}. And for the normalized dynamics RMSE, we $1024$ draw independent samples $(\mbx^{(j)}(t))_{0 \leq t \leq T_{\text{max}}}$ from the ground-truth stochastic Lorenz attractor and estimate 
\begin{align*}
    \left(\int_0^{T_{\text{max}}} \bbE \left[ \frac{\|\mbf(\mbx(t)) - \mbf(\mbx(t) | \mbTheta) \|^2}{\|\mbf(\mbx(t))\|^2} \right] \right)^{1/2} \approx \left( \frac{1}{\text{trials} \cdot \bar{T}} \sum_{i=1}^{\bar{T}} \frac{\|\mbf(\mbx(\bar{\tau}_i)) - \mbf(\mbx(\bar{\tau}_i) | \mbTheta) \|^2}{\|\mbf(\mbx(\bar{\tau}_i))\|^2}\right)^{1/2}.
\end{align*}
where $\bar{\mbtau} = \{\bar{\tau}_i\}_{i=1}^{\bar{T}}$ is the Euler--Maruyama sampling grid. Note that the samples on which the normalized dynamics RMSE is computed are different from those used to perform joint inference and learning (with either SING or SDE Matching).

\paragraph{Results}
Our experimental results, summarized in  \Cref{fig:sing-matching-compare}, provide compelling evidence that the structured Gaussian process variational approximation, combined with the fast and numerically stable SING inference algorithm, leads to more accurate recovery of the ground truth latents compared to SDE Matching. In particular, SDE Matching requires approximately $500$ samples to match the latents RMSE that SING achieves with only $10$ trials. This is likely because at each iteration SDE Matching updates the posterior using only a single time $t$ in $[0, T_{\text{max}}]$, whereas SING updates the variational posterior on the entire grid $\mbtau$. 

However, SDE Matching outperforms SING in recovering the ground truth Lorenz attractor dynamics. In our experiments, we observe that while performing vEM with SING can achieve dynamics RMSE comparable to that of SDE Matching, the dynamics RMSE varies greatly depending on the random initialization of the prior neural network and output parameters. Improving learning with SING remains a future research direction.

\subsection{Drift estimation on synthetic data }\label{app:sec:learning_drift}

\paragraph{Details on simulated data}
The Duffing equation is a second-order ODE described by
\begin{align*}
    &\frac{d^2 v}{dt^2} + \gamma \frac{d v}{dt} - \alpha v + \beta v^3 = 0, \quad 0 \leq t \leq T_{\text{max}}.
\end{align*}
By setting $\mbx_1 = v, \ \mbx_2 = \frac{dv}{dt}$, we can convert the Duffing equation into a system of first-order ODEs. Moreover, we add an independent Brownian motion with noise $\sigma = 0.2$ to each coordinate. Altogether, this yields the two-dimensional SDE with drift
\begin{equation}\label{app:eq:duffing-sde}
    \begin{aligned}
     &f_1(\mbx(t)) = \mbx_2(t)\\  &f_2(\mbx(t)) = \alpha \mbx_1(t) - \beta \mbx_1(t)^3 - \gamma \mbx_2(t). 
     \end{aligned}
\end{equation}
It is straightforward to verify that the Duffing equation has three fixed points when $\alpha / \beta > 0$: one at the origin and two at $(\pm \sqrt{\alpha / \beta}, 0)$. 
We generate four trajectories from the Duffing SDE \cref{app:eq:duffing-sde} with parameters $(\alpha, \beta, \gamma) = (2, 1, 0.1)$ and initial condition $\mbx(0) = (\sqrt{\alpha / \beta} - 0.1, 0.1)$, i.e., near the rightmost fixed point. We sample the solution to the SDE using the Euler--Maruyama discretization with $T_{\text{max}} = 15$ and $\Delta t = 0.015$. All $10^3$ time steps are taken to define the grid $\mbtau$. A visualization of two of the sampled trajectories is provided in \Cref{fig:learning-duffing}A.

From the $10^3$ total time steps, we sample $3 \times 10^2$ at random at which to include Gaussian observations $\cN(\mby(t_i)|\mbC \mbx(t_i) + \mbd, \mbR)$. The true emissions parameters $\mbC \in \bbR^{N \times D}$ and $\mbd \in \bbR^D$ are chosen by sampling \textit{i.i.d.}~standard normal random variables. $\mbR$ is fixed to a diagonal matrix with entries $0.1$.

\paragraph{Details on learning}
We simultaneously perform inference and learning in the latent SDE model according to the variational EM (vEM) algorithm. For each variational EM step, we take 10 E-steps (according to either VDP or SING) as well as 50 M-steps. In each M-step, we update both the drift parameters $\mbtheta$ as well as the output parameters $\{\mbC, \mbd \}$. We update the drift parameters by performing Adam on the approximate ELBO $\cL_{\text{approx}}$, and we use a closed-form update for the output parameters (see next section). In the case of the GP prior drift, the only learnable parameters are the length-scale and output scale of the RBF kernel. The updates to the posterior over inducing points $p(\mbu)$ are performed during the E-step.

\begin{wrapfigure}[15]{r}{0.45\textwidth}
  \centering
  \includegraphics[width=0.43\textwidth]{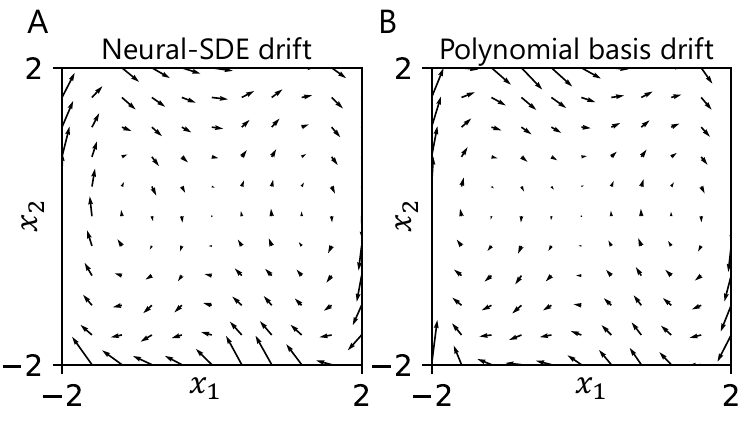}
  \caption{Prior drift functions learned via vEM with SING and Adam. \textbf{A}: Learned neural network drift. \textbf{B}: Learned polynomial basis drift.}
  \label{app:fig:nsde-pb-drift}
\end{wrapfigure}
For the GP prior drift we use a $12 \times 12$ grid of inducing points on $[-6, 6]^2$. To choose the size of this grid, we looked at the first two principal component scores of the observations $\{\mby(t_i)\}_{i=1}^n$. For the neural-SDE, we use a neural network with two hidden layers of size 64 and ReLU activations. And for the polynomial basis drift, we consider all polynomials up to order three $\{1, \mbx_1, \mbx_2, \mbx_1 \mbx_2, \mbx_1^2, \mbx_2^2, \ldots\}$. Note that by the form of the Duffing equation \cref{app:eq:duffing-sde}, this polynomial drift function can, in theory, perfectly represent the true drift. 

We initialize the output parameters $\mbC$ and $\mbd$ by taking the first two principal components of the observations and the mean of the observations, respectively. The GP length scales and output scales are initialized to one; the neural network weight matrices are initialized by \textit{i.i.d.}~draws from a truncated normal distribution and the biases are initialized to zero; the coefficients on the polynomial basis are initialized by \textit{i.i.d.}~draws from $\cN(0, (0.1)^2)$.

When using SING during the E-step, we increase $\rho$ on a log-linear scale from $10^{-3}$ to $10^{-1}$ during the first ten iterations, and we then fix it at $10^{-1}$ for the remaining $40$ vEM iterations. We found this procedure of starting with $\rho$ small and subsequently increasing it on a log-linear scale during the first $k$ iterations to result in effective inference and learning \cite{salimbeni2018natural}. As for VDP, we choose the soft update parameter $\omega$ to be as large as possible while maintaining the stability of the algorithm. For the GP prior and neural network drift classes, this selection procedure yielded $\omega = 0.05$. For the polynomial basis drift, we were not able to obtain convergence with any reasonable choice of $\omega$ (see \Cref{app:sec:vdp} for a discussion of the parameter $\omega$). For this reason, it is omitted from \Cref{fig:learning-duffing}.

In \Cref{app:fig:nsde-pb-drift}, we visualize the learned dynamics corresponding to the neural network and polynomial basis prior drift functions. The posterior mean and variance of the GP prior drift are visualized in \Cref{fig:learning-duffing}B. When performing inference with SING, both parameteric drift functions, in addition to the GP posterior mean, match the drift \cref{app:eq:duffing-sde} of the Duffing equation (\Cref{fig:learning-duffing}D). We measure the discrepancy between the learned and true drift by computing the dynamics root mean-squared error
\begin{align}\label{app:eq:dynamics-rmse}
    \text{dynamics RMSE} = \left( \sum_{\mbv \in \mbV} w(\mbv) \| \mbf(\mbv) - \mbf(\mbv|\mbTheta) \|_2^2\right)^{1/2}.
\end{align}
Here, we take a fine grid of cells, each with equal $D$-dimensional volume, covering the latent space; we denote the collection of their centers by $\mbV$. Then, we compute the proportion of true latent trajectories that fall into the cell with center $\mbv$, written $w(\mbv)$, such that $\sum_{\mbv \in \mbV} w(\mbv) = 1$. Mathematically, \cref{app:eq:dynamics-rmse} approximates
\begin{align*}
   \left(\frac{1}{\text{trials} \cdot T_{\text{max}}}\sum_{j=1}^{{\text{trials}}} \int_0^{T_{\text{max}}} \| \mbf(\mbx^{(j)}(t)) - \mbf(\mbx^{(j)}(t)|\mbTheta) \|_2^2 dt \right)^{1/2}, 
\end{align*}
where $\text{trials}$ represents the number of trials and $(\mbx^{(j)}(t))_{0 \leq t \leq T_{\text{max}}}$ is the true latent trajectory corresponding to the $j$th trial.

\paragraph{Closed-form updates for output parameters}
Next, we derive closed-form updates for the output parameters $\mbC \in \bbR^{N \times D}, \mbd \in \bbR^{N}, \mbR \in \bbR^{N \times N}$ in the Gaussian observation model
\begin{align*}
    p(\mby|\mbx, \mbTheta) = \prod_{i=1}^n \cN(\mby(t_i) | \mbC\mbx(t_i) + \mbd, \mbR),
\end{align*}
where $\mbR$ is assumed to be diagonal. These same update equations appear in \citet{hu2024modeling}. Although not directly mentioned in the main text, we can also learn the entries of the emissions covariance matrix $\mbR$.

Writing out the expected log-likelihood of $p(\mby|\mbx)$ under $q(\mbx)$, we have
\small
\begin{align*}
    \bbE_q [\log p(\mby|\mbx, \mbTheta)] &= -\frac{n}{2} \log |\mbR| + \sum_{i=1}^n \bbE_q\left[ -\frac{1}{2}\|\mby(t_i) - \mbC\mbx(t_i) - \mbd\|_{\mbR^{-1}}^2\right]  + \text{const}\\
    &= -\frac{n}{2} \log |\mbR| - \frac{n}{2}\mbd^\top \mbR^{-1} \mbd  + \sum_{i=1}^n \bigg(\mby(t_i)^\top \mbR^{-1} (\mbC \mbm(t_i)  + \mbd) -\frac{1}{2}\mby(t_i)^\top \mbR^{-1}\mby(t_i) \\ &- \frac{1}{2} \text{tr}( \mbC^\top \mbR^{-1} \mbC \left\{\mbS(t_i) + \mbm(t_i)  \mbm(t_i)^\top \right\})  - \mbd^\top \mbR^{-1} \mbC \mbm(t_i) \bigg) + \text{const},
\end{align*}
\normalsize
where $\text{const}$ represents a generic constant with respect to $(\mbC, \mbd, \mbR)$.
Differentiating with respect to $\mbC$, $\mbd$, and $\mbR^{-1}$ yields the update equations  
{\small
\begin{align}
    &\mbC^\ast \left( \sum_{i=1}^n (\mbS(t_i) + \mbm(t_i) \mbm(t_i)^\top) \right) = \sum_{i=1}^n (\mby(t_i) - \mbd^\ast) \mbm(t_i)^\top\label{app:eqn:C-update}\\
    &\mbd^\ast = \frac{1}{n} \sum_{i=1}^n (\mby(t_i) - \mbC^\ast \mbm(t_i)\label{app:eqn:d-update})\\
    &\mbR_{j, j}^\ast = \frac{1}{n} \sum_{i=1}^n \left(\mby_j(t_i)^2 - 2 \mby_j(t_i) (\mbC_j^\ast)^\top \mbm(t_i) + ((\mbC_j^\ast)^\top \mbm(t_i))^2  +  (\mbC_j^\ast)^\top\mbS(t_i) (\mbC_j^\ast) \right) - (\mbd_j^\ast)^2, \ 1 \leq j \leq D\nonumber
\end{align}
}
where $\mbR_{j, j}$ is the $j$th diagonal entry of $\mbR$ and $\mbC_j$ is the $j$th row of $\mbC$.

By defining $\bar{\mby} = n^{-1} \sum_{i=1}^n \mby(t_i)$ as well as $\bar{\mbm} = n^{-1} \sum_{i=1}^n \mbm(t_i)$ and plugging \cref{app:eqn:d-update} into \cref{app:eqn:C-update}, we obtain
\small
\begin{align*}
    &\mbC^\ast \left( \sum_{i=1}^n (\mbS(t_i) + (\mbm(t_i) - \bar{\mbm})(\mbm(t_i) - \bar{\mbm})^\top) + n \bar{\mbm} \bar{\mbm}^\top \right) \\
    & \qquad \qquad = \sum_{i=1}^n (\mby(t_i) - \bar{\mby}) (\mbm(t_i) - \bar{\mbm})^\top + n \mbC^\ast \bar{\mbm} \bar{\mbm}^\top\\
    \implies &\mbC^\ast = \left( \sum_{i=1}^n (\mby(t_i) - \bar{\mby}) (\mbm(t_i) - \bar{\mbm})^\top \right)\left( \sum_{i=1}^n ((\mbm(t_i) - \bar{\mbm}) (\mbm(t_i) - \bar{\mbm})^\top + \mbS(t_i)) \right)^{-1}.
\end{align*}
\normalsize
From the equation for $\mbC^\ast$, it is straightforward to solve for $\mbd^\ast$ and $\mbR_{j,j}^\ast, \ 1 \leq j \leq D$.

\paragraph{Discretization experiments} 
In \Cref{fig:learning-duffing}E, we investigate the effect of the discretization size $\Delta t$ on the performance of the vEM algorithm with SING and VDP. Recall that our true latent trajectories are sampled with Euler--Maruyama step size $0.015$, and so we would not expect $\cL_{\text{approx}}$ to better approximate $\cL_{\text{cont}}$ for smaller discretizations $\mbtau$. In the case of VDP, though, we demonstrate that taking $\Delta t$ smaller than $0.015$ leads to improvements in the stability of the algorithm. Moreover, for large $\Delta t$, performing vEM with VDP is highly unstable and may diverge depending on the neural network initialization. For this reason, we exclude the dynamics RMSE corresponding to $\Delta t > 0.15$.

In order to coarsen i.e.,~downsample the original grid $\mbtau$, we fill in grid points where observations were not observed with zeros. Then, we consider each $j$th observation in $\mbtau$. If that observation is zero (i.e.,~there is not an observation at that grid point), we take the nearest observation, should there exist an observation in the original grid that lies between consecutive points in the coarsened grid. This procedure increases $\Delta t$ to $j \cdot \Delta t$. In order to make the original grid $\mbtau$ finer, i.e., upsample, we add  $j$ zeros between the existing observations, again treating missing observations as zeros. This procedure decreases $\Delta t$ to $\Delta t / (j+1)$.  Note that coarsening the grid may lead to fewer observations, but making it finer will neither increase nor decrease the number of observations. 

\subsection{Runtime comparisons}\label{app:sec:runtime_exp}

\begin{figure}[t!]
\centering
\includegraphics[width=0.95\textwidth]{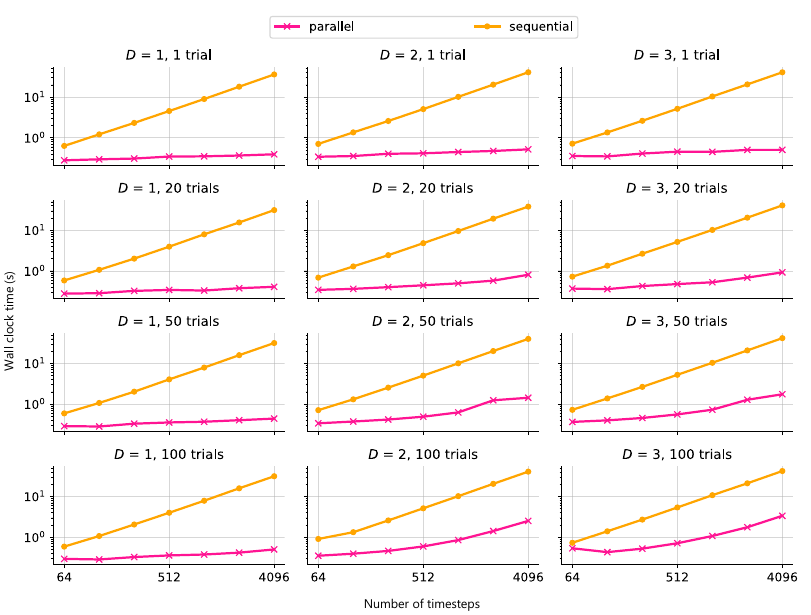}
\caption[]
{Runtime comparisons between parallel SING (using associative scans) and its sequential counterpart on an NVIDIA A100 GPU. Parallel SING scales favorably with sequence length compared to sequential SING on different latent dimensions $D$ and batch sizes up to $100$ trials.}
\label{app:fig:runtime}
\end{figure}

\paragraph{Experiment details} We perform runtime comparisons on simulated datasets consisting of a 2D linear dynamical system (LDS) with Gaussian observations. Each point in \Cref{fig:runtime,app:fig:runtime} represents the average wall clock time (in seconds) over $5$ runs of each method on randomly sampled dynamical systems. All runtime comparisons were performed on an NVIDIA A100 GPU.

We compare two methods: (1) `parallel' SING, which uses associative scans as described in \Cref{sec:parallelizing} to perform natural-to-mean parameter conversion, and (2) `sequential' SING, which uses the standard sequential algorithm to perform this conversion, as we detail in \Cref{app:sec:sequential_details}.

For each run, we randomly generate the data as follows. We construct a linear drift function of the form 
\begin{align*}
    \mbf(\mbx(t)) = \mbA \mbx(t) + \mbb
\end{align*}
where $\mbA \in \bbR^{2 \times 2}$ is constrained to have negative real eigenvalues for stability and $\mbb \in \bbR^2$ has small non-zero entries. Using Euler--Maruyama discretization with $\Delta t = 0.001$, we sample for $T$ time steps from this 2D linear SDE. Then, for each time step $\tau_i$, we sample 10D Gaussian observations,
\begin{align*}
    \mby(\tau_i) \sim \cN(\mbC \mbx(\tau_i) + \mbd, \mbR)
\end{align*}
where $\mbC$ and $\mbd$ are randomly sampled with \textit{i.i.d.}~standard normal entries and $\mbR = 0.35 \mbI$.

For each run, we fit the corresponding method (either `parallel' or `sequential') for $20$ iterations, keeping hyperparameters $\mbTheta$ fixed.

\paragraph{Additional results on different batch sizes} In \Cref{app:fig:runtime}, we provide additional results from runtime experiments on varying batch sizes. Here, we show wall clock time for batch sizes of $1$, $20$, $50$, and $100$ for each method. We find that for a single trial, parallel SING exhibits nearly constant scaling with sequence length, while sequential SING scales linearly in time, as expected.

For larger batch sizes, our implementations of parallel SING and sequential SING both parallelize over the batch dimension. Thus, it is expected that parallel resources on a single GPU may be constrained for models that operate on both large batch sizes and long sequences. In \Cref{app:fig:runtime}, we show that despite these constraints, parallel SING still scales favorably with respect to sequence length compared to its sequential counterpart for batch size up to $100$ trials. We also note that our implementation of SING supports mini-batching on smaller batch sizes in order to maintain this favorable scaling behavior, even on datasets with many (i.e., hundreds or thousands of) sequences. 

\subsection{Application to modeling neural dynamics during aggression}\label{app:sec:real_data_exp}

\paragraph{Dataset specifications} We apply our method to a publicly available dataset from \citet{vinograd2024causal}, which can be found at: \url{https://dandiarchive.org/dandiset/001037}. We use the dataset labeled as `sub-M0L' which consists of neural activity from one trial of a mouse exhibiting aggressive behavior towards two consecutive intruders. This dataset consists of calcium imaging from ventromedial hypothalamus neurons. In total, 56 neurons were imaged over 420.1 seconds at $10$ Hz, resulting in a data matrix with $T =$ 4201 and $N =$ 56. Before fitting models, we $z$-score the data such that each neuron's activity has mean 0 and standard deviation 1.

\paragraph{GP-SDE model details} We fit GP-SDE models with kernel function chosen to be the ``smoothly switching linear'' kernel from \citet{hu2024modeling}. This kernel takes the form
\begin{align*}
    \kappa(\mbx,\mbx') = \sum_{j=1}^J \underbrace{((\mbx - \mbc_j)^\top \mbM (\mbx' - \mbc_j) + \sigma^2_0)}_{\kappa_{\text{lin}}^{(j)}(\mbx, \mbx')} \underbrace{\pi_j(\mbx) \pi_j(\mbx')}_{\kappa_{\text{part}}^{(j)}(\mbx, \mbx')}
\end{align*}
where 
\begin{align*}
    \pi_j(\mbx) = \frac{\exp(\mbw_j^\top \phi(\mbx) / \tau)}{1 + \sum_{j=1}^{J-1} \exp(\mbw_j^\top \phi(\mbx) / \tau)}.
\end{align*}
Intuitively, this kernel encodes prior structure in the dynamics such that it ``switches'' between different linear dynamical systems based on location in latent space, but maintains smooth dynamics between different linear systems. This is enforced by the linear kernel $\kappa_{\text{lin}}^{(j)}(\mbx, \mbx')$ and partition kernel $\kappa_{\text{part}}^{(j)}(\mbx, \mbx')$ for each linear regime $j$. Above, $J$ is the total number of discrete states that the function switches between; we choose $J = 4$ and $D = 2$ to be consistent with prior work \cite{nair2023approximate, vinograd2024causal, hu2024modeling}. We also choose $\phi(\mbx) = \begin{bmatrix} 1 & \mbx_1 & \mbx_2 \end{bmatrix}^\top$. The kernel hyperparameters are $\mbtheta = \{\mbM, \sigma_0^2, \{\mbc_j, \mbw_j\}, \tau\}$, which we optimize via Adam \cite{kingma2014adam} in variational EM. In particular, for learning we use the approach proposed by \citet{hu2024modeling}, which optimizes hyperparameters jointly with the closed-form posterior drift update. Finally, we model observations as 
\begin{align*}
    \mby(t_i) \sim \cN(\mbC \mbx(t_i) + \mbd, \mbR)
\end{align*}
for every observation time $t_i$. Unless otherwise specified, we fit models whose discretization matches the imaging rate; i.e., $\Delta t = 0.1$.

\paragraph{Model fitting details} To fit GP-SDE models using both SING-GP and VDP-GP, we chose inducing points to be a uniformly spaced $8 \times 8$ grid in $[-8, 10] \times [-4, 12]$. These bounds were chosen by examining two-dimensional latent trajectories obtained by principal component analysis (PCA) on the neural data matrix. Similarly, we initialize $\mbC$ and $\mbd$ using PCA and keep them fixed during model fitting. We initialize kernel hyperparameters randomly from \textit{i.i.d.}~standard Gaussian distributions.

For both SING-GP and VDP-GP, we perform variational EM for 30 iterations. In each E-step we perform 15 updates (natural gradient steps for SING-GP and forward/backward passes for VDP-GP). In each M-step we perform 50 Adam iterations with initial learning rate 1e-4 to learn kernel hyperparameters. 

For SING-GP, we again use a suggested learning rate schedule from \cite{salimbeni2018natural}. In particular, we set the learning rates for the first 10 iterations to be log-linearly spaced between $10^{-1}$ and $10^0$. We then keep the $10^0$ learning rate for the rest of fitting. For VDP-GP, we use ``soft updates'' (\cref{app:eqn:vdp_soft_updates}) and manually tune $\omega$ to the largest value which still maintains numerical stability; in this case $\omega = 0.005$.

Since the smoothly-switching linear kernel from \citet{hu2024modeling} has several hyperparameters, we repeat our experiments for 5 different random initializations of $\mbTheta$. In \Cref{fig:real_data}D-E, we report average metrics and 2-standard error bars across these 5 initializations. 

\begin{figure}[t!]
\centering
\includegraphics[width=0.85\textwidth]{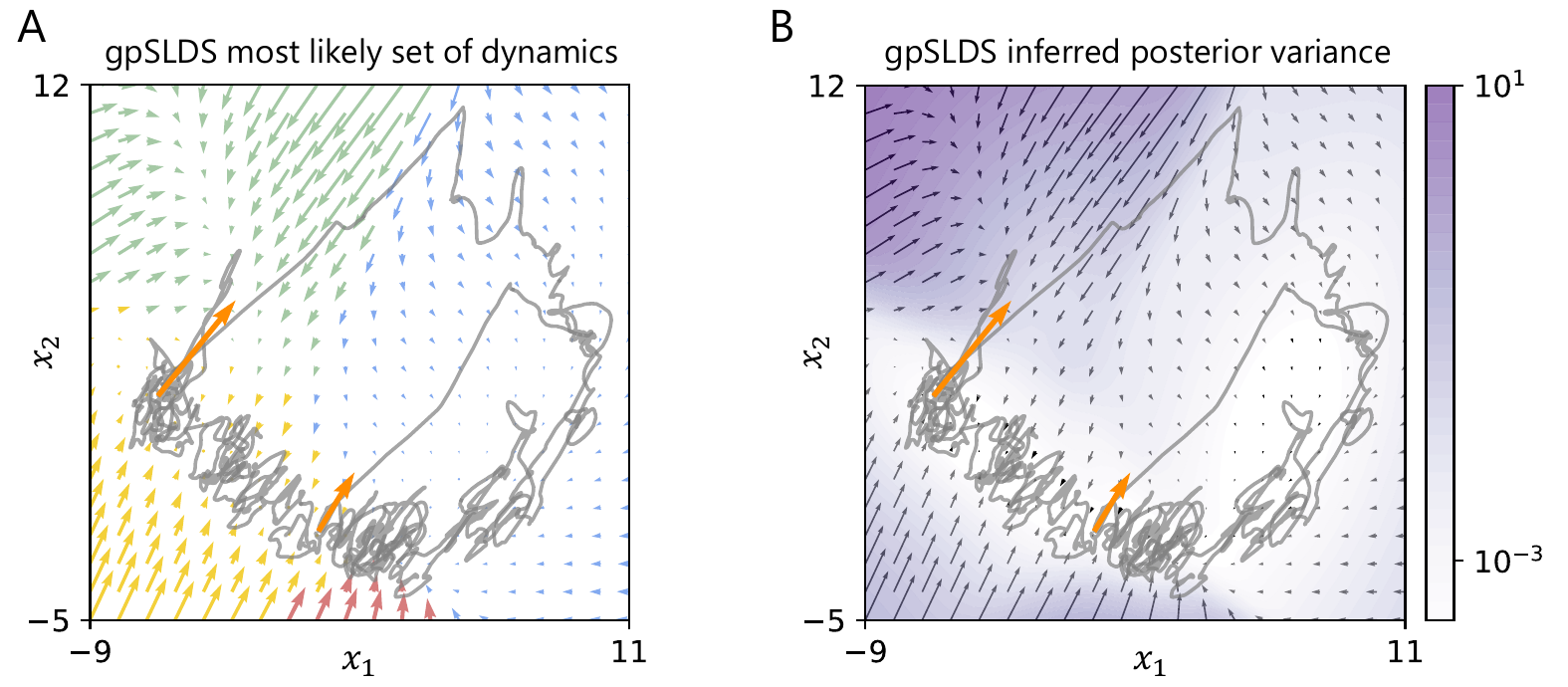}
\caption[]
{Additional results for the same SING-GP model fit as in \Cref{fig:real_data}C. \textbf{A}: Inferred latent trajectory, dynamics, and learned input effect using SING-GP. The dynamics arrows are colored by most likely linear dynamical state, as learned by the smoothly switching linear kernel from \citet{hu2024modeling}. The orange arrow represent the direction of learned inputs (i.e. columns of $\mbB$) capturing the effect of the intruders' entrances in the mouse's environment. As expected, these arrows explain the fast change in latent trajectories during intruder entrances. \textbf{B}: Same inferred dynanics as panel A, but the latent space is colored by inferred posterior variance of $q(\mbf)$ from SING-GP model fit. The model expresses low uncertainty in regions with data and high uncertainty in regions without data.}
\label{app:fig:additional_real_data}
\end{figure}

\paragraph{Computing posterior probability of slow points} In \Cref{fig:real_data}C, we use the probabilistic formulation of the drift in the GP-SDE to identify high-probability regions with slow points. To do this, we compute a ``posterior probability of slow point'' metric for a fine grid of locations in latent space. For a particular $\mbx^\ast \in \bbR^D$, we define this as:
\begin{align*}
    \bbP(\mbx^\ast \text{is a slow point}) = \prod_{d=1}^D \bbP_{q(f_d(\mbx^\ast))}[|f_d(\mbx^\ast)| < \epsilon]
\end{align*}
where $\epsilon > 0$ is chosen to be a small threshold. Intuitively, this probability is large if the inferred drift at $\mbx^\ast$ is close to 0 in the posterior. Recall from \Cref{app:sec:posterior_f_unseen} that $q(f_d(\mbx^\ast))$ is a closed-form Gaussian distribution, so this quantity can be easily computed.

\paragraph{Experiment details for \Cref{fig:real_data}D} In \Cref{fig:real_data}D, we compare SING-GP and VDP-GP on two metrics as a function of discretization $\Delta_t$ ranging from 0.06 to 0.1. We fit both methods using 100 variational EM iterations. 

The first metric for model comparison is final expected log-likelihood; precisely, this is $\bbE_{q}[\log p(\mby \mid \mbx)]$ at iteration 100. We choose this metric to be able to directly compare the goodness-of-fit for each model; in contrast, the ELBOs are not directly comparable due to their slightly different formulations of the term $\KL{q(\mbx)}{p(\mbx | \mbf)}$.

The second metric for model comparison is the number of iterations until convergence. Here, we say that the model has converged at iteration $j$ if $|\cL(q^{(j)}, \mbTheta) - \cL(q^{(j-1)}, \mbTheta)| < $ 200 nats, where $\cL(\cdot)$ denotes each model's respective ELBO function. We note that other choices of convergence threshold yielded similar trends as the plot in \Cref{fig:real_data}D.

\paragraph{Forward simulation experiment} Here, we provide details on the forward simulation experiment in \Cref{fig:real_data}E. To compute the predictive $R^2$ metric in \Cref{fig:real_data}E, we carry out the following steps.
\begin{enumerate}
    \item[(a)] For a given model fit and discretization $\Delta t$, select 20 uniformly spaced time points $t_0$ along the inferred latent trajectory, as well as their corresponding marginal means, $\mbm(t_0) := \bbE_q[\mbx(t_0)]$. 
    \item [(b)] For each $t_0$ and $\mbm(t_0)$, use the posterior mean of inferred dynamics $q(\mbf)$ to simulate forward the latent state for $s$ seconds using discretization $\Delta t$, where $s = 1, 2, \dots, 15$.
    \item [(c)] Forward simulation will yield a predicted latent state, $\hat{\mbx}(t_0 + s)$. We compute the predictive $R^2$ for each choice of $s$ as,
    \begin{align*}
        R^2 = 1 - \frac{\sum_{t_0} \| \hat{\mby}(t_0 + s) - \mby(t_0 + s)) \|_2^2}{ \sum_{t_0} \| \mby(t_0 + s) - \bar{y}\|_2^2},
    \end{align*}
    where $\hat{\mby}(t_0 + s) = \mbC \hat{\mbx}(t_0 + s) + \mbd$, $\bar{y}$ is the mean of all observations in the trial, and the sums are taken over all 20 initial times $t_0$. 
\end{enumerate}

\paragraph{Additional results} In \Cref{app:fig:additional_real_data}, we present additional experimental results to supplement the SING-GP inferred flow field and line attractor from the main text, \Cref{fig:real_data}C. These results provide new angles of analysis of the SING-GP model fit. Recall from above that the smoothly switching linear kernel from \citet{hu2024modeling} encourages smoothly switching linear dynamics to be inferred. 

In \Cref{app:fig:additional_real_data}A, flow field arrows are colored by most likely linear state. That is, each set of colored arrows represents a different linear dynamical system. Encoding this structure into the kernel allows for convenient downstream analysis of these inferred dynamics, in particular using established properties of linear systems. Moreover, the orange arrows in \Cref{app:fig:additional_real_data} represent the columns of the learned input effect matrix $\mbB \in \bbR^{D \times 2}$. We find that these input directions correspond to the direction of the latent trajectory at both timepoints at which an intruder enters. 

In \Cref{app:fig:additional_real_data}B, we plot the inferred posterior variance of the same SING-GP model. We find that model uncertainty is low in the regions traversed by the inferred latent trajectory and high otherwise. In fact, this posterior variance is exactly the quantity that we use to compute posterior probability of slow points in \Cref{fig:real_data}C. This demonstrates how the probabilistic formulation of the GP-SDE, which we can now perform reliable inference and learning in using SING-GP, allows us to better understand and interpret inferred flow fields.

\end{document}